\newcounter{thmcounter}
\newtheorem{theorem}{Theorem}
\newcommand{\textc}[1]{\mathtt{#1}} 
\newcommand{\mpar}{\textrm{par}}
\newcommand{\mgpar}{\textrm{gpar}}
\newcommand{\mnew}{\textrm{new}}
\newcommand{\moc}{\textrm{oc}}
\newcommand{\mprog}{\mathrm{prog}}
\newcommand{\mnext}{\mathrm{next}}
\newcommand{\mprev}{\mathrm{prev}}
\newcommand{\mcol}{\mathrm{col}}
\newcommand{\mcrn}{\mathrm{crn}}
\newcommand{\mstart}{\mathrm{start}}
\newcommand{\mgoal}{\mathrm{goal}}
\newcommand{\mmnr}{\mathrm{mnr}}
\newcommand{\mmjr}{\mathrm{maj}}
\newcommand{\mthird}{\mathrm{thd}}
\newcommand{\moverlap}{b_\mathrm{over}}
\newcommand{\mnumcrns}{i_\mcrn}
\newcommand{\mbest}{b}
\newcommand{\mcost}{c}
\newcommand{\mtrace}{d}
\newcommand{\mnlets}{\mathbb{M}}
\newcommand{\mnlet}{m}
\newcommand{\mpos}{p}
\newcommand{\mlinks}{\mathbb{E}}
\newcommand{\mnode}{n}
\newcommand{\mtnode}{\hat{\mnode}}
\newcommand{\mnodebest}{\mnode_\mbest}
\newcommand{\mnodepar}{\mnode_\mpar}
\newcommand{\mnodegpar}{\mnode_\mgpar}
\newcommand{\mnodes}{\mathbb{N}}
\newcommand{\mlink}{e}
\newcommand{\mlinkpar}{\mlink_\mpar}
\newcommand{\mtlink}{\hat{\mlink}}
\newcommand{\mntype}{\eta}
\newcommand{\mtdir}{\kappa}
\newcommand{\mtdirs}{\mathbb{K}}
\newcommand{\mtdirpar}{\mtdir_\mpar}
\newcommand{\mtdirnode}{\mtdir_\mnode}
\newcommand{\mfalse}{\textc{False}}
\newcommand{\mtrue}{\textc{True}}
\newcommand{\mback}{\textc{Back}}
\newcommand{\mfront}{\textc{Front}}
\newcommand{\mnvy}{\textc{Vy}}
\newcommand{\mnvu}{\textc{Vu}}
\newcommand{\mney}{\textc{Ey}}
\newcommand{\mneu}{\textc{Eu}}
\newcommand{\mnun}{\textc{Un}}
\newcommand{\mntm}{\textc{Tm}}
\newcommand{\mnoc}{\textc{Oc}}
\newcommand{\mray}{\lambda}
\newcommand{\mrtype}{\rho}
\newcommand{\mrvu}{\textc{Ru}}
\newcommand{\mrvy}{\textc{Ry}}
\newcommand{\mrvn}{\textc{Rn}}
\newcommand{\mx}{\mathbf{x}}
\newcommand{\mxbest}{\mx_\mbest}
\newcommand{\mxstart}{\mx_\mathrm{start}}
\newcommand{\mxgoal}{\mx_\mathrm{goal}}
\newcommand{\mxtrace}{\mx_\mtrace}
\newcommand{\mxpar}{\mx_\mpar}
\newcommand{\mv}{\mathbf{v}}
\newcommand{\mvray}{\mv_\mathrm{ray}}
\newcommand{\mvpar}{\mv_\mpar}
\newcommand{\mvgpar}{\mv_\mgpar}
\newcommand{\mvprog}{\mv_\mprog}
\newcommand{\mcprog}{i_\mprog}
\newcommand{\mvnext}{\mv_\mnext}
\newcommand{\mvprev}{\mv_\mprev}
\newcommand{\mside}{\sigma}
\newcommand{\msidepar}{\mside_\mpar}
\newcommand{\msidenode}{\mside_\mnode}
\newcommand{\msidetrace}{\mside_\mtrace}
\newcommand{\mquery}{q}
\newcommand{\mqtype}{\psi}
\newcommand{\mqcast}{\textc{Cast}}
\newcommand{\mqtrace}{\textc{Trace}}
\newcommand{\mvocpar}{\mv_{\moc\mpar}}
\newcommand{\sdot}{\boldsymbol{\cdot}}
\newcommand{\mdot}{\boldsymbol{.}}
\newcommand{\mora}[1]{\protect\overrightarrow{#1}}
\DeclareMathOperator{\fnode}{N}
\DeclareMathOperator{\ftnode}{\hat{N}}
\DeclareMathOperator{\fpos}{P}
\DeclareMathOperator{\fx}{X}
\DeclareMathOperator{\flink}{E_1}
\DeclareMathOperator{\flinks}{E}
\DeclareMathOperator{\fnlets}{M}
\DeclareMathOperator{\fnlet}{M_1}
\DeclareMathOperator{\fsgn}{sgn}
\DeclareMathOperator{\fquery}{Q}
\DeclareMathOperator{\fray}{\Lambda}
\DeclareMathOperator{\fxcol}{\fx_\mcol}
\DeclareMathOperator{\fbest}{B}
\newcommand{\rs}{RayScan}
\newcommand{\rsp}{RayScan+}
\newcommand{\rtwo}{R2}
\newcommand{\rtwop}{R2+}
\algnewcommand{\IfThenElse}[3]{
  \State \algorithmicif\ #1\ \algorithmicthen\ #2\ \algorithmicelse\ #3}
\algnewcommand{\IfThen}[2]{
  \State \algorithmicif\ #1\ \algorithmicthen\ #2}
\algnewcommand{\Break}{\textbf{break}}
\algnewcommand{\Continue}{\textbf{continue}}
\algnewcommand{\An}{\textbf{and\ }}
\algnewcommand{\Or}{\textbf{or\ }}
\algnewcommand{\Not}{\textbf{not\ }}
\algnewcommand\algorithmicmacro{\textbf{macro}}
\algnewcommand\algorithmicforeach{\textbf{for each}}
\renewcommand*\Call[2]{\textproc{#1}(#2)} 
\def\u{0.25cm}
\def\ul{0.5cm}
\def\uss{0.1cm}
\def\um{0.15cm}
\colorlet{swatch_blue} {yellow!10!cyan!80!blue}
\colorlet{swatch_obs} {white!30!lightgray}
\colorlet{swatch_bluegray}{blue!30!gray}
\colorlet{swatch_stree} {white!10!magenta!50!red}
\colorlet{swatch_ttree} {black!60!green}
\tikzset {
    blue pt/.style n args={2}{minimum size=2mm, inner sep=0, outer sep=0, circle, fill=swatch_blue, label={[swatch_blue, #1] #2}},
    blue pt/.default={}{},
    blue circ/.style n args={2}{minimum size=2mm, inner sep=0, outer sep=0, circle, draw=swatch_blue, fill=white, thick, label={[swatch_blue, #1] #2}},
    blue circ/.default={}{},
    black pt/.style n args={2}{minimum size=2mm, inner sep=0, outer sep=0, circle, fill=black, label={[black, #1] #2}},
    black pt/.default={}{},
    cross pt/.style n args={2}{minimum size=2mm, inner sep=0, outer sep=0, cross out, draw, label={[black, #1] #2}},
    cross pt/.default={}{},
    trace/.style n args={0}{line width=1mm, orange, {Triangle Cap[]}-{Triangle Cap[]}},
    trace2/.style n args={0}{line width=0.8mm, swatch_blue, -{Triangle Cap[]}},
    prune/.style n args={0}{cross out, draw, black, thick, minimum size=3mm, inner sep=0mm},
    bgframe/.style n args={0}{background rectangle/.style={draw=gray, dotted}, framed, tight background},
    vy pt/.style n args={3}{minimum size=2mm, inner sep=0, outer sep=0, circle, draw=black!60!#3, fill=#3, label={[#3, #1] #2}},
    vy pt/.default={}{}{swatch_blue},
    vu pt/.style n args={3}{minimum size=2mm, inner sep=0, outer sep=0, circle, fill=white, draw=#3, thick, label={[#3, #1] #2}},
    vu pt/.default={}{}{swatch_blue},
    ey pt/.style n args={3}{minimum size=2.5mm, inner sep=0, outer sep=0, diamond, draw=black!60!#3, fill=#3, label={[#3, #1] #2}},
    ey pt/.default={}{}{swatch_blue},
    eu pt/.style n args={3}{minimum size=2.5mm, inner sep=0, outer sep=0, diamond, draw=#3, fill=white, thick, label={[#3, #1] #2}},
    eu pt/.default={}{}{swatch_blue},
    tm pt/.style n args={3}{minimum size=2mm, forbidden sign, inner sep=0, outer sep=0, draw=#3, fill=white, thick, label={[#3, #1] #2}},
    tm pt/.default={}{}{swatch_ttree},
    trtm pt/.style n args={3}{tm pt={#1}{#2}{#3}},
    trtm pt/.default={}{}{},
    oc pt/.style n args={3}{minimum size=1.8mm, inner sep=0, rectangle, outer sep=0, draw=#3, fill=white, thick, label={[#3, #1] #2}},
    oc pt/.default={}{}{swatch_ttree},
    un cross/.style={path picture={ 
      \draw[#1] (path picture bounding box.south east) -- (path picture bounding box.north west) (path picture bounding box.south west) -- (path picture bounding box.north east);
    }},
    un pt/.style n args={3}{minimum size=2mm, inner sep=0, outer sep=0, circle, un cross={#3}, draw=#3, fill=white, thick, label={[#3, #1] #2}},
    un pt/.default={}{}{swatch_ttree},
    any pt/.style n args={3}{circle, inner sep=0, outer sep=0, minimum size=3mm, draw=#3, dash=on 1pt off 1pt phase 0, label={[#3, #1] #2}},
    any pt/.default={}{}{},
    svy pt/.style n args={2}{vy pt={#1}{#2}{swatch_stree}},
    svy pt/.default={}{},
    svu pt/.style n args={2}{vu pt={#1}{#2}{swatch_stree}},
    svu pt/.default={}{},
    sey pt/.style n args={2}{ey pt={#1}{#2}{swatch_stree}},
    sey pt/.default={}{},
    seu pt/.style n args={2}{eu pt={#1}{#2}{swatch_stree}},
    seu pt/.default={}{},
    tvy pt/.style n args={2}{vy pt={#1}{#2}{swatch_ttree}},
    tvy pt/.default={}{},
    tvu pt/.style n args={2}{vu pt={#1}{#2}{swatch_ttree}},
    tvu pt/.default={}{},
    tey pt/.style n args={2}{ey pt={#1}{#2}{swatch_ttree}},
    tey pt/.default={}{},
    ttm pt/.style n args={2}{tm pt={#1}{#2}{swatch_ttree}},
    ttm pt/.default={}{},
    toc pt/.style n args={2}{oc pt={#1}{#2}{swatch_ttree}},
    toc pt/.default={}{},
    tun pt/.style n args={2}{un pt={#1}{#2}{swatch_ttree}},
    tun pt/.default={}{},
    separate/.style n args={2}{rounded rectangle, draw=black, fill=white, minimum size=3.5mm, inner xsep=3.5mm, inner ysep=0mm, outer sep=1pt, label={[#1] #2}}, 
    separate/.default={}{},
    link/.style n args={1}{draw=#1, -{Latex[#1, length=2mm, width=1.5mm]}, dash=on 2pt off 1pt phase 0pt, thick}, 
    link/.default={},
    qlink/.style n args={1}{link={#1}, ultra thick, solid},
    qlink/.default={},
    xlink/.style n args={1}{link={#1}, {}-{}},
    xlink/.default={},
    sxlink/.style={xlink={swatch_stree}},
    txlink/.style={xlink={swatch_ttree}},
    slink/.style={link={swatch_stree}},
    tlink/.style={link={swatch_ttree}},
    sqlink/.style={qlink={swatch_stree}},
    tqlink/.style={qlink={swatch_ttree}},
    rayl/.style={draw=black, solid, -{Triangle[angle'=90, open, left] Triangle[angle'=90, open, left]}},
    rayr/.style={draw=black, solid, -{Triangle[angle'=90, open, right] Triangle[angle'=90, open, right]}},
    rayprog/.style={draw=black, solid, -{>>}},
    merge/.style={double distance=1pt, thick}, 
}
\tikzset{
    pics/merge grp/.style n args={6}{ code={ 
        \node [separate={#1}{#2}] at (0, 0) {};
        \draw [merge] (-\um, 0) -- (\um, 0);
        \node (#3) [#4, xshift=-\um] at (0, 0) {};
        \node (#5) [#6, xshift=\um] at (0, 0) {};
    }}
}
\tikzset{
    pics/trace grp/.style n args={4}{ code={ 
        \node [separate={#1}{#2}, inner xsep=3mm] at (0,0) {};
        \node (#3) [trtm pt, xshift=-\uss] at (0,0) {};
        \node (#4) [trtm pt, xshift=\uss] at (0,0) {};
    }}
}
\journal{Robotics and Autonomous Systems}
\begin{document}

\begin{frontmatter}
\title{Evolving \rtwo{} to \rtwop{}: Optimal, Delayed Line-of-sight Vector-based Path Planning}

\author[auth1]{Yan Kai Lai\corref{cor1}}
\ead{lai.yankai@u.nus.edu}
\cortext[cor1]{Corresponding author}
\author[auth1]{Prahlad Vadakkepat}
\ead{prahlad@nus.edu.sg}
\author[auth1]{Cheng Xiang}
\ead{elexc@nus.edu.sg}
\affiliation[auth1]{organization={National University of Singapore, Department of Electrical and Computer Engineering}}

\begin{abstract}

A vector-based any-angle path planner, \rtwo{}, is evolved in to \rtwop{} in this paper.
By delaying line-of-sight, \rtwo{} and \rtwop{} search times are largely unaffected by the distance between the start and goal points, but are exponential in the worst case with respect to the number of collisions during searches. 
To improve search times, additional discarding conditions in the overlap rule are introduced in \rtwop{}.
In addition, \rtwop{} resolves interminable chases in \rtwo{} by replacing ad hoc points with limited occupied-sector traces from target nodes, and simplifies \rtwo{} by employing new abstract structures and ensuring target progression during a trace.
\rtwop{} preserves the speed of \rtwo{} when paths are expected to detour around few obstacles, and searches significantly faster than \rtwo{} in maps with many disjoint obstacles.

\end{abstract}

\begin{highlights}
\item \rtwop{} simplifies \rtwo{} by ensuring target progression in traces, which eliminates a complicated tracing phase occurring before recursive traces.
\item \rtwop{} simplifies \rtwo{} by introducing new abstract structures ($S$-tree, $T$-tree).
\item \rtwop{} performs much faster than \rtwo{} in maps with many disjoint obstacles.
\item \rtwop{} and \rtwo{} return paths quickly when paths are expected to detour around few obstacles.
\item  By replacing ad hoc points with limited recursive occupied-sector traces, \rtwop{} ensures terminability when no paths can be found
\end{highlights}

\begin{keyword}
Any-angle \sep Binary Occupancy Grid \sep Delayed line-of-sight \sep Path Planning \sep Vector-based 
\end{keyword}
\end{frontmatter}

\section{Introduction}

Vector-based algorithms are optimal any-angle path planners that prioritise ray casts between points to find paths, and search along obstacle contours that obstruct the casts.
By avoiding free-space expansions and searching only along contours, vector-based planners can find paths more quickly than A* \citep{bib:astar}, and other any-angle planners like Theta* \citep{bib:thetastar} and ANYA \citep{bib:anya}.
The performance increase can be between ten to a hundred times faster, especially on large, sparse maps with few obstacles \cite{bib:r2}. 
Vector-based planners are a nascent class of planners with few algorithms, and include \rtwo{} \citep{bib:r2}, \rs{} \citep{bib:rayscan}, and \rsp{} \citep{bib:rayscanp}.

\rtwo{} is a vector-based algorithm that delays line-of-sight checks to find paths rapidly, expanding only the most promising points that lie close to the straight line between the start and goal points.
To prevent underestimates and ensure monotonically increasing costs when line-of-sight checks are delayed, novel concepts such as phantom points and best hulls are introduced in \rtwo{}. 
The concepts infer the convex hulls of non-convex obstacles, allowing cost-to-go estimates to be calculated more reliably when visibility between nodes are not known.

Several problems remain with \rtwo{}, which this work solves with \rtwop{}.
To ensure that \rtwop{} can terminate if no path is found, ad hoc points from \rtwo{} are replaced limited recursive traces from target nodes in \rtwop{}.
By guaranteeing target progression when tracing an obstacle in \rtwop{}, a complicated phase of tracing in \rtwo{} can be removed completely.
Instead of counting the number of nodes placed by a trace before the trace can be interrupted in \rtwo{}, the number of corners traced are counted in \rtwop{}, allowing the interrupting procedure to be decoupled from the placement rule.

While the delayed line-of-sight checks cause \rtwo{}'s performance to be largely invariant to the distance between the start and goal points, 
\rtwo{}'s search complexity becomes exponential with respect to the number of collided casts.
To achieve better performance on dense maps with many disjoint obstacles, 
the overlap rule from \rtwo{} is extended to include a pruning scheme similar to the G-value pruning from \citep{bib:dps}.

The abstract structures in \rtwo{} are refined in \rtwop{}. Two trees are introduced in \rtwop{} instead of one, with one tree rooted at the start point and another at the goal point. 
Links, which are connections between two nodes, are  the basic units of search in \rtwop{}. 
Links reduce the number of duplicate line-of-sight checks between overlapping paths and provide more intuitive geometrical interpretations of searches.

\section{Concepts in \rtwop{}}
\rtwop{} relies on casts and traces to find the shortest path. 
Like \rtwo{}, \rtwop{} delays line-of-sight checks to expand turning points with the least deviation from the straight line between the start and goal points.
\rtwop{} is an evolved algorithm of \rtwo{}, primarily focusing on resolving interminable chases in \rtwo{}, and in enhancing search time in maps with highly non-convex obstacles and many disjoint obstacles.
This section describes the nomenclature and structures used in \rtwop{}.

The \textbf{tree-direction} determines the direction of an object along a path from the start to goal node.
Suppose two objects lie on the same path.
An object lying in the \textbf{source} direction of the other leads to the start point on the path.
If it lies in the \textbf{target} direction, it leads to the goal point.

\rtwop{} relies on two node trees connected at their leaf nodes.
The \textbf{source-tree} ($S$-tree) is rooted at the start node, and the \textbf{target-tree} ($T$-tree) is rooted at the goal node. 
The $T$-tree behaves more like a directed graph than a tree.
While $S$-tree nodes are connected to one parent node, a $T$-tree node can be connected to more than one parent.
A $T$-tree node is connected to one parent only if it has an unobstructed path to the goal node.

Nodes in \rtwo{} and \rtwop{} have types, depending on cumulative visibility, cost etc. 
The node types in \rtwop{} are summarized in Table \ref{tab:nodetypes}.
Two nodes have \textbf{cumulative visibility} if there is an unobstructed path between them.
For brevity, the term is overloaded, and a $\mtdir$-tree node that has cumulative visibility will have an unobstructed path between the node and the $\mtdir$-tree's node.

\begin{table}[!ht]
\centering
\caption{Node Types in \rtwop{}}
\label{tab:nodetypes}
\setlength{\tabcolsep}{3pt}
\begin{tabular}{ c  c  p{5.5cm}}
\textbf{Type} & \textbf{Sm.} & \textbf{Description} \\
\hline
$\mnvy$ & \raisebox{0pt}{
    \tikz{
        \clip (-1.1mm, -1.1mm) rectangle ++(2.2mm, 2.2mm);
        \node [vy pt={}{}{}] at (0,0) {};
    }
} & A turning point with cumulative visibility.  \\
\hline
$\mnvu$ & \raisebox{0pt}{
    \tikz{
        \clip (-1.1mm, -1.1mm) rectangle ++(2.2mm, 2.2mm);
        \node [vu pt={}{}{}] at (0,0) {};
    }
}  & A turning point with unknown cumulative visibility. \\
\hline
$\mney$ & \raisebox{0pt}{
    \tikz{
        \clip (-1.4mm, -1.4mm) rectangle ++(2.8mm, 2.8mm);
        \node [ey pt={}{}{}] at (0,0) {};
    }
}  & An expensive $\mnvy$ node. \\
\hline
$\mneu$ & \raisebox{0pt}{
    \tikz{
        \clip (-1.4mm, -1.4mm) rectangle ++(2.8mm, 2.8mm);
        \node [eu pt={}{}{}] at (0,0) {};
    }
} & An expensive $\mnvu$ node. \\
\hline
$\mntm$* & \raisebox{0pt}{
    \tikz{
        \clip (-1.1mm, -1.1mm) rectangle ++(2.2mm, 2.2mm);
        \node [tm pt={}{}{}] at (0,0) {};
    }
} & A phantom point, or a temporary node that is placed when a trace is interrupted. \\
\hline
$\mnun$* & \raisebox{0pt}{
    \tikz{
        \clip (-1.1mm, -1.1mm) rectangle ++(2.2mm, 2.2mm);
        \node [un pt={}{}{}] at (0,0) {};
    }
}  & An unreachable node. Supersedes ad hoc points from \rtwo{}.\\
\hline
$\mnoc$* & \raisebox{0pt}{
    \tikz{
        \clip (-1.05mm, -1.05mm) rectangle ++(2.1mm, 2.1mm);
        \node [oc pt={}{}{}] at (0,0) {};
    }
} & A node placed by a target recursive-angular sector trace. \\
\hline
\multicolumn{3}{p{7.5cm}}{
\footnotesize
The \textit{Sm.} column denotes the symbol used in figures.
*$\mntm$, $\mnun$, and $\mnoc$ nodes are $T$-tree nodes, which lie in the target direction of a query.
}
\end{tabular}
\end{table}

\begin{table}[!ht]
\centering
\caption{Legend of Symbols Used in Figures}
\label{tab:legend}
\setlength{\tabcolsep}{3pt}
\begin{tabular}{ c  p{5.5cm}}
\textbf{Sm.} & \textbf{Description} \\
\hline
\raisebox{-.5ex}{
    \tikz[]{
        \clip (-1.6mm, -1.6mm) rectangle ++(13.2mm, 3.2mm);
        \node (n1) [any pt] at (0,0) {\footnotesize 1};
        \node (n2) [any pt] at (1cm,0) {\footnotesize  2};
        \draw [link] (n1) -- (n2); 
    }
} & A link anchored at node 1, connected to links (not shown) anchored at node 2.  \\
\hline
\raisebox{-.5ex}{
    \tikz{
        \clip (-1.6mm, -1.6mm) rectangle ++(13.2mm, 3.2mm);
        \node (n1) [any pt] at (0,0) {\footnotesize 1};
        \node (n2) [any pt] at (1cm,0) {\footnotesize  2};
        \draw [qlink] (n1) -- (n2); 
    }
} & Same as above, and the link is associated with a queued query. \\
\hline
\raisebox{-5.5mm}{
    \tikz[]{
        \clip (-4.3mm, -4.5mm) rectangle ++(8.6mm, 9mm);
        \node (n1) [any pt, xshift=-2.5mm] at (0,0) {\footnotesize 1};
        \node (n2) [any pt, xshift=1.2mm, yshift=1.9mm] at (0,0) {\footnotesize  2};
        \node (n3) [any pt, xshift=1mm, yshift=-2mm] at (0,0) {\footnotesize  3};
        \draw [merge] (n1) -- (n2);
        \draw [merge] (n1) -- (n3);
    }
} & Links anchored at nodes 1 and 2, and nodes 2 and 3, are connected. Links anchored at 2 and 3 are not connected. \\
\hline
\raisebox{-5.5mm}{
    \tikz[]{
        \clip (-4.3mm, -4.5mm) rectangle ++(8.6mm, 9mm);
        \node [separate, inner xsep=3.9mm, inner ysep=4.2mm] at (0,0) {};
        \node (n1) [any pt, xshift=-2mm] at (0,0) {\footnotesize 1};
        \node (n2) [any pt, xshift=1mm, yshift=1.8mm] at (0,0) {\footnotesize  2};
        \node (n3) [any pt, xshift=0.9mm, yshift=-1.9mm] at (0,0) {\footnotesize  3};
    }
} & Multiple nodes at the same corner.\\
\hline
\raisebox{-3mm}{
    \tikz[]{
        \pic at (0,0) {trace grp={}{}{}{}};
    }
} & A pair of disconnected $\mntm$ nodes (\textbf{trace-nodes}) that follows a trace and is not part of any tree. Links that are anchored on the nodes are called \textbf{trace-links}. \\
\hline
\raisebox{-.5ex}{
    \tikz{
        \clip (-1.6mm, -1.6mm) rectangle ++(11.6mm, 3.2mm);
        \node (n1) [any pt] at (0,0) {\footnotesize 1};
        \draw [rayl] (n1) -- ++(0:1cm); 
    }
} & Left sector-ray of an angular-sector at node 1.\\
\hline
\raisebox{-.5ex}{
    \tikz{
        \clip (-1.6mm, -1.6mm) rectangle ++(11.6mm, 3.2mm);
        \node (n1) [any pt] at (0,0) {\footnotesize 1};
        \draw [rayr] (n1) -- ++(0:1cm); 
    }
} & Right sector-ray of an angular-sector at node 1.\\
\hline
\raisebox{-.5ex}{
    \tikz{
        \clip (-1.6mm, -1.6mm) rectangle ++(11.6mm, 3.2mm);
        \node (n1) [any pt] at (0,0) {\footnotesize 1};
        \draw [rayprog] (n1) -- ++(0:1cm); 
    }
} & Progression ray with respect to node 1.\\
\hline
\textcolor{swatch_stree}{$S$-tree} & $S$-tree objects are colored light red. \\
\hline
\textcolor{swatch_ttree}{$T$-tree} & $T$-tree objects are colored dark green. \\
\end{tabular}
\end{table}

A \textbf{query} in \rtwop{} is similar to an iteration in A*, where the node tree is modified based on the state of an expanded node.
In \rtwop{}, a query can be a \textbf{casting query}, which modifies the node trees after a line-of-sight check; or a \textbf{tracing query}, which modifies the node trees while tracing along an obstacle contour.
Unless otherwise stated, a \textbf{cast} refers to a \textit{casting query} and a \textbf{trace} refers to a \textit{tracing query} for brevity.

A \textbf{link} connects two nodes, and is used to store dynamic information such as sector-rays and cost.
By shifting the dynamic information from nodes to links, nodes need not be duplicated when paths overlap, and the number of duplicate line-of-sight checks can be reduced.
In the implementation, a link points to (\textbf{anchored}) its child node for efficient memory management.
The link's parent node is anchored by a connected parent link.
A query in \rtwop{} expands a link that is anchored on a leaf node. 
As such, nodes and links on the $S$-tree lie in the source direction of a query, and nodes and links on the $T$-tree lie in the target direction of a query.

Tables \ref{tab:nodetypes} and \ref{tab:legend} illustrate the symbols used in figures. 
Fig. \ref{fig:tree} describes the trees with respect to nodes, links and queries. 

\tikzset{
        pics/tree/.style n args={0}{ code={ 
            \fill [swatch_obs] 
                (2*\ul,7*\u) -- ++(90:3*\ul) coordinate (x0) -- ++(0:13*\u)
                -- ++(270:\ul) coordinate (x5) -- ++(0:\ul) -- ++(270:\ul) -- ++(0:\ul) 
                -- ++(270:6*\u) coordinate (x1) -- ++(180:3*\u) coordinate (x2) -- ++(90:4*\u)
                -- ++(180:\ul) -- ++(90:\ul) -- ++(180:\ul) -- ++(90:\ul) coordinate (x3)
                -- ++(180:4*\ul)-- ++(270:\ul) -- ++(0:\ul) -- ++(270:\ul);
            \fill [swatch_obs] 
                (4*\ul, \u) coordinate (x20) -- ++(90:3*\u) coordinate (x21) -- ++(0:5*\u) coordinate (x22) -- ++(270:3*\u) coordinate (x23);
            \node (nstart) [svy pt={xshift=6mm, yshift=-1mm}{$\mxstart$}] at (0, 2*\ul) {};
            \node (ngoal) [tvy pt={}{$\mxgoal$}] at (12*\ul, 3*\ul) {};
            \path (nstart) -- (ngoal);
            \draw [tlink] (ngoal) -- ++(0:\ul); 
            \draw [slink] (nstart) -- ++(240:\ul); 
            
        }},
    }

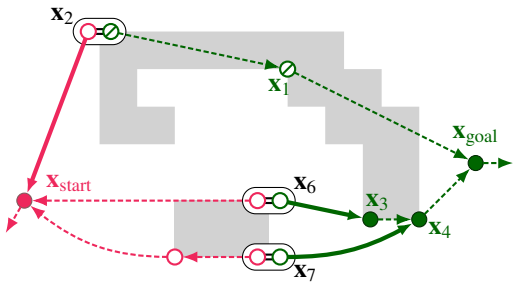
\begin{figure}[!ht]
\centering
\begin{tikzpicture}[]
    \clip (-0.25cm, -0.25cm) rectangle ++(6.75cm, 4cm);
    \pic at (0, 0) {tree};
    \pic at (x0) {merge grp={left}{135:$\mx_2$}{n0 svu}{svu pt}{n0 ttm}{ttm pt}};
    \node (n3 tph) [ttm pt={below, xshift=-1mm, yshift=-1mm}{$\mx_1$}] at (x3) {};
    \node (n1 tvy) [tvy pt={below right}{$\mx_4$}] at (x1) {};
    \node (n2 tvy) [tvy pt={above, xshift=1mm, yshift=-1mm}{$\mx_3$}, xshift=\uss] at (x2) {};
    \node [merge={right, xshift=2mm, yshift=0mm}{$\mx_6$}] at (x22) {};
    \pic at (x22) {merge grp={right}{45:$\mx_6$}{n22 svu}{svu pt}{n22 tvu}{tvu pt}};
    \pic at (x23) {merge grp={right}{-45:$\mx_7$}{n23 svu}{svu pt}{n23 tvu}{tvu pt}};
    \node (n20 svu) [svu pt={}{}] at (x20) {};

    \draw [tlink] (n2 tvy) edge (n1 tvy) (n1 tvy) edge (ngoal);
    \draw [tqlink] (n23 tvu) edge[bend right=15] (n1 tvy);
    \draw [slink] (n23 svu) edge (n20 svu) (n20 svu) edge[bend left=20] (nstart);
    \draw [tqlink] (n22 tvu) edge (n2 tvy);
    \draw [slink] (n22 svu) edge (nstart);
    
    \draw [sqlink] (n0 svu) -- (nstart);
    \draw [tlink] (n0 ttm) edge (n3 tph) (n3 tph) edge (ngoal);


    


\end{tikzpicture}
\caption{
A brief illustration of \rtwop{}'s trees, nodes and links.
The $S$-tree (light red) is rooted at the start node at $\mxstart$. 
The $T$-tree (dark green) is rooted at the goal node at $\mxgoal$. 
Both trees are connected at their leaf nodes.
A link that connects to nothing is anchored at the start node and another at the goal node.
The corners are labelled according to the order they are found, not including unlabelled corners.
An interrupted trace is queued at $\mx_2$, and a cast is each queued from $\mx_6$ and $\mx_7$.
}
\label{fig:tree}
\end{figure}

\section{Evolving \rtwo{} to \rtwop{}}

The following subsections describe the changes made to evolve \rtwo{} to \rtwop{}. 
In \rtwop{}, short occupied-sector traces from target nodes in \rtwop{} supersedes the ad hoc points from \rtwo{} (Sec. \ref{sec:tgtocsec});
the complicated tracing phase before a recursive trace from the source node in \rtwo{} is replaced by simpler corrective steps in \rtwop{} (Sec. \ref{sec:tgtprog});
the interrupt rule counts corners in \rtwop{} instead of nodes placed (Sec. \ref{sec:interrupt});
and the overlap rule is modified to discard expensive paths (Sec. \ref{sec:overlap}).

\subsection{Limited, Target Recursive Occupied-Sector Trace} \label{sec:tgtocsec}
Recursive occupied-sector traces for target nodes (\textbf{target oc-sec trace}) are not implemented in \rtwo{} due to chases \citep{bib:r2}. 
A chase occurs when two traces in the same direction try to cast to each other but are unable to do so as the traces are on the same contour.
For \rtwo{} to be complete in the absence of target oc-sec traces, two ad hoc points $\mnode_{ad,b}$ and $\mnode_{ad,c}$ are introduced in \rtwo{}.
While the ad hoc points can reduce the number of chases for \rtwo{} to find a path, the points do not eliminate chases completely, and \rtwo{} can be interminable if no path exists.

\tikzset{
    pics/tgtocsec/.style n args={0}{ code={

    }},
}
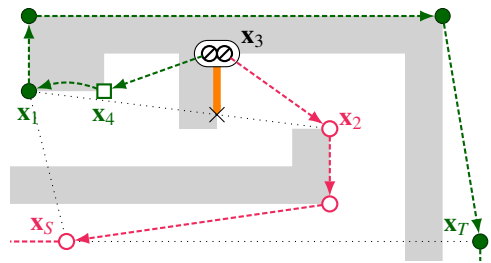
\begin{figure}[!ht]
\centering
\begin{tikzpicture}[]
    \clip (-\u, -\u) rectangle ++(13*\ul, 7*\ul);
    \path (\ul, 0) coordinate (xsrc);
    \path (12*\ul, 0) coordinate (xtgt);

    \fill [swatch_obs] 
        (0*\ul, 4*\ul) coordinate (xa1) -- ++(90:2*\ul) coordinate (xa2) -- ++(0:11*\ul) coordinate (xa3) 
        -- ++(90:-8*\ul) coordinate (xa4) -- ++(0:-\ul) coordinate (xa5) 
        -- ++(90:7*\ul) coordinate (xa6) -- ++(0:-8*\ul) coordinate (xa7) -- ++(90:-\ul) coordinate (xa8); 

    \fill [swatch_obs] 
        (-\ul, \ul) coordinate (xb1) -- ++(90:\ul) coordinate (xb2) 
        -- ++(0:8*\ul) -- ++(90:\ul) -- ++(0:\ul) coordinate (xb3) -- ++(90:-2*\ul) coordinate (xb4);

    \draw [dotted] (xsrc) -- (xtgt);
    \draw [dotted] (xsrc) -- (xa1);
    \node (ntgt) [tvy pt={left, yshift=1mm, xshift=0}{$\mx_T$}] at (xtgt) {};
    \node (nsrc) [svu pt={left, yshift=1mm, xshift=0mm}{$\mx_S$}] at (xsrc) {};
    \draw [sxlink] (nsrc) -- ++(0:-2*\ul);
    \draw [txlink] (ntgt) -- ++(90:-\ul);

    \fill [swatch_obs] (4*\ul, 3*\ul) coordinate (xc1) -- ++(90:2*\ul) coordinate (xc2) -- ++(0:\ul) coordinate (xc3) -- ++(90:-2*\ul) coordinate (xc4);

    \draw [dotted] (xb3) -- (xa1)
        node (xcol) [coordinate, pos=3/8] {};
    \draw [trace] (xcol) -- (xc3) -- ++(0:-\u);
    \node [cross pt] at (xcol) {};

    \node (na8 toc) [toc pt={below, yshift=-2mm}{$\mx_4$}] at (xa8) {};
    \node (na1 tvy) [tvy pt={below, yshift=-2mm}{$\mx_1$}] at (xa1) {};
    \node (na2 tvy) [tvy pt] at (xa2) {};
    \node (na3 tvy) [tvy pt] at (xa3) {};

    \node (nb4 svu) [svu pt] at (xb4) {};
    \node (nb3 svu) [svu pt={right}{$\mx_2$}] at (xb3) {};

    \pic at (xc3) {trace grp={right}{45:$\mx_3$}{nc3 ttrtm}{nc3 strtm}};

    \draw [tlink] (nc3 ttrtm) edge (na8 toc) (na8 toc) edge[bend right=20] (na1 tvy) (na1 tvy) edge (na2 tvy) (na2 tvy) edge (na3 tvy) (na3 tvy) edge (ntgt);
    \draw [slink] (nc3 strtm) edge (nb3 svu) (nb3 svu) edge (nb4 svu) (nb4 svu) edge (nsrc);
\end{tikzpicture}
\caption{
After a cast from $\mx_2$ and $\mx_1$ collides, an $R$-trace occurs and calls a target oc-sec trace at $\mx_3$. The oc-sec trace begins from $\mx_1$, and stops once an $\mnoc$ node is placed at $\mx_4$. 
By stopping at the first corner and preventing subsequent oc-sec traces from occurring from an $\mnoc$ node, chases are prevented from happening.
}
\label{fig:tgtocsec}
\end{figure}

To prevent interminable chases in \rtwop{}, limited target oc-sec traces are implemented, and the ad hoc points $\mnode_{ad,b}$ and $\mnode_{ad,c}$ are removed.
If the target node is $\mside$-sided, the limited trace ends at the first $(-\mside)$-side corner from the target node, where a new $\mnoc$ node is placed.
The $\mnoc$ node prevents a subsequent oc-sec trace from occurring, especially if the calling tracing query is on a different best-hull. 
If the query arrives at the same contour as the oc-sec trace, a chase can occur.
Fig. \ref{fig:tgtocsec} illustrates a limited oc-sec trace.

A limited recursive trace is simpler to implement than ad hoc points and a full recursive trace.
Ad hoc points require extensive calculations to calculate intersections, and a target oc-sec trace traces toward source nodes, which require the tracing rules to be adjusted. 



\subsection{Ensuring Target Progression} \label{sec:tgtprog}
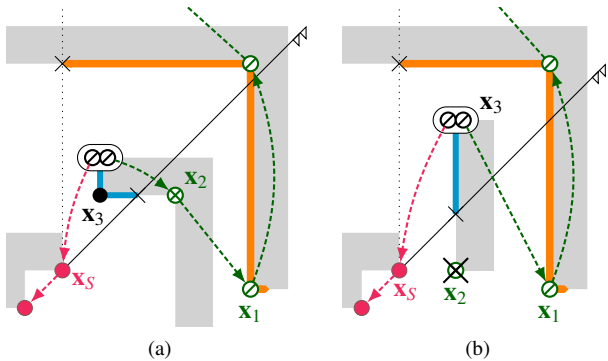
\begin{figure}[!ht]
\centering
\subfloat[\label{fig:tgtprog1a} ] {%
    \centering
    \begin{tikzpicture}[]
        \clip (-\u, -\u) rectangle ++(8*\ul, 17*\u);

        \fill [swatch_obs] (-\ul, 0) -- ++(90:2*\ul) -- ++(0:2*\ul) -- ++(90:-\ul) coordinate (xsrc) -- ++(0:-\ul) -- ++(90:-\ul) coordinate (xssrc);

        \fill [swatch_obs] (-\ul, 6*\ul+\u) -- ++(90:\ul) -- ++(0:8*\ul) -- ++(90:-7*\ul) -- ++(0:-\ul) coordinate (xb ttm) -- ++(90:6*\ul) coordinate (xb tph);

        \fill [swatch_obs] (2*\ul, 3*\ul) coordinate (xc1) -- ++(90:\ul) coordinate (xc trace)  -- ++(0:3*\ul) -- ++(90:-5*\ul) -- ++(0:-\ul) -- ++(90:4*\ul) coordinate (xc tun);

        \path (xsrc) -- ++(90:10*\ul) coordinate (xtgt);
        
        \draw [dotted] (xsrc) -- +(90:8*\ul)
            node (xb col) [coordinate] at +(90:11*\u) {};
        \path (xssrc) -- (xsrc) 
            node (xc col) [coordinate, pos=3] {}
            node (xray) [coordinate, pos=7.5] {};

        \draw [trace] (xb col) -- (xb tph) -- (xb ttm) -- ++(0:\u);
        \draw [trace2] (xc col) -- (xc1) -- (xc trace) -- ++(0:\u);

        \draw [rayr] (xsrc) -- (xray);
        \node (nsrc) [svy pt={below right}{$\mx_S$}] at (xsrc) {};
        \node (nssrc) [svy pt={}{}] at (xssrc) {};

        \node [cross pt] at (xb col) {};
        \node (nb tph) [ttm pt={}{}] at (xb tph) {};
        \node (nb ttm) [ttm pt={below, yshift=-2mm}{$\mx_1$}] at (xb ttm) {};

        \node [cross pt] at (xc col) {};
        \pic at (xc trace) {trace grp={}{}{nc strtm}{nc ttrtm}};
        \node [black pt={below}{-150:$\mx_3$}] at (xc1) {};
        \node (nc tun) [tun pt={right, yshift=1mm}{$\mx_2$}] at (xc tun) {};

        \draw [slink] (nc strtm) edge[bend right=10] (nsrc) (nsrc) edge (nssrc);
        \draw [tlink] (nc ttrtm) edge[bend left=10] (nc tun) (nc tun) edge (nb ttm) (nb ttm) edge[bend right=20] (nb tph) (nb tph) edge (xtgt);
    \end{tikzpicture}
} \hfill
\subfloat[\label{fig:tgtprog1b}] {%
    \centering
    \begin{tikzpicture}[]
        \clip (-\u, -\u) rectangle ++(7*\ul, 17*\u);
        
        \fill [swatch_obs] (-\ul, 0) -- ++(90:2*\ul) -- ++(0:2*\ul) -- ++(90:-\ul) coordinate (xsrc) -- ++(0:-\ul) -- ++(90:-\ul) coordinate (xssrc);

        \fill [swatch_obs] (-\ul, 6*\ul+\u) -- ++(90:\ul) -- ++(0:7*\ul) -- ++(90:-7*\ul) -- ++(0:-\ul) coordinate (xb ttm) -- ++(90:6*\ul) coordinate (xb tph);

        \fill [swatch_obs] (5*\u, \ul) coordinate (xc tun) -- ++(90:4*\ul) coordinate (xc trace)  -- ++(0:\ul) -- ++(90:-4*\ul);
        
        \path (xsrc) -- ++(90:10*\ul) coordinate (xtgt);
        
        \draw [dotted] (xsrc) -- +(90:8*\ul)
            node (xb col) [coordinate] at +(90:11*\u) {};
        \path (xssrc) -- (xsrc) 
            node (xc col) [coordinate, pos=2.5] {}
            node (xray) [coordinate, pos=6.5] {};

        \draw [trace] (xb col) -- (xb tph) -- (xb ttm) -- ++(0:\u);
        \draw [trace2] (xc col) -- (xc trace) -- ++(0:\u);

        \draw [rayr] (xsrc) -- (xray);
        \node (nsrc) [svy pt={below, xshift=1mm, yshift=-1.5mm}{$\mx_S$}] at (xsrc) {};
        \node (nssrc) [svy pt={}{}] at (xssrc) {};

        \node [cross pt] at (xb col) {};
        \node (nb tph) [ttm pt={}{}] at (xb tph) {};
        \node (nb ttm) [ttm pt={below, yshift=-2mm}{$\mx_1$}] at (xb ttm) {};

        \node [cross pt] at (xc col) {};
        \pic at (xc trace) {trace grp={right}{45:$\mx_3$}{nc strtm}{nc ttrtm}};
        \node (nc tun) [tun pt={below, yshift=-2mm}{$\mx_2$}] at (xc tun) {};
        \node [prune] at (xc tun) {};

        \draw [slink] (nc strtm) edge[bend right=10] (nsrc) (nsrc) edge (nssrc);
        \draw [tlink] (nc ttrtm) edge (nb ttm) (nb ttm) edge[bend right=20] (nb tph) (nb tph) edge (xtgt);
        
    \end{tikzpicture}
}    
\caption{
A recursive angular-sector trace places a $\mnun$ node to ensure target progression.
(a) An $R$-trace that reached $\mx_1$ has triggered an $L$-sided recursive angular-sector trace.
The initial edge of the recursive trace lies between $\mx_3$ and the collision point of the sector ray.
There is no target progression for the initial edge when viewed from $\mx_1$ (traces to the left of $\mx_1$), but placing an $L$-sided $\mnun$ node at $\mx_2$ will result in target progression (traces to right when viewed from $\mx_2$).
(b) If the initial edge has target progression, the $\mnun$ node at $\mx_2$ will be pruned immediately by the recursive trace.
}
\label{fig:tgtprog1}
\end{figure}
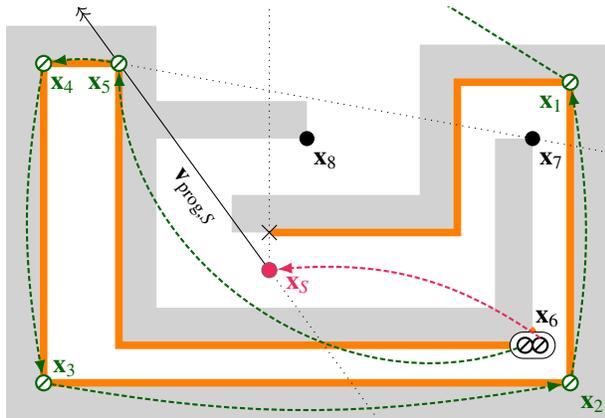
\begin{figure}[!ht]
\centering
    \begin{tikzpicture}[]
        \clip (0, 0) rectangle ++(33*\u, 22*\u);
        
        \fill [swatch_obs] 
            (0,0) -- ++(90:10*\ul+\u) -- ++(0:4*\ul) -- ++(90:-2*\ul) -- ++(0:4*\ul) 
            -- ++(90:-\ul) coordinate (x11) -- ++(0:-4*\ul) -- ++(90:-4*\ul-\u) 
            -- ++(0:9*\ul) -- ++(90:4*\ul+\u) -- ++(0:\ul) coordinate (x10)
            -- ++(90:-5*\ul-\u) coordinate (x9) -- ++(0:-11*\ul) coordinate (x8) -- ++(90:7*\ul+\u) coordinate (x7)
            -- ++(0:-2*\ul) coordinate (x6) -- ++(90:-8*\ul-\u) coordinate (x5) -- ++(0:14*\ul) coordinate (x4)
            -- ++(90:8*\ul) coordinate (x3) -- ++(0:-3*\ul)  coordinate (x2) -- ++(90:-4*\ul) coordinate (x1)
            -- ++(0:-6*\ul) -- ++(90:\ul) -- ++(0:5*\ul) -- ++(90:4*\ul) -- ++(0:5*\ul) -- ++(90:-10*\ul);
        
        \path (7*\ul, 4*\ul) coordinate (xsrc);
        \path (xsrc) -- ++(90:10*\ul) coordinate (xtgt);
        \path (xsrc) -- ++(90:\ul) coordinate (xcol);

        \draw [trace] (xcol) -- (x1) -- (x2) -- (x3) -- (x4) -- (x5) -- (x6) -- (x7) -- (x8) -- (x9) -- ++(90:\u);

        \draw [rayprog] (xsrc) -- (x7) 
            node (xsrcto7) [coordinate, pos=14/11] {}
            node (x7tosrc) [coordinate, pos=-1] {}
            node [sloped, below, pos=0.4] {$\mv_{\mprog,S}$}
            (x7) -- (xsrcto7);
        \draw [dotted] (xsrc) -- (x7tosrc);
        \draw [dotted] (x7) -- (x10)
            node (x7to10) [coordinate, pos=27/22] {}
            (x10) -- (x7to10);
        \draw [dotted] (xsrc) -- ++(90: 7*\ul); 

        \node (nsrc) [svy pt={shift={(4mm, -2mm)}}{center:$\mx_S$}] at (xsrc) {};
        \node [cross pt] at (xcol) {};
        \node (n3 tph) [ttm pt={below, xshift=-2.5mm, yshift=-1.5mm}{$\mx_1$}] at (x3) {};
        \node (n4 tph) [ttm pt={below, xshift=3mm, yshift=-1.5mm}{$\mx_2$}] at (x4) {};
        \node (n5 tph) [ttm pt={right, yshift=1mm}{$\mx_3$}] at (x5) {};
        \node (n6 tph) [ttm pt={below, xshift=2.5mm, yshift=-1.5mm}{$\mx_4$}] at (x6) {};
        \node (n7 tph) [ttm pt={below, xshift=-2.5mm, yshift=-1.5mm}{$\mx_5$}] at (x7) {};
        \pic at (x9) {trace grp={shift={(2mm, 3.5mm)}}{center:$\mx_6$}{n9 ttrtm}{n9 strtm}};
        \node (n10) [black pt={below, xshift=2.5mm, yshift=-1.5mm}{$\mx_7$}] at (x10) {};
        \node (n11) [black pt={below, xshift=2.5mm, yshift=-1.5mm}{$\mx_8$}] at (x11) {};

        \draw [slink] (n9 strtm.90) to[bend right=20] (nsrc);
        \draw [tlink] (n9 ttrtm) edge[bend left=55] (n7 tph)
            (n7 tph) edge[bend right=10] (n6 tph)
            (n6 tph) edge[bend right=10] (n5 tph)
            (n5 tph) edge[bend right=10] (n4 tph)
            (n4 tph) edge[bend right=10] (n3 tph)
            (n3 tph) edge (xtgt);

    \end{tikzpicture}
\caption{
    A cast occurs from the source node (at $\mx_S$) if the source progression has decreased by more than $180^\circ$ (at $\mx_6$). 
    The maximum source progression is indicated by the source progression ray $\mv_{\mprog,S}$.
    The phantom point at $\mx_5$ is replaced by an $\mnun$ node before a cast is queued from the source node to the $\mnun$ node.
    If no cast occurs, the target progression ray will reach a maximum at $\mx_7$. 
    As such, if a source recursive trace is called at $\mx_8$, there will be no target progression.
}
\label{fig:tgtprog2}
\end{figure}

During a trace, the angular progression with respect to a target node (\textbf{target progression}) may be decreasing, especially when the trace is interrupted.
To ensure target progression when a trace is interrupted, \rtwo{} enters a special tracing phase and exits once there is progression for all target nodes.
The special phase is complicated, requiring the rules to be modified, and the query to backtrack to the node where the trace is interrupted.
If target progression is not ensured, nodes may be incorrectly pruned in a subsequent trace, and the algorithm may not be able to find a path.
As target progression seldom decreases, it is not efficient to implement the complicated special phase.


Instead of entering a special phase, \rtwop{} applies short corrective steps whenever target progression becomes likely to decrease (Cases P1 and P2). 
\textbf{Case P1} occurs when the initial traced edge of a recursive angular-sector trace is angled in a way that results in no target progression (see Fig. \ref{fig:tgtprog1}).
Let the recursive trace be $(-\mside)$-sided and the calling trace be $\mside$-sided. 
Case P1 is resolved by placing a $(-\mside)$-sided unreachable $\mnun$ node at the $\mside$-side of the initial edge traced by the recursive angular-sector trace. 
If there is target progression at the initial edge, the $\mnun$ node would be immediately pruned.
If there is no target progression at the initial edge, the $\mnun$ node ensures progression by rerouting the path.
A subsequent query that reaches the $\mnun$ can be discarded as a cheaper path will exist.

\textbf{Case P2} occurs when a trace is allowed to continue after the angular progression with respect to the trace's source node (\textbf{source progression}) has decreased by more than $180^\circ$ in a highly non-convex obstacle.
When the trace resumes source progression, target progression may not have resumed (see Fig. \ref{fig:tgtprog2}).
The case is resolved by queuing a cast from the source node to the \textit{only} target node of the trace.
The target node is a phantom point where the source progression is at a maximum, and the node is converted to an unreachable $\mnun$ node for the cast.
The phantom point is the only target node when Case P2 occurs as (i) it is the only target node when the source progression begins to decrease \citep[Case 1.4 of Theorem 2]{bib:r2}, and (ii) no other target nodes are placed by \rtwop{} when the source progression is decreasing.
By casting to the point where the source progression is at the maximum, Case P2 can be avoided for all subsequent traces.

\subsection{Interrupt Rule} \label{sec:interrupt}
The interrupt rule interrupts traces for queuing, so as to avoid expanding long, non-convex contours that are unlikely to find the shortest path.
A trace that calls a recursive ang-sec or oc-sec trace is not interrupted by the interrupt rule; it is interrupted by the ang-sec or oc-sec rule respectively, instead.

In \rtwo{} a trace is interrupted and queued after several nodes are placed, and the check occurs within the placement rule.
To modularize and simplify the algorithm, \rtwop{} interrupts and queues the trace after several corners are traced. The check occurs before the placement rule, and is called the \textbf{interrupt rule}.

\subsection{Overlap Rule} \label{sec:overlap}
The overlap rule dictates how \rtwo{} and \rtwop{} behave when paths from different queries overlap.
As line-of-sight checks are delayed, overlapping paths cannot be immediately discarded.
Costs have to be verified, and expensive paths can only be discarded if they are guaranteed to remain expensive.
By discarding paths, the overlap rule helps to improve search time.

\subsubsection{\rtwo{}'s Overlap Rule}
The overlap rule in \rtwo{} is triggered if the condition in each of the following three cases is met.

\tikzset{
    pics/overlap1/.style n args={0}{ code={ 
        \fill [swatch_obs] (2*\ul, 5*\ul) -- ++(90:1.5*\ul) -- ++(0:3.5*\ul) coordinate (x4) -- ++(-90:1.5*\ul) coordinate (x3);

        \path (0.5*\ul, 0.5*\ul) coordinate (x1);
        \path (0.5*\ul, 3.5*\ul) coordinate (x2);
        \path (4*\ul, 2*\ul) coordinate (x5);
        \path (4*\ul, 3.5*\ul) coordinate (x6);
        \path (x2) -- ++(70:6*\ul) coordinate (x2a);
        \path (x3) -- ++(60:4*\ul) coordinate (x3a);
        \path (x4) -- ++(135:2*\ul) coordinate (x4a);
        \path (x5) -- ++(40:5*\ul) coordinate (x5a);
    }},
}

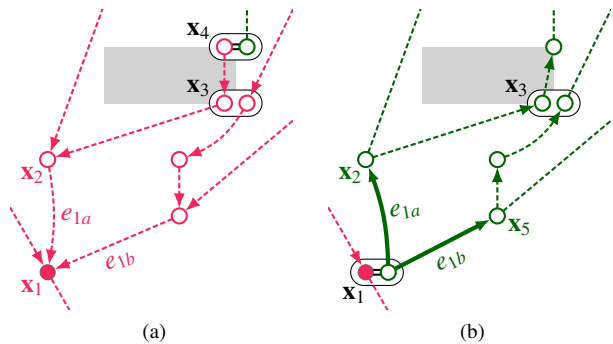
\begin{figure}[!ht]
\centering
\subfloat[\label{fig:overlap1a} ] {%
    \centering
    \begin{tikzpicture}[]
        \clip (-\u, -\u) rectangle ++(7.5*\ul, 8*\ul);
        \pic at (0, 0) {overlap1};

        \node (n1 svy) [svy pt={shift={(-2mm, -2mm)}}{center:$\mx_1$}] at (x1) {};
        \draw [sxlink] (n1 svy) -- ++(-60:3*\ul);
        \draw [slink] (n1 svy) +(120:2*\ul) -- (n1 svy);
        \node (n2 svu) [svu pt={below, shift={(-2mm, -2mm)}}{center:$\mx_2$}] at (x2) {};
        \node [separate={left}{135:$\mx_3$}] at (x3) {};
        \node (n3 svu a) [svu pt, shift={(-\um, 0)}] at (x3) {};
        \node (n3 svu b) [svu pt, shift={(\um, 0)}] at (x3) {};
        \pic at (x4) {merge grp={left}{135:$\mx_4$}{n4 svu}{svu pt}{n4 tvu}{tvu pt}};
        \node (n5 svu) [svu pt={shift={(3, -1.5mm)}}{center:$\mx_5$}] at (x5) {};
        \node (n6 svu) [svu pt={}{}] at (x6) {};

        \draw [slink] (n5 svu) -- (n1 svy)
                node [swatch_stree, pos=0.5, sloped, below] {$\mlink_{1b}$};
        \draw [slink] (n2 svu) edge[bend left=10] 
                node [pos=0.5, label={[swatch_stree, rotate=0, shift={(3mm, 0)}] center:$\mlink_{1a}$}] {} 
                (n1 svy);
        \draw [slink] (x2a) -- (n2 svu);
        \draw [slink] (x5a) -- (n5 svu);
                
        \draw [slink]  
            (n4 svu) edge (n3 svu a) 
            (n3 svu a) edge (n2 svu);
        \draw [slink]
            (x3a) edge (n3 svu b)
            (n3 svu b) edge[bend left=20] (n6 svu)
            (n6 svu) edge (n5 svu);
        \draw [tlink]
            (n4 tvu) edge ++(90:2*\ul);
    \end{tikzpicture}
} \hfill
\subfloat[\label{fig:overlap1b}] {%
    \centering
    \begin{tikzpicture}[]
        \clip (-\u, -\u) rectangle ++(7.5*\ul, 8*\ul);
        \pic at (0, 0) {overlap1};

        \pic at ($(x1) + (\um, 0)$) {merge grp={shift={(-3mm, -3mm)}}{center:$\mx_1$}{n1 svy}{svy pt}{n1 tvu}{tvu pt}};
        \draw [sxlink] (n1 svy) -- ++(-60:3*\ul);
        \draw [slink] (n1 svy) +(120:2*\ul) -- (n1 svy);
        \node (n2 tvu) [tvu pt={shift={(-2mm, -2mm)}}{center:$\mx_2$}] at (x2) {};
        \node [separate={left}{135:$\mx_3$}] at (x3) {};
        \node (n3 tvu a) [tvu pt, shift={(-\um, 0)}] at (x3) {};
        \node (n3 tvu b) [tvu pt, shift={(\um, 0)}] at (x3) {};
        \node (n4 tvu) [tvu pt] at (x4) {};
        \node (n5 tvu) [tvu pt={shift={(3mm, -1.5mm)}}{center:$\mx_5$}] at (x5) {};
        \node (n6 tvu) [tvu pt={}{}] at (x6) {};
        
        \draw [tqlink] (n1 tvu) -- (n5 svu)
                node [swatch_ttree, pos=0.5, sloped, below] {$\mlink_{1b}$};
        \draw [tqlink] (n1 tvu) to[bend right=10] 
            node [pos=0.5, label={[swatch_ttree, rotate=10, shift={(3mm, 0)}] center:$\mlink_{1a}$}] {} (n2 svu);
                                
        \draw [tlink] (n2 tvu) -- (x2a);
        \draw [tlink] (n5 tvu) -- (x5a);
        \draw [tlink]
            (n2 tvu) edge (n3 tvu a)
            (n3 tvu a) edge (n4 tvu);
        \draw [tlink]
            (n5 tvu) edge (n6 tvu)
            (n6 tvu) edge[bend right=20] (n3 tvu b)
            (n3 tvu b) edge (x3a);
        \draw [tlink]
            (n4 tvu) edge ++(90:3*\ul);

    \end{tikzpicture}
}    
\caption{
When overlapping paths are identified, Case O1 of the overlap rule shrinks the $S$-tree, and for each path, queues a query at the most recent link with a source $\mney$ or $\mnvy$ node.
(a) A query (only tracing query is shown) from $\mx_2$ passes through $\mx_3$ and finds links from other paths at $\mx_3$. 
For every $S$-tree node at $\mx_3$ that is $\mneu$ or $\mnvu$ type, the anchored links are searched and the first link ($\mlink_{1a}$ and $\mlink_{1b}$) that is connected to a parent $\mney$ or $\mnvy$ node is identified.
(b) Links in the target direction of the first link are searched. Queued queries are removed from the links, and their anchored $S$-tree nodes are converted to $T$-tree $\mnvu$ nodes. A cast is then queued for each first link.
}
\label{fig:overlap1}
\end{figure}
In \textbf{Case O1} (see Fig. \ref{fig:overlap1}), a query passes through a corner that contains other paths,  the $S$-tree is shrunk, and for each overlapping path, a cast is queued on the earliest source link that has no verified cost.
When a query passes through a corner that anchors links from other paths, an overlap is identified.
The purpose of the rule is to verify cost-to-come and discard expensive paths. 
As such, every link that is anchored at an $S$-tree $\mnvu$ or $S$-tree $\mneu$ node at the corner are identified.
For each link, the algorithm moves down the $S$-tree along each path, until the first link that has a source $S$-tree $\mnvy$ or $S$-tree $\mney$ node is found.
The algorithm subsequently moves up the tree, removing queries to avoid data races and re-anchoring the target links to $T$-tree $\mnvu$ nodes.
Finally, a cast is queued at the first link to verify cost for the target links.


\tikzset{
    pics/overlap2/.style={ code={ 
        
        \begin{pgfonlayer}{background}
            \fill [swatch_obs] 
                (1*\ul, 3.5*\ul) coordinate (xb1) -- ++(90:\ul) -- ++(0:2.5*\ul) coordinate (xb3) -- ++(-90:\ul);
            \fill [swatch_obs] 
                (2.5*\ul, 1*\ul) -- ++(90:\ul) -- ++(0:1.5*\ul) coordinate (xa3) -- ++(-90:\ul);
        \end{pgfonlayer}
        \path (4*\ul, 0) coordinate (xsrc);
        \path (0.5*\ul, 7*\ul) coordinate (xtgt);

        \node (nsrc) [svy pt] at (xsrc) {};
        \node (ntgt) [tvu pt] at (xtgt) {};

        \draw [slink] (nsrc) -- ++(-120:2*\ul);
        \draw [tlink] (ntgt) -- ++(180:2*\ul);
    }},
}
\tikzset{
    pics/overlap2i/.style={code={
        \begin{pgfonlayer}{background}
            \fill [swatch_obs] 
                (0, 5.5*\ul) -- ++(90:\ul) -- ++(0:1.5*\ul) -- ++(90:1*\ul) coordinate (xc1) -- ++(0:1*\ul) coordinate (xc2) -- ++(-90:2*\ul) coordinate (xc3);
        \end{pgfonlayer}
    }},
}
\tikzset{
    pics/overlap2ii/.style={code={
        \begin{pgfonlayer}{background}
            \fill [swatch_obs] 
                (0, 5.5*\ul) -- ++(90:\ul) -- ++(0:4*\ul) coordinate (xc2) -- ++(-90:\ul) coordinate (xc3);
        \end{pgfonlayer}
    }},
}

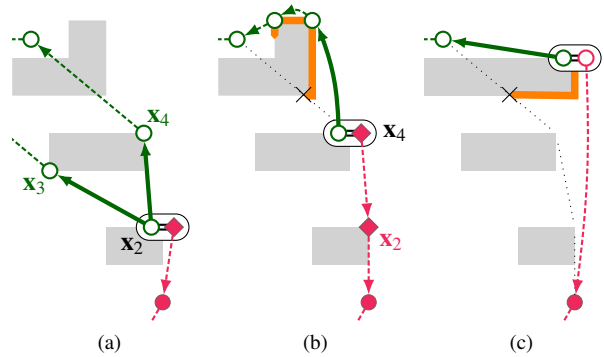
\begin{figure}[!ht]
\centering
\subfloat[\label{fig:overlap2a} ] {%
    \centering
    \begin{tikzpicture}[]
        \clip (0, -\u) rectangle ++(5*\ul, 8.5*\ul);
        \pic at (0, 0) {overlap2};
        \pic at (0, 0) {overlap2i};

        \pic at (xa3) {merge grp={shift={(-4mm, -2.5mm)}}{center:$\mx_2$}{na3 tvu}{tvu pt}{na3 sey}{sey pt}};
        
        \node (nb1) [tvu pt={shift={(-2mm, -2mm)}}{center:$\mx_3$}] at (xb1) {};
        \node (nb3) [tvu pt={shift={(2mm, 2mm)}}{center:$\mx_4$}] at (xb3) {};

        \draw [slink] (na3 sey) edge (nsrc);
        \draw [tqlink] (na3 tvu) edge (nb1);
        \draw [tlink] (nb1) -- ++(140:3*\ul);
        \draw [tqlink] (na3 tvu) edge (nb3);
        \draw [tlink](nb3) edge (ntgt);
    \end{tikzpicture}
} \hfill
\subfloat[\label{fig:overlap2b}] {%
    \centering
    \begin{tikzpicture}[]
        \clip (0, -\u) rectangle ++(5*\ul, 8.5*\ul);
        \pic at (0, 0) {overlap2};
        \pic at (0, 0) {overlap2i};

        \draw [dotted] (nb3) -- (ntgt)
            node (xcol) [coordinate, pos=2/5] {};
        \draw [trace] (xcol) -- (xc3) -- (xc2) -- (xc1) -- ++(-90:\u);
        \node [cross pt] at (xcol) {};
        
        \node (na3) [sey pt={shift={(3mm, -2mm)}}{center:$\mx_2$}] at (xa3) {};
        \node (nc2) [tvu pt={}{}] at (xc2) {};

        \pic at (xb3) {merge grp={shift={(6mm, 0mm)}}{center:$\mx_4$}{nb3 tvu}{tvu pt}{nb3 sey}{sey pt}};
        \node (nc1) [tvu pt] at (xc1) {};

        \draw [slink] 
            (nb3 sey) edge (na3)
            (na3) edge (nsrc);
        \draw [tqlink] (nb3 tvu) 
            edge[bend right=10] 
            (nc2.-45);
        \draw [tlink] 
            (nc2) edge[bend right=45] (nc1)
            (nc1) edge (ntgt);
            
    \end{tikzpicture}
} \hfill
\subfloat[\label{fig:overlap2c}] {%
    \centering
    \begin{tikzpicture}[]
        \clip (0, -\u) rectangle ++(5*\ul, 8.5*\ul);
        \pic at (0, 0) {overlap2};
        \pic at (0, 0) {overlap2ii};

        \draw [dotted] 
            (nsrc) -- (na3) -- (nb3) 
            (nb3) edge
                node (xcol) [coordinate, pos=2/5] {}
                (ntgt);
        \draw [trace] (xcol) -- (xc3) -- (xc2) -- ++(180:\u);
        \node [cross pt] at (xcol) {};
        
        \pic at (xc2) {merge grp={}{}{nc2 tvu}{tvu pt}{nc2 svu}{svu pt}};
        
        \draw [tqlink] 
            (nc2 tvu) edge (ntgt);
        \draw [slink]
            (nc2 svu) edge[bend left=5] (nsrc);

    \end{tikzpicture}
}    
\caption{
    Case O2 of the overlap rule handles queries with expensive cost-to-come paths. 
    (a) After a successful cast to $\mx_2$, the path is found to have a larger cost-to-come than the minimum at $\mx_2$, and the target node is replaced by an $S$-tree $\mney$ node.
    (b) If a cast from an $\mney$ node is successful, the target node is replaced by an $S$-tree $\mney$ node ($\mx_3, \mx_4$). 
    If consecutive $\mney$ nodes have different sides, the path is discarded ($\mx_3$).
    An unsuccessful cast will generate a trace with the same side as the $\mney$ node ($\mx_4$) and call Case O1 when the trace becomes castable.
    (c) The trace resumes normal behavior after all $\mney$ source nodes ($\mx_2, \mx_4$) are pruned from the path.
}
\label{fig:overlap2}
\end{figure}
In \textbf{Case O2}, a successful cast finds an expensive cost-to-come path at the target node's corner,
causing the target node to be replaced by an $S$-tree $\mney$ node.
A subsequent cast from a $\mside$-side $S$-tree $\mney$ node that collides will generate only an $\mside$-side trace (see Fig. \ref{fig:overlap2}). 
Once the trace is able to cast to a target node, Case O1 will be called on its path, and a cast is queued on the most recent link with a parent $\mney$ node.

A subsequent, successful cast from a $S$-tree $\mney$ node will cause the target node of the cast to be replaced by an $S$-tree $\mney$ node regardless of the cost.
If the replacement results in a pair of consecutive $\mney$ nodes with different sides, the path over the nodes will be discarded.

A trace with a $\mney$ or $\mneu$ source node will place  $\mneu$ turning points instead of $\mnvu$ turning points.
Source recursive traces cannot be called in such a trace -- the occupied-sector rule will not be triggered for a trace with the same side as the source node, and a query that follows a $(-\mside)$-sided recursive ang-sec trace will never be able to prune the $\mney$ source node.
If the trace is able to prune all $S$-tree $\mney$ nodes in the source direction, the trace resumes normal behavior.

\tikzset{
    pics/overlap3/.style n args={0}{ code={ 
        \begin{pgfonlayer}{background}
            \fill [swatch_obs] 
                (3*\ul, 1*\ul) -- ++(90:\ul) -- ++(0:2*\ul) coordinate (xa3)  -- ++(-90:\ul);
            \fill [swatch_obs] 
                (1*\ul, 4*\ul) coordinate (xb1) -- ++(90:\ul) -- ++(0:3.5*\ul) coordinate (xb3) -- ++(-90:\ul);
            \fill [swatch_obs] 
                (0.5*\ul, 6.5*\ul) coordinate (xc1) -- ++(90:\ul) -- ++(0:2.5*\ul) coordinate (xc3) -- ++(-90:\ul);    
        \end{pgfonlayer}
        \path (5*\ul, 0) coordinate (xsrc);
        \node (nsrc) [svy pt={}{}] at (xsrc) {};
        \draw [slink] (nsrc) edge ++(-90:2*\ul);
        \node (nsrc2) [svy pt] at (5.8*\ul, 0.5*\ul) {};
    }},
}

\begin{figure}[!ht]
\centering
\subfloat[\label{fig:overlap3a} ] {%
    \centering
    \begin{tikzpicture}[]
        \clip (-\u, -\u) rectangle ++(7.5*\ul, 9*\ul);
        \pic at (0, 0) {overlap3};

        \node [separate={shift={(-6mm, 3mm)}}{center:$\mx_2$}, inner xsep=5mm] at (xa3) {};
        \draw [merge] ($(xa3) + (-2.5mm, 0)$) -- (xa3);
        \node (na3 tvu) [tvu pt, shift={(-3mm, 0)}] at (xa3) {};
        \node (na3 svy a) [svy pt, shift={(0, 0)}] at (xa3) {};
        \node (na3 svy b) [svy pt, shift={(3mm, 0)}] at (xa3) {};

        \node (nb1 svy) [svy pt={shift={(-2mm,-2mm)}}{center:$\mx_3$}] at (xb1) {};
        \node (nb3 svy) [svy pt={shift={(2mm,2mm)}}{center:$\mx_4$}] at (xb3) {};

        \node (nc1 svu) [svu pt={shift={(-2mm,-2mm)}}{center:$\mx_5$}] at (xc1) {};
        \node (nc3 svy) [svy pt={shift={(2mm,2mm)}}{center:$\mx_6$}] at (xc3) {};


        \draw [tlink]
            (na3 tvu) -- ++(170:8*\ul);
        \draw [slink] 
            (na3 svy a) edge[bend right=0] (nsrc);
        \draw [slink] 
            (nb1 svy) +(150:3*\ul) edge (nb1 svy)
            (nb1 svy) edge[bend left=20] (na3 svy b)
            (nc1 svu) +(150:3*\ul) edge (nc1 svu)
            (nc1 svu) edge (nb3 svy)
            (nc3 svy) +(160:3*\ul) edge (nc3 svy)
            (nc3 svy) edge[bend left=15] (nb3 svy)
            (nb3 svy) edge[bend left=10] (na3 svy b)
            (na3 svy b) edge (nsrc2)
            (nsrc2) -- ++(-90:2*\ul);

    \end{tikzpicture}
} \hfill
\subfloat[\label{fig:overlap3b}] {%
    \centering
    \begin{tikzpicture}[]
        \clip (-\u, -\u) rectangle ++(7.5*\ul, 9*\ul);
        \pic at (0, 0) {overlap3};

        \node [separate={shift={(-3.5mm, 3.5mm)}}{center:$\mx_2$}, inner xsep=5mm] at (xa3) {};
        \draw [merge] ($(xa3) + (-2.5mm, 0)$) -- (xa3);
        \node (na3 tvu) [tvu pt, shift={(-3mm, 0)}] at (xa3) {};
        \node (na3 svy) [svy pt, shift={(0, 0)}] at (xa3) {};
        \node (na3 sey) [sey pt, shift={(3mm, 0)}] at (xa3) {};

        \pic at (xb3) {merge grp={shift={(3.5mm, 3mm)}}{center:$\mx_4$}{nb3 tvu}{tvu pt}{nb3 sey}{sey pt}};

        \node (nc1 tvu) [tvu pt={shift={(-2mm,-2mm)}}{center:$\mx_5$}] at (xc1) {};
        \node (nc3 sey) [sey pt={shift={(2mm,2mm)}}{center:$\mx_6$}] at (xc3) {};

        \draw [tlink]
            (na3 tvu) to ++(170:8*\ul)
            (nc1 tvu) to ++(150:3*\ul);
        \draw [tqlink]
            (nb3 tvu) edge (nc1 tvu);
        \draw [slink] 
            (na3 svy) edge[bend right=0] (nsrc);
        \draw [slink] 
            (nc3 sey) +(160:3*\ul) edge (nc3 sey)
            (nc3 sey) edge[bend left=15] (nb3 sey)
            (nb3 sey) edge[bend left=10] (na3 sey)
            (na3 sey) edge (nsrc2)
            (nsrc2) to ++(-90:2*\ul);

    \end{tikzpicture}
}    
\caption{
    Case O3 of the overlap rule handles the case when a cast finds more expensive cost-to-come paths anchored at the same $S$-tree $\mnvy$ node at the destination ($\mx_2$).
    Each expensive path is handled like Case O2. The $S$-tree $\mnvy$ nodes of each expensive path from $\mx_2$ are converted to $\mney$ nodes ($\mx_3, \mx_4, \mx_6$). 
    A link connecting a consecutive pair of $\mney$ nodes with different sides is discarded ($\mx_2$ to $\mx_3$).
    Case O1 is called if an $\mnvu$ is encountered ($\mx_5$), where the $S$-tree is shrunk, target queries are discarded, and a new cast is queued from the first link with a parent $\mney$ node ($\mx_4$ to $\mx_5$).
}
\label{fig:overlap3}
\end{figure}
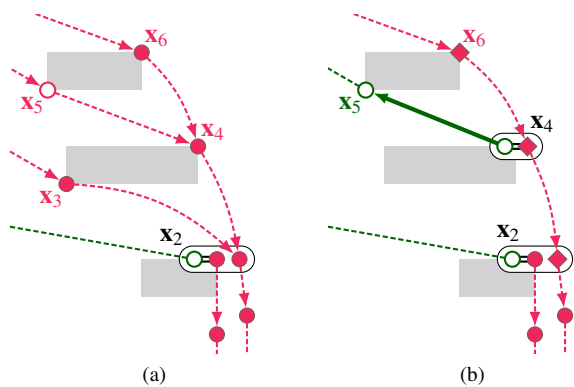
In \textbf{Case O3} (see Fig. \ref{fig:overlap3}), a successful cast finds the cheapest cost-to-come path at the target node's corner.
Paths that pass through a $S$-tree $\mnvy$ node at the corner will have larger costs-to-come than the current path. 
As such, Case O2 is called for every expensive path at the corner, and for every $S$-tree $\mnvy$ node in the target direction along each path.
An $S$-tree $\mnvy$ node along an expensive path will be replaced by an $S$-tree $\mney$ node, and the path will be discarded if it passes through a pair of consecutive $\mney$ nodes with different sides;
an $S$-tree $\mnvu$ node along an expensive path will trigger Case O1 and queue a cast from the most recent $S$-tree $\mney$ node.



\subsubsection{\rtwop{}'s Overlap Rule}
\rtwop{} extends \rtwo{}'s overlap rule with four additional cases to discard searches, and is similar to the G-value pruning from \citep{bib:dps}
\tikzset{
    pics/overlap4/.style n args={0}{ code={ 
        \begin{pgfonlayer}{background}
            \fill [swatch_obs] 
                (3*\ul, 1*\ul) coordinate (xa1) -- ++(90:\ul) -- ++(0:3*\ul) coordinate (xa3) -- ++(-90:\ul) coordinate (xa4);
            \fill [swatch_obs] 
                (1.5*\ul, 3.5*\ul) coordinate (xb1) -- ++(90:\ul) -- ++(0:4*\ul) coordinate (xb3) -- ++(-90:\ul);
            \fill [swatch_obs]
                (1*\ul, 5.5*\ul) coordinate (xc1) -- ++(90:\ul) -- ++(0:2*\ul) coordinate (xc3) -- ++(-90:\ul);
        \end{pgfonlayer}

        \path (1*\ul, 7.5*\ul) coordinate (xtgt);
        \path (7*\ul, 0) coordinate (xsrc);
        \node (nsrc) [svu pt] at (xsrc) {};
        \node (ntgt) [tvy pt] at (xtgt) {};
        \draw [tlink] (ntgt) -- ++(45:2*\ul);
        \draw [slink] (nsrc) -- ++(-90:2*\ul);
        
    }},
}

\begin{figure}[!ht]
\centering
\subfloat[\label{fig:overlap4a} ] {%
    \centering
    \begin{tikzpicture}[]
        \clip (0, -\u) rectangle ++(7.5*\ul, 8.5*\ul);
        \pic at (0, 0) {overlap4};

        \pic at (xc1) {merge grp={shift={(-3.5mm, -3mm)}}{center:$\mx_2$}{nc1 tey}{tey pt}{nc1 svu}{svu pt}};

        \node (nb1) at (xb1) [svu pt={shift={(-2mm, -2mm)}}{center:$\mx_4$}] {};
        \node (nb3) at (xb3) [svu pt={shift={(2mm, 2mm)}}{center:$\mx_3$}] {};

        \draw [sqlink] 
            (nc1 svu) edge (nb1)
            (nc1 svu) edge (nb3);
        \draw [slink]
            (nb1) edge (nsrc)
            (nb3) edge ++(-45:4*\ul);
        \draw [tlink]
            (nc1 tey) edge (ntgt);

    \end{tikzpicture}
} \hfill
\subfloat[\label{fig:overlap4b}] {%
    \centering
    \begin{tikzpicture}[]
        \clip (0, -\u) rectangle ++(7.5*\ul, 8.5*\ul);
        \pic at (0, 0) {overlap4};

        \begin{pgfonlayer}{background}
            \draw [dotted] (xsrc) -- (nb1)
                node (xcol) [coordinate, pos=2/7] {};
            \draw [trace] (xcol) -- (xa4) -- (xa3) -- ++(180:\u);
            \draw [trace] (xcol) -- (xa1) -- ++(90:\u);
            \node at (xcol) [cross pt] {};
        \end{pgfonlayer}
        
        \node (nc1) at (xc1) [tey pt={shift={(-2mm, -2mm)}}{center:$\mx_2$}] {};
        \node (nb1) at (xb1) [tey pt={shift={(-2mm, -2mm)}}{center:$\mx_4$}] {};

        \pic at (xa1) {merge grp={}{}{na1 tvu}{tvu pt}{na1 svu}{svu pt}};
        \pic at (xa3) {merge grp={}{}{na3 tvu}{tvu pt}{na3 svu}{svu pt}};

        \draw [slink]
            (na1 svu) edge[bend right=10] (nsrc)
            (na3 svu) edge (nsrc);
        \draw [tlink]
            (nc1) edge (ntgt)
            (nb1) edge (nc1);
        \draw [tqlink]
            (na1 tvu) edge (nb1)
            (na3 tvu) edge[bend right=10] (nb1);

    \end{tikzpicture}
}    
\caption{
    Case O4 extends Case O2 for cost-to-go.
    (a) In a successful cast, Case O4 is triggered when the cost-to-go is larger than the minimum at the source node, causing the source node to be replaced by a $T$-tree $\mney$ node.
    A successful cast to a $T$-tree $\mney$ node will cause the cast's source node to be replaced by an $\mney$ node regardless of the cost ($\mx_3, \mx_4$).
    A consecutive pair of $\mney$ nodes with different sides will cause the path passing through the nodes to be discarded ($\mx_3$).
    (b) Unlike Case O2, there are no restrictions to traces, and Case O1 will not be called.
}
\label{fig:overlap4}
\end{figure}
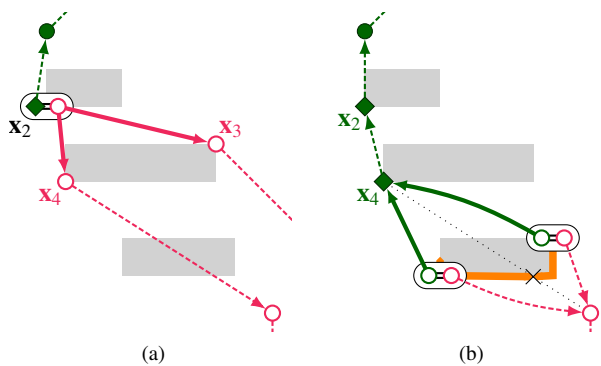
\textbf{Case O4} extends Case O2 for cost-to-go.
If the target node of a successful cast is a $T$-tree $\mney$ or $\mnvy$ typed, the source node's cost-to-go is examined.
If the cost-to-go is larger than the minimum so far, the source node will be replaced by a $T$-tree $\mney$ node.
Unlike Case O2, Case O4 does not restrict traces, and Case O1 will not be called after a trace.

\tikzset{
    pics/overlap5/.style n args={1}{ code={ 
        \begin{pgfonlayer}{background}
            \fill [swatch_obs] 
                (4*\ul, 0*\ul) coordinate (xa1) -- ++(90:\ul) -- ++(0:2.5*\ul) coordinate (xa3) -- ++(-90:\ul) coordinate (xa4);
            \fill [swatch_obs] 
                (2.5*\ul, 2.5*\ul) coordinate (xb1) -- ++(90:\ul) -- ++(0:3.5*\ul) coordinate (xb3) -- ++(-90:\ul);
            \fill [swatch_obs]
                (2*\ul, 5.5*\ul) coordinate (xc1) -- ++(90:\ul) -- ++(0:2*\ul) coordinate (xc3) -- ++(-90:\ul);
        \end{pgfonlayer}

        \node [separate={shift={(5mm, -3.5mm)}}{center:$\mx_2$}, inner xsep=5mm] at (xc1) {};
        \draw [merge] ($(xc1) + (2.5mm, 0)$) -- (xc1);
        \node (nc1 tvy) at (xc1) [tvy pt, shift={(0, 0)}] {};
        \node (nc1 svu) at (xc1) [svu pt, shift={(3mm, 0)}] {};
        \node (nc1 tey) at (xc1) [#1, shift={(-3mm, 0)}]  {};
        
        \path (2*\ul, 7.5*\ul) coordinate (xtgt);
        \node (ntgt) at (xtgt) [tvy pt] {};
        \node (ntgt2) at (1.2*\ul, 7*\ul) [tvy pt] {};
        \draw [tlink] 
            (ntgt) edge ++(90:2*\ul)
            (ntgt2) edge ++(90:2*\ul)
            (nc1 tvy) edge (ntgt);
        \draw [slink]
            (nc1 svu) -- ++(-10:8*\ul);

        \node (nb1) at (xb1) [#1={shift={(-2mm,-2mm)}}{center:$\mx_4$}] {};
        \node (na3) at (xa3) [tvu pt={shift={(2mm,2mm)}}{center:$\mx_5$}] {};
        \node (na1) at (xa1) [#1={shift={(-2mm,-2mm)}}{center:$\mx_6$}] {};
        \draw [tlink]
            (na3) +(-30:6*\ul) edge (na3)
            (na1) +(-20:6*\ul) edge (na1);
        \draw [tlink]
            (na1) edge[bend left=10] (nb1)
            (na3) edge (nb1)
            (nb1) edge[bend left=10] (nc1 tey)
            (nc1 tey) edge (ntgt2);
    }},
}

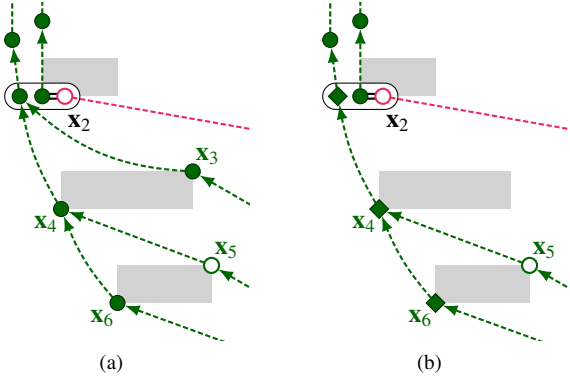
\begin{figure}[!ht]
\centering
\subfloat[\label{fig:overlap5a} ] {%
    \centering
    \begin{tikzpicture}[]
        \clip (0, -\ul) rectangle ++(7.5*\ul, 9*\ul);
        \pic at (0, 0) {overlap5={tvy pt}};

        \node (nb3) at (xb3) [tvy pt={shift={(2mm,2mm)}}{center:$\mx_3$}] {};
        \draw [tlink]
            (nb3) +(-30:6*\ul) edge (nb3);
        \draw [tlink]
            (nb3) edge[bend left=20] (nc1 tey);
    \end{tikzpicture}
} \hfill
\subfloat[\label{fig:overlap5b}] {%
    \centering
    \begin{tikzpicture}[]
        \clip (0, -\ul) rectangle ++(7.5*\ul, 9*\ul);
        \pic at (0, 0) {overlap5={tey pt}};
    \end{tikzpicture}
}    
\caption{
Case O5 extends Case O3 to cost-to-go.
(a) A successful cast finds the smallest cost-to-go at the source node's corner ($\mx_2$). More expensive cost-to-go paths at $\mx_2$ are scanned, and the relevant $T$-tree $\mnvy$ nodes along the path are converted to $T$-tree $\mney$ nodes.
(b) A path will be discarded if it passes through a consecutive pair of $\mney$ nodes with different sides ($\mx_3$). 
Unlike Case O3, Case O5 does not call Case O1.
}
\label{fig:overlap5}
\end{figure}
\textbf{Case O5} extends Case O3 for cost-to-go. 
If a successful cast results in the cheapest cost-to-go to the cast's source node, other paths that pass through the target corner's $T$-tree $\mnvy$ node will be modified based on Case O4.
Unlike Case O3, Case O1 will not be called by Case O5.


\begin{figure}[!ht]
\centering
\subfloat[\label{fig:overlap6and7a} ] {%
    \centering
    \begin{tikzpicture}[]
        \clip (-\u, -\u) rectangle ++(7*\ul, 7*\ul);

        \fill [swatch_obs]
            (1*\ul, 5*\ul) coordinate (xa) -- ++(90:\ul) -- ++(0:3*\ul) -- ++(-90:\ul);

        \node at (xa) [separate={shift={(-3.5mm, 3.5mm)}}{center:$\mx_a$}, inner xsep=5mm] {};
        \node (na sey a) at (xa)  [sey pt, shift={(-3mm, 0)}] {};
        \node (na svy)   at (xa)  [svy pt, shift={(0, 0)}] {};
        \node (na sey b) at (xa)  [sey pt, shift={(3mm, 0)}] {};
        \node (nc) at (3*\ul, 3*\ul) [svy pt={shift={(2mm,2mm)}}{center:$\mx_c$}] {};
        \node (ne1) at (2*\ul, \ul) [svy pt={shift={(0,-3mm)}}{center:$\mx_{e,1}$}] {};
        \node (ne2) at (5.5*\ul, 0.5*\ul) [svy pt={shift={(0,-3mm)}}{center:$\mx_{e,2}$}] {};
        \node (ne3) at (5*\ul, 4*\ul) [svy pt={shift={(2mm,-4mm)}}{center:$\mx_{e,3}$}] {};
        \node  at (ne3) [prune] {};

        \draw [dotted]
            (ne2) -- (nc);
        \draw [slink]
            (na sey a) edge (ne1)
            (na sey a) edge[bend right=5] (ne2)
            (na sey b) edge (ne3)
            (na svy) edge (nc);
    \end{tikzpicture}
} \hfill
\subfloat[\label{fig:overlap6and7b}] {%
    \centering
    \begin{tikzpicture}[]
        \clip (-\u, -\u) rectangle ++(7*\ul, 7*\ul);

        \fill [swatch_obs]
            (2*\ul, 0*\ul) -- ++(90:\ul) -- ++(0:3*\ul) coordinate (xa) -- ++(-90:\ul); 

        \node at (xa) [separate={shift={(3mm, -4mm)}}{center:$\mx_a$}, inner xsep=5mm] {};
        \node (na tey b) at (xa)  [tey pt, shift={(-3mm, 0)}] {};
        \node (na tvy)   at (xa)  [tvy pt, shift={(0, 0)}] {};
        \node (na tey a) at (xa)  [tey pt, shift={(3mm, 0)}] {};
        \node (nc) at (3*\ul, 3*\ul) [tvy pt={shift={(-2mm,-2mm)}}{center:$\mx_c$}] {};
        \node (ne1) at (4*\ul, 5*\ul) [tvy pt={shift={(3mm, 2mm)}}{center:$\mx_{e,1}$}] {};
        \node (ne2) at (0.5*\ul, 5.5*\ul) [tvy pt={shift={(-2mm,-3mm)}}{center:$\mx_{e,2}$}] {};
        \node (ne3) at (1*\ul, 2*\ul) [tvy pt={shift={(-2mm,-4mm)}}{center:$\mx_{e,3}$}] {};
        \node  at (ne3) [prune] {};

        \draw [dotted]
            (ne2) -- (nc);
        \draw [tlink]
            (na tey a) edge (ne1)
            (na tey a) edge[bend right=5] (ne2)
            (na tey b) edge (ne3)
            (na tvy) edge (nc);
    \end{tikzpicture}
}    
\caption{
    Case O6 is shown in (a) and Case O7 in (b).
    The cheapest path passes through $\mx_a$ and $\mx_c$. 
    More expensive paths pass through $\mx_a$ via $\mx_{e,1}$, $\mx_{e,2}$, or $\mx_{e,3}$. 
    The expensive path from $\mx_{e,3}$ is discarded as it does not satisfy $\mtdir\mside(\mv_e \times \mv_c) < 0$.
}
\label{fig:overlap6and7}
\end{figure}
\textbf{Cases O6 and O7} discard more expensive paths at a corner if the local path segment lies closer to the obstacle than the cheapest path (see Fig. \ref{fig:overlap6and7}). Case O6 examines cost-to-come, while Case O7 examines cost-to-go.
For both cases, suppose a turning point $\mx_a$ has line-of-sight to turning points at $\mx_e$ and $\mx_c$, and the paths $(\mx_0, \cdots, \mx_e, \mx_a)$ and $(\mx_0, \cdots, \mx_c, \mx_a)$ are unobstructed. 
$\mx_0$ is the position of the root node, and all points on the path belong to one tree.
Let the path passing through $\mx_e$ be more expensive, and the path passing through $\mx_c$ be cheaper.
The condition for discarding the more expensive path is
\begin{equation}
    \mtdir\mside(\mv_e \times \mv_c) < 0, \label{eq:newex}
\end{equation}
where $\mv_e = \mx_a - \mx_e$, $\mv_c = \mx_a - \mx_e$.

\begin{theorem}
Let a turning point at $\mx_0$ be an $\mtdir$-tree, $\mside$-sided, $\mnvy$-node. 
The turning point has the cheapest cost-to-come or cost-to-go if it is a $(\mtdir=S)$-tree or $(\mtdir=T)$-tree node respectively.
Suppose the shortest known, unobstructed path to $\mx_a$ from the root node of the $\mtdir$-tree passes through $\mx_c$ immediately before reaching $\mx_a$; 
and a longer unobstructed path from the root node passes through $\mx_e$ immediately before reaching $\mx_a$. 
Let $\mv_e = \mx_a - \mx_e$ and $\mv_c = \mx_a - \mx_c$.
If Eq. \ref{eq:newex} is satisfied, the longer path can be discarded and \rtwop{} remains complete.
\label{thm:newex}
\end{theorem}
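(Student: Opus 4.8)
The plan is to establish the pruning's safety with an exchange argument: I will show that any shortest start-to-goal path that uses the discarded segment $\mx_e \to \mx_a$ can be replaced by an unobstructed path of no greater length that enters $\mx_a$ through the retained predecessor $\mx_c$, and that this replacement is a path \rtwop{} can still reconstruct. Since the root is the start for the $S$-tree and the goal for the $T$-tree, I would prove the $\mtdir = S$ case (cost-to-come) in full and recover the $\mtdir = T$ case (cost-to-go) by the symmetric reversal that sends $\mside \mapsto -\mside$ and flips the orientation of the planar cross product; that reversal is exactly the factor $\mtdir$ appearing in Eq. \ref{eq:newex}.

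First, I would fix an optimal path $P^* = (\mx_0, \dots, \mx_e, \mx_a, \mx_n, \dots)$ that enters $\mx_a$ through the discarded link, and form the candidate $P'$ that follows the cheaper known path from $\mx_0$ to $\mx_a$ through $\mx_c$ and then continues along the suffix $(\mx_a, \mx_n, \dots)$ of $P^*$. Because the path through $\mx_c$ is unobstructed by hypothesis and $\mx_a \to \mx_n$ is unobstructed as a segment of $P^*$, the candidate $P'$ is collision-free at the junction $\mx_a$ and overall; and because the prefix through $\mx_c$ has the smaller cost-to-come, $\lvert P' \rvert \le \lvert P^* \rvert$. The one thing a pure cost argument does not supply is that $\mx_a$ remains a legal $\mside$-sided turning point when entered from $\mx_c$: \rtwop{} only follows continuations that wrap the corner at $\mx_a$ on side $\mside$, so I must show that every continuation $\mx_n$ that is a taut $\mside$-turn out of $\mv_e$ is also a taut $\mside$-turn out of $\mv_c$.

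The heart of the proof is therefore a local, angular lemma at $\mx_a$. For an incoming direction $\mv$ into the $\mside$-sided corner $\mx_a$, I would characterise the cone $C(\mv)$ of outgoing directions $\mathbf{w} = \mx_n - \mx_a$ for which the bend from $\mv$ to $\mathbf{w}$ is a locally taut turn that hugs the obstacle at $\mx_a$ on side $\mside$. One boundary of $C(\mv)$ is fixed by the obstacle edge at $\mx_a$ and is independent of $\mv$, while the other rotates with $\mv$ (the no-turn limit $\mathbf{w} = \mv$). Reading angular order through the signed cross product, the containment $C(\mv_e) \subseteq C(\mv_c)$ holds exactly when $\mv_c$ is rotated relative to $\mv_e$ in the sense that widens the admissible cone, and I would verify that this sense is precisely $\mside(\mv_e \times \mv_c) < 0$ for the $S$-tree, with the $\mtdir$ factor supplying the reversal for the $T$-tree. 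Under Eq. \ref{eq:newex} the continuation $\mx_n$ of $P^*$ then lies in $C(\mv_e) \subseteq C(\mv_c)$, so $P'$ realises a genuine, search-reachable $\mside$-turn at $\mx_a$ of no greater length, and discarding $\mx_e \to \mx_a$ loses no optimal path.

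Finally I would lift the local result to global completeness by a loop invariant over the execution of \rtwop{}: the retained links always contain a cheapest unobstructed path, with correct turning sides, to every turning point lying on some shortest start-to-goal path. Each application of Case O6 or O7 discards $\mx_e \to \mx_a$ only when Eq. \ref{eq:newex} holds, and the exchange above shows the invariant survives the discard, since any shortest path through $\mx_e \to \mx_a$ is matched by an equally short retained path through $\mx_c \to \mx_a$. The main obstacle I expect is the local lemma: turning the geometric statement ``$\mv_c$ widens the taut-turn cone'' into the exact inequality of Eq. \ref{eq:newex} uniformly across all four $(\mtdir, \mside)$ combinations, and handling the degenerate cases where $\mv_c$, $\mv_e$, or the obstacle edge become collinear, or where $\mx_a$ is not in fact a forced turning point (in which case $P'$ admits a strict shortcut and the conclusion is immediate).
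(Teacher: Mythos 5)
Your exchange argument treats the discarded object as an edge of a final taut path: you show that any shortest path that \emph{turns} at $\mx_a$ with predecessor $\mx_e$ can be rerouted through $\mx_c$, using a cone-containment lemma to keep the junction at $\mx_a$ taut. That is the standard way to justify dominance pruning in a visibility graph, and your reading of Eq.~\ref{eq:newex} as ``$\mv_e$ hugs the obstacle, so its taut continuation cone is contained in that of $\mv_c$'' matches the paper's informal gloss. But in \rtwop{} the discarded link is not a committed path edge; it is a search state with delayed line-of-sight, and its descendants include paths in which the node at $\mx_a$ is later \emph{pruned} (the continuation is not taut at $\mx_a$), then the node at $\mx_e$ is pruned exposing some $\mx_d$, and so on. Those descendants never turn at $\mx_a$ at all, so your local lemma at $\mx_a$ says nothing about them, and your closing ``loop invariant'' is asserted rather than shown to survive their loss. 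This is precisely the content the paper's proof is built around: Cases 1.1--1.2 handle continuation past a pruned $\mx_a$, Cases 2.1--2.2 handle pruning of $\mx_e$, Case 3 bounds how far pruning can go (for the $S$-tree by exhibiting an unprunable $(-\mside)$-sided node protected by its sector-ray; for the $T$-tree by falling back on completeness), and Case 4 handles the path looping around the root. In each case the argument is not an exchange at $\mx_a$ but a crossing argument: Eq.~\ref{eq:newex} traps every descendant of the discarded state between the obstacle and the retained path, so it must intersect the retained path somewhere, and a cost-of-intersection contradiction ($c_{i \mid c} < c_{i \mid e}$, propagated recursively along the cheaper path and along colinear pruned segments) shows it is already more expensive there.

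Concretely, then, the gap is this: you must either (i) prove that every path discoverable from the search state $(\mx_e,\mx_a)$ --- including those in which $\mx_a$, $\mx_e$, and further ancestors are pruned --- is dominated by a path discoverable from the retained state, which forces you into the paper's case analysis on crossings and on where pruning terminates; or (ii) prove that any such descendant is independently discoverable from some other retained link, which is false in general since the cast/trace from $\mx_a$ is the only mechanism that would have generated those descendants. The local angular lemma and the $S$/$T$ symmetry via the $\mtdir$ factor are fine as far as they go, but without the crossing/pruning analysis the theorem's completeness claim is not established.
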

\begin{proof}
In this proof, some terms are shortened for brevity. \textit{Cost} refers to the cost-to-come or cost-to-go, and \textit{path} refers to the unobstructed path from the root node of the $\mtdir$-tree.
Let $c_{j\mid k}$ be the cost of a path $(\mx_0, \cdots, \mx_k, \mx_j)$ at $\mx_j$ that passes through $\mx_k$ before reaching $\mx_j$, and there is line-of-sight between $\mx_j$ and $\mx_k$.
$\mx_0$ is the position of the root node of the $\mtdir$-tree.
If the nodes are in the $S$-tree, $\mx_0$ is the start node and the cost is cost-to-come; if the nodes are in the $T$-tree, $\mx_0$ is the goal node and the cost is cost-to-go.


\begin{figure}[!ht]
\centering
\subfloat[\label{fig:thm1case1a} ] {%
    \centering
    \begin{tikzpicture}[]
        \clip (-\u, 0) rectangle ++(7.5*\ul, 6*\ul);
        \fill [swatch_obs] (\ul, 5*\ul) coordinate (xa) rectangle ++(3*\ul, \ul);

        \node (na) at (xa) [vy pt={shift={(-2mm, 3mm)}}{center:$\mx_a$}{}] {};
        \node (nc) at (2*\ul, 2*\ul) [vy pt={shift={(-2mm, -2mm)}}{center:$\mx_c$}{}] {};
        \node (ne) at (4*\ul, 2.5*\ul) [vy pt={shift={(2mm,3mm)}}{center:$\mx_e$}{}] {};

        \draw [link] (na) -- (nc)
            node (xi) [coordinate, pos=0.4] {};
        \path (ne) -- (xi)
            node (xf) [coordinate, pos=3] {};
        \draw [link] (xf) -- (ne);
        \node (ni) at (xi) [circle, minimum size=1mm, inner sep=0, fill=black, label={[shift={(-2mm, -2mm)}] center:$\mx_i$}] {};
        \draw [link] 
            (ne) edge ++(-10:5*\ul)
            (nc) edge ++(-60:5*\ul);
        \draw [dotted] (ne) -- (na);
    \end{tikzpicture}
} \hfill
\subfloat[\label{fig:thm1case1b}] {%
    \centering
    \begin{tikzpicture}[]
        \clip (-\u, 0) rectangle ++(7.5*\ul, 6*\ul);
        \fill [swatch_obs] (\ul, 5*\ul) coordinate (xa) rectangle ++(3*\ul, \ul);

        \node (na) at (xa) [vy pt={shift={(-2mm, 3mm)}}{center:$\mx_a$}{}] {};
        \node (nc) at (1*\ul, 3.5*\ul) [vy pt={shift={(-3mm, 2mm)}}{center:$\mx_c$}{}] {};
        \node (ncc) at (2.5*\ul, 0.5*\ul) [vy pt={}{}{}] {};
        \node (ne) at (4*\ul, 1.5*\ul) [vy pt={shift={(2mm,3mm)}}{center:$\mx_e$}{}] {};

        \draw [link] (na) -- (nc);
        \draw [link] (nc) -- (ncc)
            node (xi) [coordinate, pos=0.4] {};
        \path (ne) -- (xi)
            node (xf) [pos=3, coordinate] {};
        \draw [dotted] 
            (ne) edge (nc)
            (ne) edge (na);
        \draw [link] 
            (xf) edge (ne);
        \node (ni) at (xi) [circle, minimum size=1mm, inner sep=0, fill=black, label={[shift={(-2mm, -2mm)}] center:$\mx_i$}] {};
        \draw [link] 
            (ne) edge ++(-10:5*\ul)
            (ncc) edge ++(-60:5*\ul);
    \end{tikzpicture}
}    
\caption{
Theorem \ref{thm:newex}'s Case 1.1 is shown in (a) and 1.2 is shown in (b).
In both cases, if it is more expensive to reach $\mx_a$ for a query $\mquery_e$ from $\mx_e$, than another query $\mquery_c$ from $\mx_c$, then the query $\mquery_e$ will find a longer path to $\mx_i$ than $\mquery_c$.
$\mx_i$ lies along the shorter path found by $\mquery_c$.
}
\label{fig:thm1case1}
\end{figure}

In \textbf{Case 1.1}, the expensive query that passes through $\mx_e$ continues past $\mx_a$, causing the node at $\mx_a$ to be pruned, and the resulting path to intersect the cheaper path segment $(\mx_c, \mx_a)$. 
From a proof of contradiction, the resulting path is shown to be expensive if it intersects the cheaper path segment $(\mx_c, \mx_a)$.
Let the point of intersection be $\mx_i$.
The segment $(\mx_e, \mx_i)$ is assumed to be unobstructed, as this is the shortest possible distance from $\mx_e$ to $\mx_i$ on $(\mx_c, \mx_a)$.
If $c_{i \mid c} \ge c_{i \mid e}$, then $c_{a \mid i} + c_{i \mid c} \ge c_{a \mid i} + c_{i \mid e}$, which is a contradiction as $c_{a \mid c} < c_{a \mid e}$.
As such, $c_{i \mid c} < c_{i \mid e}$, and a longer, unobstructed path found at $\mx_a$ that intersects the cheaper path segment $(\mx_c, \mx_a)$ has to be expensive.
Case 1.1 is illustrated in Fig. \ref{fig:thm1case1a}.

In \textbf{Case 1.2}, the longer path intersects the shorter path at the other segments beyond $\mx_c$. 
From Case 1.1, $c_{c \mid c} < c_{c \mid e}$ and it is costlier to reach $\mx_c$ from $\mx_e$. 
By applying the proof of contradiction recursively over the segments beyond $\mx_c$, any path from $\mx_e$ can be shown to be longer when it arrives at the intersection with the shorter path.
Case 1.2 is illustrated in Fig. \ref{fig:thm1case1b}.

\begin{figure}[!ht]
\centering
\subfloat[\label{fig:thm1case2a} ] {%
    \centering
    \begin{tikzpicture}[]
        \clip (-\u, 0) rectangle ++(7.5*\ul, 6*\ul);
        \fill [swatch_obs] (\ul, 5*\ul) coordinate (xa) rectangle ++(3*\ul, \ul);

        \node (na) at (xa) [vy pt={shift={(-2mm, 3mm)}}{center:$\mx_a$}{}] {};
        \node (nc) at (2.5*\ul, 0.5*\ul) [vy pt={shift={(-3mm, 0)}}{center:$\mx_c$}{}] {};
        \node (ne) at (4*\ul, 3*\ul) [vy pt={shift={(2mm,2mm)}}{center:$\mx_e$}{}] {};

        \draw [link] (na) -- (nc)
            node (xj) [coordinate, pos=0.3] {}
            node (xi) [coordinate, pos=0.7] {};
        \path (ne) -- (xj)
            node (nd) [pos=-0.8, vy pt={shift={(2mm,-3mm)}}{center:$\mx_d$}{}] {}
            node (xej) [pos=3, coordinate] {};
        \path (nd) -- (xi)
            node (xf) [coordinate, pos=3] {};
        \draw [dotted] 
            (nd) edge (xej)
            (ne) edge (na);
        \draw [link] 
            (xf) edge (nd);
        \node (ni) at (xi) [circle, minimum size=1mm, inner sep=0, fill=black, label={[shift={(3mm, 3mm)}] center:$\mx_i$}] {};
        \node (nj) at (xj) [circle, minimum size=1mm, inner sep=0, fill=black, label={[shift={(-2mm, -2mm)}] center:$\mx_j$}] {};
        \draw [link] 
            (nd) edge ++(30:5*\ul)
            (nc) edge ++(-60:5*\ul);
    \end{tikzpicture}
} \hfill
\subfloat[\label{fig:thm1case2b}] {%
    \centering
    \begin{tikzpicture}[]
        \clip (-\u, 0) rectangle ++(7.5*\ul, 6*\ul);
        \fill [swatch_obs] (\ul, 5*\ul) coordinate (xa) rectangle ++(3*\ul, \ul);

        \node (na) at (xa) [vy pt={shift={(-2mm, 3mm)}}{center:$\mx_a$}{}] {};
        \node (nc) at (1*\ul, 4*\ul) [vy pt={shift={(-3mm, 1mm)}}{center:$\mx_c$}{}] {};
        \path (3*\ul, -0.5*\ul) coordinate (xcc);
        \node (ne) at (4*\ul, 2.5*\ul) [vy pt={shift={(2mm,2mm)}}{center:$\mx_e$}{}] {};

        \draw [link] (na) -- (nc);
        \draw [link] (nc) -- (xcc)
            node (xj) [coordinate, pos=0.2] {}
            node (xi) [coordinate, pos=0.55] {};
        \path (ne) -- (xj)
            node (nd) [pos=-0.8, vy pt={shift={(2mm,-3mm)}}{center:$\mx_d$}{}] {}
            node (xej) [pos=3, coordinate] {};
        \path (nd) -- (xi)
            node (xf) [coordinate, pos=3] {};
        \draw [dotted] 
            (nd) edge (xej)
            (ne) edge (nc)
            (ne) edge (na);
        \draw [link] 
            (xf) edge (nd);
        \node (ni) at (xi) [circle, minimum size=1mm, inner sep=0, fill=black, label={[shift={(3mm, -1.5mm)}] center:$\mx_i$}] {};
        \node (nj) at (xj) [circle, minimum size=1mm, inner sep=0, fill=black, label={[shift={(-2mm, -2mm)}] center:$\mx_j$}] {};
        \draw [link] 
            (nd) edge ++(30:5*\ul);
    \end{tikzpicture}
}    
\caption{
Theorem \ref{thm:newex}'s Case 1.1 is shown in (a) and 1.2 is shown in (b).
In both cases, if it is more expensive to reach $\mx_a$ for a query $\mquery_e$ from $\mx_e$, than another query $\mquery_c$ from $\mx_c$, then the query $\mquery_e$ will find a longer path to $\mx_i$ than $\mquery_c$.
$\mx_i$ lies along the shorter path found by $\mquery_c$.
}
\label{fig:thm1case2}
\end{figure}

Consider the cases where the node at $\mx_e$ is subsequently pruned, causing a node at $\mx_d$ to be exposed. 
For \textbf{Case 2.1}, let the intersection of the line colinear to $(\mx_d, \mx_e)$ with the cheaper path segment $(\mx_c, \mx_a)$ be at the point $\mx_j$; and the intersection of the path with the segment $(\mx_c, \mx_a)$ be at $\mx_i$.
From Case 1.1, $c_{j \mid c} < c_{j \mid e}$, and since $\mx_d$,  $\mx_e$, amd $\mx_j$ are colinear, $c_{j \mid c} < c_{j \mid d}$.
By applying a proof of contradiction, $c_{i \mid c} < c_{i \mid d}$, and
any subsequent path from $\mx_d$ that crosses $(\mx_c, \mx_a)$ will be longer.
Case 2.1 is illustrated in Fig. \ref{fig:thm1case2a}.

Consider \textbf{Case 2.2}, where the longer path from $\mx_d$ intersects the shorter path beyond $\mx_c$. By applying proofs of contradictions from Cases 1.1, 1.2 and 2.1, any subsequent path from $\mx_d$ that crosses the shorter path will be longer.
Case 2.2 is illustrated in Fig. \ref{fig:thm1case2b}.

Consider \textbf{Case 3}, where more nodes are pruned from the longer path.
Repeating the proofs of Cases 2.1 and 2.2, any path originating from the pruned path will be longer at the intersection with the shorter path, provided that pruning stops at a node before the root node.
Case 3 is applicable for $S$-tree nodes as the pruning of the longer path will stop at a $(-\mside)$-sided node.
Case 3 is not applicable for $T$-tree nodes, but is admissible to discard a more expensive cost-to-go path at $\mx_a$ as \rtwop{} is complete.

\begin{figure}[!ht]
\centering
\subfloat[\label{fig:thm1case3Aa} ] {%
    \centering
    \begin{tikzpicture}[]
        \clip (-\u, -\u) rectangle ++(7.5*\ul, 6.5*\ul);

        \node (n0) at (6.5*\ul, 0*\ul) [vy pt={shift={(0,3mm)}}{center:$\mx_0$}{}] {};
        \fill [swatch_obs] (\ul, 5*\ul) coordinate (xa) rectangle ++(\ul, \ul);
        \fill [swatch_obs] (2*\ul, 3.5*\ul) coordinate (xe) rectangle ++(\ul, \ul);        
        \fill [swatch_obs] (3*\ul, 2*\ul) coordinate (xd) rectangle ++(\ul, \ul);        
        \fill [swatch_obs] (4.25*\ul, 1*\ul) coordinate (x1) rectangle ++(\ul, \ul);
        \path (xe) -- (xa)
            node (xae) [coordinate, pos=3] {};
        \path (x1) -- (xd)
            node (xd1) [coordinate, pos=7] {};
        \path (n0) -- (x1)
            node (x10) [coordinate, pos=5] {};
        \draw [dotted]
            (xae) edge (xa)
            (xd1) edge (xd)
            (x10) edge (x1);

        \node (na) at (xa) [vy pt={shift={(2mm, 2mm)}}{center:$\mx_a$}{}] {};
        \node (ne) at (xe) [vy pt={shift={(2mm, 2mm)}}{center:$\mx_e$}{}] {};
        \node (nd) at (xd) [vy pt={shift={(2mm, 2mm)}}{}{}] {};
        \node (n1) at (x1) [vy pt={}{}{}] {};
        \node (nc) at (1.5*\ul, 1*\ul) [vy pt={shift={(-2mm,-2mm)}}{center:$\mx_c$}{}] {};
            
        \draw [link]
            (na) edge (ne)
            (ne) edge (nd)
            (nd) edge (n1)
            (n1) edge (n0);
        \draw [link]
            (na) edge (nc)
            (nc) edge[bend right=15] (n0);
        
    \end{tikzpicture}
} \hfill
\subfloat[\label{fig:thm1case3Ab}] {%
    \centering
    \begin{tikzpicture}[]
        \clip (-\u, -\u) rectangle ++(7.5*\ul, 6.5*\ul);

        \node (n0) at (6*\ul, 0*\ul) [vy pt={shift={(-3mm,-1mm)}}{center:$\mx_0$}{}] {};
        \fill [swatch_obs] (0, 5*\ul) coordinate (xa) rectangle ++(\ul, \ul);
        \fill [swatch_obs] (1.5*\ul, 4*\ul) coordinate (xe) rectangle ++(\ul, \ul);        
        \fill [swatch_obs] (3*\ul, 3.5*\ul) coordinate (xd) rectangle ++(\ul, \ul);        
        \fill [swatch_obs] (6*\ul, 3*\ul) coordinate (x1) rectangle ++(-\ul, -\ul);
        \path (xe) -- (xa)
            node (xae) [coordinate, pos=3] {};
        \path (xd) -- (xe)
            node (xed) [coordinate, pos=3] {};
        \path (x1) -- (xd)
            node (xd1) [coordinate, pos=7] {};
        \draw [dotted]
            (xae) edge (xa)
            (xed) edge (xe)
            (xd1) edge (xd);

        \node (na) at (xa) [vy pt={shift={(2mm, 2mm)}}{center:$\mx_a$}{}] {};
        \node (ne) at (xe) [vy pt={shift={(2mm, 2mm)}}{center:$\mx_e$}{}] {};
        \node (nd) at (xd) [vy pt={shift={(2mm, 2mm)}}{}{}] {};
        \node (n1) at (x1) [vy pt={shift={(-3mm, -2mm)}}{center:$\mnode_{-\mside}$}{}] {};
        \node (nc) at (2*\ul, 2.5*\ul) [vy pt={shift={(-2mm,-2mm)}}{center:$\mx_c$}{}] {};
            
        \draw [link]
            (na) edge (ne)
            (ne) edge (nd)
            (nd) edge (n1)
            (n1) edge (n0);
        \draw [link]
            (na) edge (nc)
            (nc) edge(n0);
        
    \end{tikzpicture}
}    
\caption{
In Case 3 of Theorem 1, a node $\mnode_{-\mside}$ is shown to exist on the longer path.
(a) If $\mnode_{-\mside}$ does not exist, the longer path will bend monotonically to one side ($\mside$-side for $S$-tree, $(-\mside)$-side for $T$-tree) when viewed from the root node, and a cheaper path from $\mx_c$ cannot exist.
(b) As such, there must be at least one $\mnode_{-\mside}$ node to bend the path passing through $\mx_e$ to make it longer than the path passing through $\mx_c$.
}
\label{fig:thm1case3A}
\end{figure}
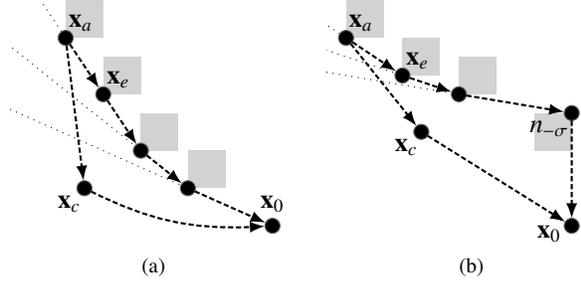
For Case 3, pruning will stop at a $(-\mside)$-sided turning point if the path is on the $S$-tree, where the root node is the start node. 
$\mnode_{-\mside}$ is first shown to exist.
From a proof of contradiction, suppose that a $(-\mside)$-sided node does not exist and all turning points along the longer path are $\mside$-sided.
If all turning points are $\mside$-sided, the unobstructed longer path has to be a straight path, or bend monotonically to the $\mside$-side from the start node before reaching $\mx_a$ (see Fig. \ref{fig:thm1case3A}).
Since $(\mx_c, \mx_a)$ lies on the $\mside$-side of the longer path, the shorter path has to lie on the $\mside$ of the longer path when viewed from the start node.
However, it is impossible for a shorter unobstructed path to $\mx_a$ to exist on the $\mside$-side of a longer path that is straight or bends to the $(-\mside)$-side, and the longer path has to contain at least one $(-\mside)$-sided turning point. Let this $(-\mside)$-sided turning point be $\mnode_{-\mside}$.

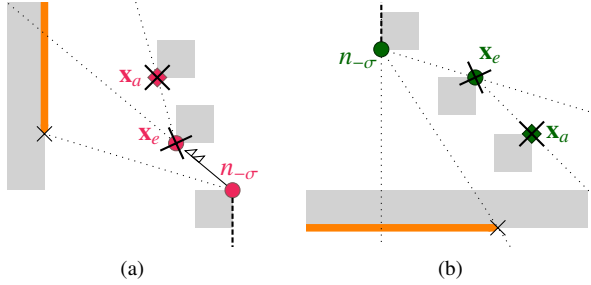
\begin{figure}[!ht]
\centering
\subfloat[\label{fig:thm1case3Ba} ] {%
    \centering
    \begin{tikzpicture}[]
        \clip (-\u, -\u) rectangle ++(7.5*\ul, 6.5*\ul);

        \path (6*\ul, -6*\ul) coordinate (x0) {};
        \fill [swatch_obs] (4*\ul, 4*\ul) coordinate (xa) rectangle ++(\ul, \ul);
        \fill [swatch_obs] (4.5*\ul, 2.25*\ul) coordinate (xe) rectangle ++(\ul, \ul);        
        \fill [swatch_obs] (6*\ul, 1*\ul) coordinate (x1) rectangle ++(-\ul, -\ul);
        \fill [swatch_obs] (0, 1*\ul) -- ++(90:5*\ul) -- ++(0:\ul) coordinate (t2) -- ++(-90:5*\ul) coordinate (t1);

        \path (t1) -- (t2) 
            node (tcol) [pos=0.3, coordinate] {};
        \draw [trace] (tcol) -- (t2) -- ++(90:\u);
        
        \path (xe) -- (xa)
            node (xae) [coordinate, pos=4] {};
        \path (x1) -- (xe)
            node (xe1) [coordinate, pos=7] {};
        \draw [dotted]
            (x1) edge (tcol)
            (xae) edge (xe)
            (xe1) edge (xe);

        \node (na) at (xa) [sey pt={shift={(-3.5mm, 0)}}{center:$\mx_a$}] {};
        \node at (xa) [prune] {};
        \node (ne) at (xe) [svy pt={shift={(-3.5mm, 1mm)}}{center:$\mx_e$}] {};
        \node at (xe) [prune, rotate=-20] {};
        \node (n1) at (x1) [svy pt={shift={(1mm, 2.5mm)}}{center:$\mnode_{-\mside}$}] {};
        \node [cross pt] at (tcol) {};

        \path (n1) -- (ne)
            node (xray) [pos=0.95, coordinate] {};
        \draw [rayr] (n1) -- (xray);
            
        \draw [link]
            (n1) edge (x0);
        
    \end{tikzpicture}
} \hfill
\subfloat[\label{fig:thm1case3Bb}] {%
    \centering
    \begin{tikzpicture}[]
        \clip (-\u, -\u) rectangle ++(7.5*\ul, 6.5*\ul);

        \fill [swatch_obs] (5.5*\ul, 2.5*\ul) coordinate (xa) rectangle ++(-\ul, -\ul);
        \fill [swatch_obs] (4*\ul, 4*\ul) coordinate (xe) rectangle ++(-\ul, -\ul);        
        \fill [swatch_obs] (1.5*\ul, 4.75*\ul) coordinate (x1) rectangle ++(\ul, \ul);
        \fill [swatch_obs] (-1*\ul, 0) coordinate (t2) -- ++(90:1*\ul) -- ++(0:8*\ul) -- ++(-90:1*\ul) coordinate (t1);
        \path (x1) -- ++(90:3*\ul) coordinate (x0) {};
        \path (x1) -- ++(-90:8*\ul) coordinate (x10) {};

        \path (t1) -- (t2) 
            node (tcol) [pos=0.3, coordinate] {};
        \path (x1) -- (tcol)
            node (xcol1) [pos=2, coordinate] {};
        \draw [trace] (tcol) -- (t2) -- ++(90:\u);
        
        \path (xe) -- (xa)
            node (xae) [coordinate, pos=4] {};
        \path (x1) -- (xe)
            node (xe1) [coordinate, pos=7] {};
        \draw [dotted]
            (x1) edge (xcol1)
            (xae) edge (xe)
            (x1) edge (xe1)
            (x0) edge (x10);

        \node (na) at (xa) [tey pt={shift={(3.5mm, 0)}}{center:$\mx_a$}] {};
        \node at (xa) [prune] {};
        \node (ne) at (xe) [tvy pt={shift={(2mm, 3mm)}}{center:$\mx_e$}] {};
        \node at (xe) [prune, rotate=-20] {};
        \node (n1) at (x1) [tvy pt={shift={(-3mm, -1.5mm)}}{center:$\mnode_{-\mside}$}] {};
        \node [cross pt] at (tcol) {};

        \draw [link]
            (n1) edge (x0);
       
    \end{tikzpicture}
}    
\caption{
The nodes at $\mx_a$ and $\mx_e$ would have been pruned from the longer path when Case 3 of Theorem 1 is considered.
Case 3 is applicable (a) if the path contains $S$-tree nodes as the node $\mnode_{-\mside}$ cannot be pruned. Before $\mnode_{-\mside}$ can be pruned, the trace would have been discarded, or a recursive ang-sec trace would have been called which preserves $\mnode_{-\mside}$.
Case 3 is not applicable (b) if the path contains $T$-tree nodes, causing $\mnode_{-\mside}$ to be prunable.
However, as \rtwop{} is complete, $\mnode_{-\mside}$ will be found if the shortest path passes through it.
}
\label{fig:thm1case3B}
\end{figure}
$\mnode_{-\mside}$ on the longer path cannot be pruned if it is in the $S$-tree.
For $\mnode_{-\mside}$ to be pruned, a trace has to be $(-\mside)$-sided.
Before the prune can occur, 
the $(-\mside)$-sided trace will have to cross the $(-\mside)$-sided sector-ray of $\mnode_{-\mside}$, which points to a previously pruned $(-\mside)$-sided turning point along the longer path (e.g. $\mx_d$).
The sector-ray is formed when a cast from $\mnode_{-\mside}$ had reached the pruned point.
The trace may be discarded, or be interrupted by a recursive ang-sec trace, causing $\mnode_{-\mside}$ to be preserved (see Fig. \ref{fig:thm1case3Ba}).

For $T$-tree nodes in Case 3, a $(-\mside)$-sided turning point can be similarly shown to exist, but unlike the $S$-tree, the turning point can be pruned as sector-rays cannot be defined for nodes in the target direction.
Case 3 is not a problem for $T$-tree nodes, as \rtwop{} will be able to find the shortest path from the $(-\mside)$-sided turning point in another query even if it is pruned by the current query (see Fig. \ref{fig:thm1case3Bb}).

Consider \textbf{Case 4}, where a subsequent query reaches a point that causes the longer path to sweep past the root node and not intersect with the cheaper path. 
As such a path causes a loop, the longer path can be discarded if it does not fulfill Eq. \ref{eq:newex} at $\mx_a$.
\end{proof}



\section{Algorithm}

\begin{algorithm}[!ht]
\begin{algorithmic}[1]
\caption{Main \rtwop{} algorithm.}
\label{alg:run}
\Function{Run}{$\mnode_\mathrm{start}, \mnode_\mathrm{goal}$}
    \State $\mlink \gets$ link from $\mnode_\mathrm{start}$ to $\mnode_\mathrm{goal}$.
    \State Queue $(\mqcast, \mlink)$.
    \While {open-list is not empty}
        \State Poll query ($\mqtype, \mlink$).
        \If {$\mqtype = \mqcast$}   \Comment{Casting query polled.}
            \IfThen {\Call{Caster}{$\mlink$} returns path} {\Return path}
        \Else   \Comment{Tracing query polled.}
            \State Trace from tgt node of $\mlink$.
        \EndIf
        \State Do Case O1 for all nodes in overlap-buffer.
    \EndWhile
    \State \Return $\{\}$ 
    \Comment{No path.}
\EndFunction
\end{algorithmic}
\end{algorithm}
\begin{algorithm}[!ht]
\begin{algorithmic}[1]
\caption{Handles casting queries.}
\label{alg:caster}
\Function{Caster}{$\mlink$}
    \State $\mray \gets $ ray from src node of $\mlink$ to tgt node of $\mlink$.
    \If {$\mray$ has line-of-sight }
        \IfThen{\Call{CastReached}{$\mray, \mlink$} returns path}{\Return path}
    \Else
        \State \Call{CastCollided}{$\mray, \mlink$}
    \EndIf
    \State \Return $\{\}$ 
    \Comment{No path.}
\EndFunction
\end{algorithmic}
\end{algorithm}

\begin{algorithm}[!ht]
\begin{algorithmic}[1]
\caption{Handles tracing queries.}
\label{alg:tracer}
\Function{Tracer}{$\mtrace=(\mxtrace, \msidetrace, \cdots)$}
    \DoWhile
        \Comment{$\mtrace$ encapsulates a tracing query.}
        \If {traced to src node}
            \State \Break
        \ElsIf {\underline{progression rule} finds no prog. w.r.t. src node}
            \IfThen {queued cast to phantom pt}{ \Break}
        \ElsIf{\Call{TracerProc}{$T, \mtrace$} has no more tgt nodes}
            \State \Break
        \ElsIf {\Call{TracerProc}{$S, \mtrace$} has no more src nodes}
            \State \Break
        \ElsIf {\underline{interrupt rule} queues a tracing query}
            \State \Break
        \ElsIf {\underline{placement rule} has cast to all tgt nodes}
            \State \Break
            \Comment{In the placement rule, casting queries are queue, and/or overlaps (Case O1) are identified at $\mxtrace$ and pushed to the overlap-buffer.}
        \EndIf
        \State $\mxtrace \gets \msidetrace$-side corner of $\mxtrace$.
    \EndDoWhile{$\mxtrace$ not out of map}
\EndFunction
\end{algorithmic}
\end{algorithm}

\begin{algorithm}[!ht]
\begin{algorithmic}[1]
\caption{Handles successful casting queries.}
\label{alg:casterreached}
\Function{CastReached}{$\mray, \mlink$}
    \State $\mnode_S \gets $ src node of $\mlink$.
    \State $\mnode_T \gets $ tgt node of $\mlink$.
    \If {$\mnode_S$ is $\mnvy$ \An $\mnode_T$ is $\mnvy$}
        \State \Return path
    \ElsIf {$\mnode_T$ is $\mntm$ \An no turn. pt. placeable at $\mnode_T$}
        \State Trace from $\mnode_T$.
    \ElsIf {$\mnode_S$ is not ($\mney$ or $\mnvy$) \An $\mnode_T$ is not ($\mney$ or $\mnvy$)}
        \State Queue $(\mqcast,\mlink_T)$ for each tgt link $\mlink_T$ of $\mlink$.
        \State Push $\mnode_T$ into overlap-buffer if overlap (Case O1).
    \ElsIf{$\mnode_S$ is ($\mney$ or $\mnvy$)}
        \State $\mside \gets $ side of $\mnode_T$.
        \State Merge $\mray$ to $-\mside$ sector-ray of $\mlink$.
        \ForEach{tgt link $\mlink_T$ of $\mlink$}
            \State Merge $\mray$ to $\mside$-side sector-ray of $\mlink_T$.
            \State Queue $(\mqcast,\mlink_T)$.
        \EndFor
        \State Do overlap rule (Cases O2, O3, O6) at $\mnode_T$.
        \State Push $\mnode_T$ into overlap-buffer if Case O1.
    \Else   
        \Comment{$\mnode_T$ is ($\mney$ or $\mnvy$)}
        \State Queue $(\mqcast,\mlink_S)$ for src link $\mlink_S$ of $\mlink$.
        \State Do overlap rule (Cases O4, O5, O7) at $\mnode_S$.
        \State Push $\mnode_S$ into overlap-buffer if Case O1.
    \EndIf
    \If{$\mnode_T$ is $\mntm$}
        \Comment{$S$-tree turn. pt. was placed at $\mnode_T$}
        \State Queue $(\mqcast,\mlink_T)$ for each castable tgt link $\mlink_T$ of $\mlink$.
        \State For non-castable tgt links, begin a trace from next corner of $\mnode_T$. 
    \EndIf
    \State \Return $\{\}$ \Comment{No path.}
\EndFunction
\end{algorithmic}
\end{algorithm}
\begin{algorithm}[!ht]
\begin{algorithmic}[1]
\caption{Handles casting queries that collide.}
\label{alg:castercollided}
\Function{CastCollided}{$\mray, \mlink$}
    \State $\mnode_S \gets $ src node of $\mlink$.
    \State $\mside_\mmjr \gets $ side of $\mnode_S$.
    \If {$\mnode_S$ is not $\mney$}
        \State $\mlink_{\mmnr,S} \gets \mlink$
        \Comment{Prepare \underline{minor trace}.}
        \State Merge $\mray$ into $\mside_\mmjr$-side sector-ray of $\mlink_{\mmnr,S}$.
        \State $(-\mside_\mmjr)$-sided trace from $(-\mside_\mmjr)$-side corner of collision point.

        \State $\mlink_{\mthird,S} \gets \mlink$
        \Comment{Prepare \underline{third trace}.}
        \State Merge $\mray$ into $\mside_\mmjr$-side sector-ray of $\mlink_{\mthird,S}$.
        \State Create $\mnun$ node and link at $\mnode_S$.
        \State Create $\mnoc$ node and link at $(-\mside_\mmjr)$-side corner of $\mnode_S$.
        \State $\mside_\mmjr$-sided trace from $\mside_\mmjr$-side corner of $\mnode_S$.
    \EndIf
    \State $\mlink_{\mmjr,S} \gets \mlink$
    \Comment{Prepare \underline{major trace}.}
    \State Merge $\mray$ into $(-\mside_\mmjr)$-side sector-ray of $\mlink_{\mmjr,S}$.
    \State $\mside_\mmjr$-sided trace from $\mside_\mmjr$-side corner of $\mnode_S$.
    
\EndFunction
\end{algorithmic}
\end{algorithm}
\begin{algorithm}[!ht]
\begin{algorithmic}[1]
\caption{Processes trace in one tree direction.}
\label{alg:tracerproc}
\Function{TracerProc}{$\mtdir, \mtrace=(\mxtrace, \msidetrace, \cdots)$}
    \ForEach{$\mtdir$ link $\mlink$ of $\mtrace$}
        \State $\mnode \gets$ $\mtdir$ node of $\mlink$.
        \If{$\mtdir=S$ and \underline{angular-sector rule} discards trace}
            \State \Continue
        \ElsIf{$\mnode$ is start or goal node}
            \State \Continue
        \ElsIf{$\msidetrace = $ side of $\mnode$}
            \State Do \underline{pruning rule}.
        \Else   
            \Comment{$\msidetrace \ne$ side of $\mnode$}
            \State Do \underline{ocupied-sector rule}.
        \EndIf
    \EndFor
    \State \Return $\mtrace$
\EndFunction
\end{algorithmic}
\end{algorithm}

The pseudocode in this section shows only the noteworthy steps in the algorithm.
A more detailed version is available in the supplementary material, which describes how the tree is managed to avoid data races and limit the number of link connections for each link.
In the pseudocode, ``source" and ``target'' are abbreviated to ``src" and ``tgt" respectively.
Rays are merged only if the resulting angular sector shrinks.

\rtwop{} is run from Alg. \ref{alg:run}.
Alg. \ref{alg:caster} handles casts, while Alg. \ref{alg:tracer} handles traces. 
Alg. \ref{alg:casterreached} and Alg. \ref{alg:castercollided} are helper functions that manages a successful cast and collided cast respectively, and Alg. \ref{alg:tracerproc} is a helper function that manages nodes and links in the source or target direction of the trace.

\section{Methodology}

The method used is the same as \citep{bib:r2}.
Algorithms are run on benchmarks, which are obtained from \citep{bib:bench}.
Each map in the benchmark contains between a few hundred to several thousand \textit{scenarios}, which are shortest path problems between two points.

As \rtwo{} and \rtwop{} do not pre-process the map and runs on binary occupancy grids, their results are compared with equivalent state-of-the-art algorithms ANYA and \rsp{}.
As such, state-of-the-art algorithms that are not online or do not run on binary occupancy grids, such as Polyanya \citep{bib:polyanya} and Visibility Graphs \citep{bib:vg}, are not compared.

For \rsp{}, the skip, bypass, and block extensions are selected as it is the fastest online configuration.
\rsp{} requires a map to be scaled twice and the start and goal points to be shifted by one unit in both dimensions.
As such, the tested maps are scaled twice, and the chosen algorithms are run on the same scenarios as \rsp{}.

Unlike \rtwo{}, \rtwop{} allows a path to pass through a checkerboard corner. 
A checkerboard corner is located at a vertex where the four diagonally adjacent cells have occupancy states resembling a checkerboard.
The passage through a checkerboard corner simplifies the algorithm by avoiding ambiguity when the starting point is located at a checkerboard corner.
To ensure that the returned paths are correct, the costs of \rtwo{} and \rtwop{} are verified against the visibility graph implementations and other algorithms, and the costs are found to agree.

To test the impact of Cases O6 and O7 of the overlap rule on search time, \rtwop{} is further re-run as the variant ``\rtwop{}N7" with the cases disabled.

The tests are run on Ubuntu 20.04 in Windows Subsystem for Linux 2 (WSL2) and on a single core of an Intel i9-11900H (2.5 GHz), with Turbo-boost disabled. The machine and software is the same as \citep{bib:r2}.
\rtwo{} and \rtwop{} are available at \citep{bib:r2code}.

\begin{figure*}[!ht]
\centering
\includegraphics[width=\textwidth]{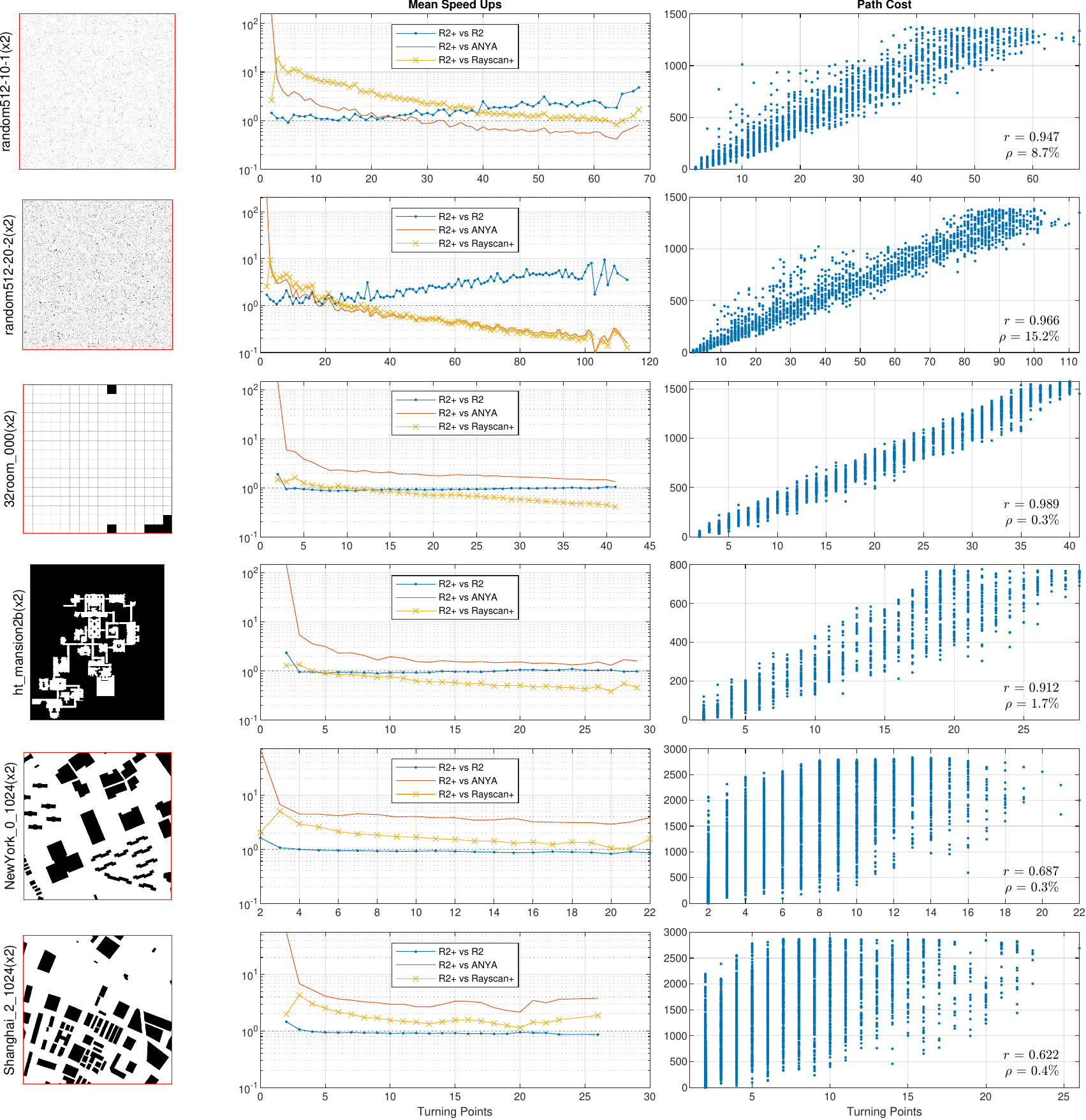}
\caption{
Results for selected maps. All maps are scaled twice, and the start and goal points shifted by one unit to accommodate \rsp{}.
\rtwop{} and \rtwo{} performs well on maps with convex obstacles and few disjoint obstacles, \rtwop{} performs better than \rtwo{} on maps with many disjoint obstacles.
}
\label{fig:results}
\end{figure*}

\section{Results}
\begin{table*}[!ht]
\centering
\caption{Benchmark Characteristics and Average Search Times}
\setlength{\tabcolsep}{3pt}
\begin{tabular}{ c c c c c c c c c c }
\toprule
Map & $P$ & $G$ & $r$ & $\rho$ (\%) & \rtwop{} & \rtwop{}N7 & R2 & ANYA & RS+ \\
\hline
bg512/AR0709SR & 13 & 953.3 & 0.608 & 0.144 & 38.463 & 38.321 & 35.177 & 204.542 & 54.128 \\
\hline
bg512/AR0504SR & 22 & 1019.0 & 0.792 & 0.570 & 150.338 & 166.378 & 155.799 & 570.845 & 204.597 \\
\hline
bg512/AR0603SR & 42 & 2228.5 & 0.963 & 1.299 & 771.426 & 851.360 & 846.224 & 1040.011 & 335.797 \\
\hline
da2/ht\_mansion2b & 29 & 776.2 & 0.912 & 1.748 & 302.421 & 345.923 & 324.229 & 471.444 & 152.418 \\
\hline
da2/ht\_0\_hightown & 18 & 1061.9 & 0.908 & 0.876 & 251.374 & 301.591 & 288.901 & 1031.213 & 273.415 \\
\hline
dao/hrt201n & 31 & 905.8 & 0.942 & 2.751 & 427.294 & 477.768 & 442.440 & 634.030 & 193.464 \\
\hline
dao/arena & 5 & 100.5 & 0.428 & 1.315 & 4.289 & 4.215 & 4.403 & 98.121 & 2.750 \\
\hline
maze/maze512-32-0 & 56 & 4722.4 & 0.987 & 0.037 & 287.388 & 287.549 & 292.550 & 904.761 & 132.083 \\
\hline
maze/maze512-16-0 & 145 & 6935.0 & 0.994 & 0.143 & 2398.408 & 2395.561 & 2413.348 & 1708.210 & 369.548 \\
\hline
maze/maze512-8-0 & 205 & 4792.2 & 0.992 & 0.511 & 9942.258 & 9923.029 & 10443.452 & 2836.640 & 895.688 \\
\hline
random/random512-10-1 & 68 & 1372.1 & 0.947 & 8.667 & 13894.789 & 28495.995 & 28860.096 & 9175.201 & 19476.182 \\
\hline
random/random512-20-2 & 113 & 1386.8 & 0.966 & 15.219 & 113042.158 & 481993.141 & 480362.351 & 32668.658 & 29605.503 \\
\hline
room/32room\_000 & 41 & 1579.2 & 0.989 & 0.272 & 1003.133 & 1220.866 & 1021.478 & 1656.847 & 558.525 \\
\hline
room/16room\_000 & 69 & 1477.7 & 0.992 & 1.065 & 4671.431 & 6415.172 & 5411.462 & 3713.668 & 1641.957 \\
\hline
street/Denver\_2\_1024 & 16 & 2835.8 & 0.770 & 0.028 & 96.485 & 102.252 & 91.920 & 910.048 & 416.744 \\
\hline
street/NewYork\_0\_1024 & 22 & 2834.8 & 0.687 & 0.310 & 316.994 & 324.427 & 299.855 & 1273.206 & 511.943 \\
\hline
street/Shanghai\_2\_1024 & 26 & 2885.7 & 0.622 & 0.404 & 508.290 & 541.504 & 491.929 & 1520.395 & 750.569 \\
\hline
street/Shanghai\_0\_1024 & 22 & 2816.5 & 0.511 & 0.258 & 266.808 & 267.314 & 256.614 & 973.980 & 265.440 \\
\hline
street/Sydney\_1\_1024 & 24 & 2844.5 & 0.698 & 0.128 & 159.619 & 164.025 & 152.603 & 958.804 & 368.688 \\
\bottomrule
\multicolumn{10}{p{16.5cm}}{
\footnotesize
All maps are scaled twice and start and goal coordinates shifted by one unit to accommodate \rsp{} (RS+).
$r$ is the correlation coefficient between the number of turning points and the shortest path cost for all scenarios in each map. 
$\rho$ is the ratio of the number of corners to the number of free cells on the map. 
$P$ is the largest number of turning points and $G$ is the largest path cost among all scenarios.
}
\end{tabular}
\label{tab:average}
\end{table*}
\begin{table*}[!ht]
\centering
\caption{Average Speed Ups for 3, 10, and 30 Turning Points}
\setlength{\tabcolsep}{3pt}
\begin{tabular}{ c c c c c | c c c c | c c c c }
\hline
\multirow{2}{*}{Map} & \multicolumn{4}{c | }{3 Turning Pts.} & \multicolumn{4}{c | }{10 Turning Pts.} & \multicolumn{4}{c}{30 Turning Pts.} \\
\cline{2-13}
& $g_{3}$ & R2 & ANYA & RS+ & $g_{10}$ & R2 & ANYA & RS+ & $g_{30}$ & R2 & ANYA & RS+ \\
\hline
bg512/AR0709SR & 418.2 & 0.961 & 8.49 & 2.43 & 663.2 & 0.905 & 3.65 & 1.11 & -- & -- & -- & -- \\
\hline
bg512/AR0504SR & 242.3 & 1.11 & 7.12 & 3.66 & 747.1 & 1.04 & 4.07 & 1.53 & -- & -- & -- & -- \\
\hline
bg512/AR0603SR & 212.1 & 1.09 & 4.8 & 2.75 & 595.5 & 0.987 & 2.05 & 0.925 & 1645.0 & 1.09 & 1.43 & 0.431 \\
\hline
da2/ht\_mansion2b & 59.4 & 0.954 & 5.46 & 1.35 & 249.1 & 0.944 & 1.99 & 0.79 & -- & -- & -- & -- \\
\hline
da2/ht\_0\_hightown & 134.4 & 0.981 & 5.95 & 2.19 & 584.1 & 1.11 & 4.82 & 1.39 & -- & -- & -- & -- \\
\hline
dao/hrt201n & 81.7 & 1.07 & 5.22 & 1.59 & 285.3 & 1.02 & 1.98 & 0.794 & 848.7 & 0.92 & 1.73 & 0.399 \\
\hline
dao/arena & 69.5 & 0.917 & 9.12 & 0.627 & -- & -- & -- & -- & -- & -- & -- & -- \\
\hline
maze/maze512-32-0 & 206.3 & 0.975 & 6.42 & 0.991 & 807.4 & 1.01 & 4.53 & 0.76 & 2413.4 & 1.02 & 3.68 & 0.536 \\
\hline
maze/maze512-16-0 & 122.2 & 0.964 & 6.5 & 0.952 & 442.5 & 1.01 & 2.57 & 0.829 & 1385.9 & 1.01 & 1.62 & 0.372 \\
\hline
maze/maze512-8-0 & 55.1 & 1.03 & 4.81 & 0.963 & 226.8 & 0.951 & 1.83 & 0.57 & 748.0 & 0.991 & 0.965 & 0.336 \\
\hline
random/random512-10-1 & 42.2 & 1.09 & 7.21 & 18.4 & 264.6 & 1.12 & 2.22 & 7.01 & 791.9 & 1.44 & 0.882 & 2.22 \\
\hline
random/random512-20-2 & 16.6 & 1.3 & 7.05 & 9.03 & 143.8 & 1.11 & 1.52 & 2.5 & 476.6 & 1.87 & 0.817 & 0.932 \\
\hline
room/32room\_000 & 79.0 & 0.979 & 6.23 & 1.4 & 344.8 & 0.911 & 2.31 & 1.03 & 1137.8 & 1.01 & 1.71 & 0.603 \\
\hline
room/16room\_000 & 42.9 & 0.995 & 5.62 & 1.69 & 184.7 & 0.901 & 1.65 & 1.01 & 614.8 & 0.98 & 1.06 & 0.562 \\
\hline
street/Denver\_2\_1024 & 774.3 & 0.962 & 10.8 & 5.85 & 2329.0 & 0.956 & 9 & 4.27 & -- & -- & -- & -- \\
\hline
street/NewYork\_0\_1024 & 865.3 & 1.09 & 6.84 & 5.13 & 1846.0 & 0.95 & 4.14 & 1.72 & -- & -- & -- & -- \\
\hline
street/Shanghai\_2\_1024 & 1025.3 & 1.09 & 7.05 & 4.38 & 1891.7 & 0.936 & 3.06 & 1.6 & -- & -- & -- & -- \\
\hline
street/Shanghai\_0\_1024 & 1371.7 & 1.13 & 7.34 & 2.71 & 1558.5 & 0.961 & 3.21 & 0.95 & -- & -- & -- & -- \\
\hline
street/Sydney\_1\_1024 & 878.0 & 1.06 & 8.98 & 4.32 & 1995.3 & 0.93 & 5.17 & 2.15 & -- & -- & -- & -- \\
\hline
\multicolumn{13}{p{16cm}}{
\footnotesize
Maps are scaled twice and points shifted by one unit to accommodate \rsp{}.
$g_i$ refers to the average path cost for the shortest paths with $i$ turning points.
``R2", ``ANYA" and ``RS+" (\rsp{}) are the speedups of \rtwop{} with respect to the algorithms.
The higher the ratio, the faster \rtwop{} is compared to an algorithm.
}
\end{tabular}
\label{tab:points}
\end{table*}

In this section, a \textbf{speed-up} is the ratio of an algorithm's search time to \rtwop{}'s search time.
The speed-ups for selected maps are shown in Fig. \ref{fig:results}.
The average search times are  shown in Table \ref{tab:average},
and Table \ref{tab:points} show the average speed-ups with respect to the 3, 10 and 30 turning points.
As passage through checkerboard corners have negligible impact on search times (see below), the costs and number of turning points used for comparisons are based on paths that can pass through checkerboard corners.
Colinear turning points are removed from all results to avoid double counts in the comparisons.

The middle column of Fig. \ref{fig:results} shows the average speed-ups with respect to the number of turning points on the shortest path. 
The right column of Fig. \ref{fig:results} shows the benchmark characteristics by plotting the shortest paths' cost with  the number of turning points.
The correlation between the cost and number of turning points is indicated by $r$, and the ratio of the number of corners to the number of free cells is indicated in $\rho$.

As \rtwo{} and \rtwop{} are exponential in the worst case with respect to the number of collided casts, the algorithms are expected to perform poorly in benchmarks with high $r$.
A high $r$ indicates that paths are likely to turn around more obstacles as they get longer, implying that the maps have highly non-convex obstacles and many disjoint obstacles.
In such maps, collisions are highly likely to occur, and \rtwo{} and \rtwop{} are likely to be slow.

Unlike the other algorithms, \rtwop{} and \rtwop{}N7 allow passage through checkerboard corners.
As such, the shortest paths of the other algorithms are different from \rtwop{} for the maps ``random512-10-1" (25.25\% identical) and ``random512-20-2" (9.38\% identical). 
Coincidentally, \rtwop{} differs significantly from \rtwo{} in the search times for only the two maps (see Table \ref{tab:points}).
The difference is caused by Cases O6 and O7 of the overlap rule instead of the shortcuts through the checkerboard corners.
\rtwop{}N7 performs similarly to \rtwo{} for the two maps (see Table \ref{tab:average}). 
As the only differences between \rtwop{}N7 and \rtwop{} are Cases O6 and O7 of the overlap rule, the difference in speeds between \rtwo{} and \rtwop{} is caused primarily by the cases.
As such, allowing passage through checkerboard corners have negligible impact on the search times for the maps tested.

Cases O6 and O7, which are similar to the G-cost pruning from \citep{bib:dps}, improve search time significantly in maps with many small disjoint obstacles like ``random512-10-1", instead of maps with highly non-convex obstacles  like ``maze512-8-0".
The difference in performance is due to Cases O6 and O7 being able to discard expensive paths only when the path costs are verified.
As queries are able to move around small obstacles faster than highly non-convex ones, path costs can be verified more quickly.
The effect is increased when more collisions occur, causing \rtwop{} to perform significantly faster than \rtwo{} in maps with more disjoint obstacles.

\rtwop{} performs much faster than \rtwo{} in maps with many disjoint obstacles.
While being simpler than \rtwo{}, \rtwop{} has similar performance to \rtwo{} in other maps, preserving the speed advantage that \rtwo{} has over other algorithms when the shortest path is expected to turn around few obstacles.


\section{Conclusion}
In this work, \rtwo{}, a vector-based any-angle path planner, is evolved into \rtwop{}.
Novel mechanisms are introduced in \rtwop{} to simplify the algorithm, and allow the algorithm to terminate.
\rtwop{} prevents chases from occurring by superseding ad hoc points in \rtwo{} with a short, recursive angular-sector trace from target nodes.

\rtwo{} and \rtwop{} are able to outperform state-of-the-art algorithms like ANYA and \rsp{} when paths are expected to have few turning points.
\rtwo{} and \rtwop{} are fast due to delayed line-of-sight checks to expand the most promising turning points, which are points that deviate the least from the straight line between the start and goal points.

While fast when the shortest paths are expected to have few turning points, \rtwo{} and \rtwop{} are exponential in the worst case with respect to collided line-of-sight checks in the worst case. 
To improve average search time, \rtwo{} discards paths that have expensive nodes that cannot be pruned.
\rtwop{} improves upon \rtwo{} by discarding paths that intersect cheaper paths, allowing \rtwop{} to outperform \rtwo{} in maps with many disjoint obstacles.

\rtwop{} is a superior algorithm to \rtwo{}, and supersedes \rtwo{}.
\rtwop{} is terminable, and simpler to implement than \rtwo{}.
\rtwop{} outperforms \rtwo{} in maps with many disjoint obstacles, while preserving the performance of \rtwo{} in other maps.
Future works may investigate ways to improve the speed of \rtwop{} in maps with highly non-convex obstacles, and improve the algorithm's complexity with respect to collided casts.

\bibliographystyle{elsarticle-num-names} 
\bibliography{bib.bib}

\section{Vitae}
\includegraphics[width=1in,height=1.25in,clip,keepaspectratio]{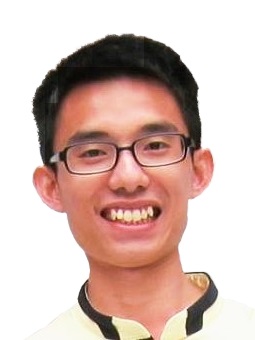}
\textbf{Yan Kai, Lai}
received his B.Eng. degree in Electrical Engineering from National University of Singapore, Singapore, in 2019. 
He is currently working toward the Ph.D. degree in robotics path planning with the
National University of Singapore.
His research interests include path-planning and simultaneous localisation and mapping in mobile ground robots.

\includegraphics[width=1in,height=1.25in,clip,keepaspectratio]{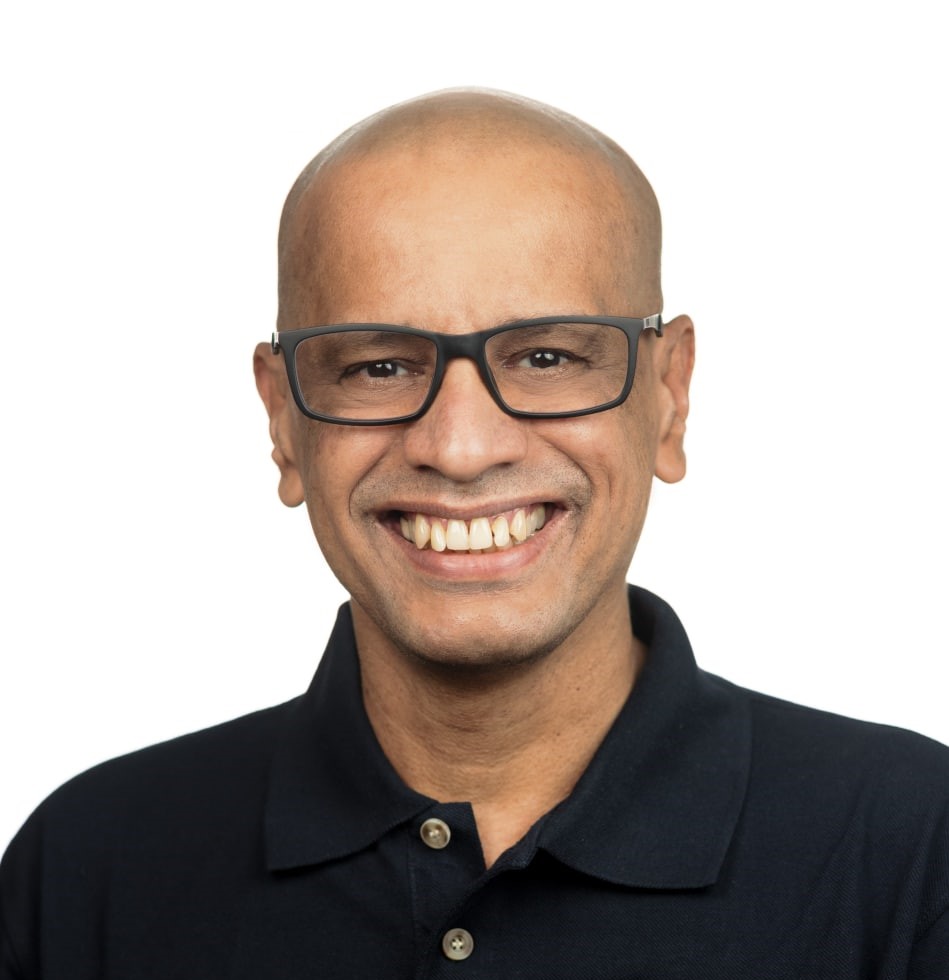}\textbf{Prahlad Vadakkepat}
(M’00–SM’05) received the B.Eng. degree (First Class Hons.) in Electrical Engineering from the Calicut University, Kerala, India, in 1986, and the M.Tech. and Ph.D. degrees from the Indian Institute of Technology Madras, Chennai, India, in 1989 and 1996, respectively. 

Since 1999, he is with the National University of Singapore, Singapore, where he is an Associate Professor. He is the Founder Secretary of the Federation of International Robot-Soccer Association and served as General Secretary from 2000 to 2016. His current research interests include robotics, AI, humanoid robotics, distributed robotics systems and frugal innovation. 

Prof. Vadakkepat was a recipient of several international prizes for his teams of humanoid robots and soccer robots. He was the Editor-in-Chief of the Springer Reference Book on Humanoid Robotics.

\includegraphics[width=1in,height=1.25in,clip,keepaspectratio]{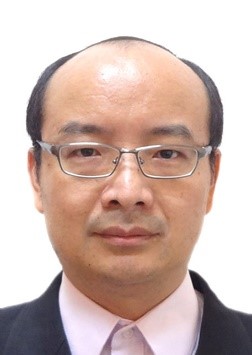} \textbf{Cheng, Xiang} 
received the B.S. degree from Fudan University in 1991; M.S. degree from the Institute of Mechanics, Chinese Academy of Sciences in 1994; and M.S. and Ph.D. degrees in electrical engineering from Yale University in 1995 and 2000, respectively. He is an Associate Professor and the area director of Control, Intelligent Systems and Robotics in the Department of Electrical and Computer Engineering at the National University of Singapore. His research interests include artificial  intelligence, intelligent control, and systems biology.

\addtocontents{toc}{\protect\setcounter{tocdepth}{4}}
\onecolumn
\begin{center}
\textbf{\large Supplementary Material: 
\rtwop{}: Vector-Based Optimal Any-Angle 2D Path Planning with Non-convex Obstacles}
\end{center}
\setcounter{algorithm}{0}
\setcounter{equation}{0}
\setcounter{figure}{0}
\setcounter{table}{0}
\setcounter{page}{1}
\setcounter{section}{0}
\setcounter{paragraph}{0}
\makeatletter
\renewcommand\paragraph{\@startsection{paragraph}{4}{\z@}
                         {2ex \@plus .1ex \@minus .2ex}%
                         {.1ex \@plus .0ex}
                         {\normalfont\normalsize\itshape}}
\setcounter{secnumdepth}{4}
\renewcommand{\thealgorithm}{\thesubsubsection} 
\makeatother

\tableofcontents

\section{Introduction}
The supplementary material attempts to describe the implementation details of \rtwop{}.
The programming classes of object-oriented programming and any conditions of implementation are described.

\subsection{Pseudocode Convention}
The pseudocode is Python-like and all objects are passed by reference.
Class objects are usually represented by tuples $(\cdot)$. If an object is a collection of other objects, they are usually represented by ordered sets $\{\cdot\}$ unless otherwise specified.
To access a member of an object, the emboldened period operator ($\mdot$) is used.
If an object is an ordered set, its elements can be accessed with the square bracket operator $[\cdot]$, with 1 begin the first element.

The reader may need to keep in mind any memory invalidation that can occur, such as iterator invalidations, when an object is removed from a set in the pseudocode.
The reader is encouraged to introduce additional members to objects described below to cache repeated calculations.

\subsection{Common Nomenclature}
\rtwop{} is an any-angle 2D path planner that finds the shortest path between two points, the \textbf{start} and \textbf{goal} points.
\rtwop{} searches from the start point, and returns the shortest path if it exists.
The shortest path is a set of coordinates that begins from the goal point and ends at the start point. Intermediate points are turning points of the any-angle path, and may include colinear points.

\rtwop{} relies on two trees of nodes in the search process. 
The \textbf{source-tree} ($S$-tree) is rooted at the start node, which lies on the start point, while the \textbf{target-tree} ($T$-tree) is rooted at the goal node, which lies on the goal point.
Each tree branches from its root node, and is ``joined" to each other at their leaf nodes.

A link is a branch on the tree that connects two nodes. 
For efficient memory management, a link points to only one node (\textbf{anchored} at a node).
A link can be connected to other links anchored at other nodes, indirectly connecting the nodes.
While a node forms a vertex of an examined path, a link forms a segment of the path.
By storing information specific to a path being expanded, a link reduces duplicate line-of-sight checks between nodes of overlapping paths.

A \textbf{query} is used to describe an intermediate expansion. The query has an associated path cost estimate and can be queued into the open-list.
In A*, a query can be thought of as an iteration, which expands a node and examines the neighboring nodes.
In \rtwop{}, a query expands a link, performing a line-of-sight check over the link (\textbf{casting query}, or \textbf{cast}), or traces an obstacle contour beginning from the link (\textbf{tracing query}, or \textbf{trace}).
The link that is being expanded is anchored on a leaf node (\textbf{leaf link}) of the $S$-tree or $T$-tree.
A query and its expanded leaf link point to each other. As such, a query can be found between a pair of connected leaf nodes.

The \textbf{source-direction} refers to the direction leading to the start node along the path formed by the links and nodes, while the \textbf{target-direction} refers to the direction leading to the goal node. 
The terms are applied to objects when describing their location in relation to other objects.
For example, a link indirectly connects two nodes. The node leading to the start node is called the \textbf{source node} of the link, while the node leading to the goal node is called the \textbf{target node}.
Let $\mtdir \in \{S,T\}$ be the tree-direction where $S=-1$ and $T=1$ denote the source and target-direction respectively.
Nodes in the source tree (\textbf{source-tree nodes}) lie in the source direction of a query, while nodes in the the target tree (\textbf{target-tree nodes}) lie in the target direction of a query.
A \textbf{parent} node or link of an object lies closer to the root node of a tree than the object. 
For example, a parent node or link of a source-tree node lies in the source-direction of the source-tree node.
A \textbf{child} node lies further away from the root node.
For example, a child node or link of a target-tree node lies in the source-direction of the target-tree node.

When a cast collides with an obstacle, a \textbf{left} and \textbf{right} trace occurs from the left and right side of the collision point respectively. 
The traces trace the obstacle's contour, placing nodes along the contours. 
A node has a side that is the same as the trace that found it -- a left-sided node is placed by a left trace, etc.
Let $\mside \in \{L,R\}$ be the side where $L=-1$ and $R=1$ denote the left and right side respectively.



\section{Data Structures and Methods}
This section describes the objects, or data structures, used in \rtwop{}.
The subsections contain functions which are sorted into three categories.
The first category contains helper functions, denoted by a single letter, to access members or owners of an object.
The second category contains a constructor or initializer, which is prefixed with \textit{Get} or \textit{Create}.
The third category contains functions that interface with other objects. 
The material includes only noteworthy functions, and the functions are non-exhaustive.

\subsection{Position and Best Objects}
A position object $\mpos$ describes the best cost and direction to reach a position, and owns any nodes placed at the location.
It is described by
\begin{equation}
\mpos = (\mx, \mnodes, \mbest_S, \mbest_T),
\end{equation}
where $\mx$ is the pair of coordinates, and $\mnodes$ is a set containing nodes.
$\mbest_S$ and $\mbest_T$ contain information about the best cost and direction to $\mx$ from the source and target direction respectively, and are described by
\begin{equation}
    \mbest = (\mcost, \mnodebest, \mxpar).
\end{equation}
$\mcost$ is the smallest cost known so far to reach the current position $\mpos$. 
For $\mbest_S$, the cost is cost-to-come, and for $\mbest_T$, the cost is cost-to-go.
$\mnodebest$ indicates the node $\mnode$ at the current location which has the shortest, unobstructed path to the root node. The root node is the start or goal node for $\mbest_S$ or $\mbest_T$ respectively.
For $\mbest_S$, $\mxbest$ is the location of the source node along the path. 
For $\mbest_T$, $\mxbest$ is the location of the target node along the path.

\subsubsection{\textproc{$\fbest$}: \textit{Gets a Best Member Object of a Position Object}}
The helper function $\fbest$ (Alg. \ref{suppalg:fbest}) returns $\mbest_S$ or $\mbest_T$ depending on the tree-direction $\mtdir\in\{S,T\}$.
\begin{algorithm}[!ht]
\begin{algorithmic}[1]
\caption{Get source Best or target Best property of Position object}
\label{suppalg:fbest}
\Function{$\fbest$}{$\mtdir, \mpos$}
    \State \Return $\mpos\mdot\mbest_S$ \textbf{if} $\mtdir = S$ \textbf{else} $\mpos\mdot\mbest_T$
\EndFunction
\end{algorithmic}
\end{algorithm}

\subsubsection{\textproc{GetPos}: \textit{Retrieves or Constructs a Position Object}}
The function \textproc{GetPos} (Alg. \ref{suppalg:getpos}) returns $\mpos$ if it exists at a corner at $\mx$. Otherwise, it creates a new $\mpos$ at $\mx$ and returns it.

\begin{algorithm}[!ht]
\begin{algorithmic}[1]
\caption{Retrieve existing Position object or create it if it does not exist.}
\label{suppalg:getpos}
\Function{GetPos}{$\mx$}
    \State Try getting $\mpos = (\mx, \cdots )$ from a corner at $\mx$. 
    \If {$\mpos$ does not exist}
        \State Create $ \mpos = (\mx, \{\}, \infty, \varnothing, \infty, \varnothing)$ \Comment{$\mx_g$ and $\mx_h$ can be initialised to any value.}
    \EndIf
    \State \Return $\mpos$
\EndFunction
\end{algorithmic}
\end{algorithm}

\subsection{Node and Node Types}
A node $\mnode$ is described by
\begin{equation}
\mnode = ( \mntype, \msidenode, \mtdirnode, \mlinks_\mnode ),
\end{equation}
which has a side $\msidenode \in \{L, R\}$ that has the same side as the trace that placed it. 
The node is a \textbf{source-tree node} or a \textbf{target-tree node} if $\mtdirnode = S$ or $\mtdirnode = T$ respectively. 
$\mlinks_\mnode$ refers to an unordered set of links anchored at the node $\mnode$. 

The node types are $\mntype \in \{ \mnvy, \mnvu, \mney, \mneu, \mnun, \mnoc, \mntm \}$. 
Table \ref{supptab:nodetypes2} indicate if two nodes can be indirectly connected with a link. 
The connections are derived from the algorithm's logic, and \rtwop{} does not follow any explicit rule to connect nodes.

Any node with \textbf{cumulative visibility} to another node has an unobstructed path between them. As only the cumulative visibility to the start node or goal node is important, the definition of cumulative visibility is overloaded in the text for brevity --
if a source-tree node or target-tree node has \textbf{cumulative visibility}, then the node has cumulative visibility to the start node or goal node respectively.


\begin{table*}[!ht]
\centering
\caption{Node Types and Possible Parent Node Types}
\begin{tabular}{ c | c c c c c c c c | p{7cm} }
\diagbox{$\mntype$}{$\mntype_\mtdir$} & $\mnvy$ & $\mnvu$ & $\mney$ & $\mneu$ & $\mnun$ & $\mnoc$ & $\mntm$ & \multirow{9}{7cm}{
\footnotesize
Alphabet(s) $S$ and/or $T$ are placed if a node of type $\mntype$ and a parent node of type $\mntype_\mtdir$ can be connected to each other with a link. $S$ is placed if both nodes are in the source tree and the nodes can be connected, $T$ if both nodes are in the target tree and the nodes can be connected. For example, a $\mnvy$ node and a parent $\mnvy$ node can connect regardless of the tree they lie in, and a $\mnvu$ node and a parent $\mnun$ node can be connected only if they are in the target tree. The connections are derived from the algorithm, and \rtwop{} does not follow any explicit connection rule.
}
\\
\cline{1-8}
$\mnvy$ & $S,T$ & -- & -- & -- & -- & -- & -- & -- & \\
\cline{1-8}
$\mnvu$ & $S,T$ & $S,T$ & -- & -- & $T$ & $T$ & $T$ & \\
\cline{1-8}
$\mney$ & $S,T$ & -- & $S,T$ & -- & -- & -- & -- &\\
\cline{1-8}
$\mneu$ & -- & -- & $S$ & $S$ & -- & -- & -- & \\
\cline{1-8}
$\mnun$ & $T$ & $T$ & -- & -- & -- & -- & $T$  &\\
\cline{1-8}
$\mnoc$ & $T$ & $T$ & $T$ & -- & $T$ & -- & $T$  &\\
\cline{1-8}
\cline{1-8}
$\mntm$ & $T$ & $T$ & $T$ & -- & $T$ & $T$ & $T$ & \\

\end{tabular}
\label{supptab:nodetypes2}
\end{table*}

\subsubsection{\textproc{$\fpos$}: \textit{Gets Position Object that Owns a Node}}
\textproc{$\fpos$} (Alg. \ref{suppalg:fpos}) returns the position object owning the node $\mnode$ .
\begin{algorithm}[!ht]
\begin{algorithmic}[1]
\caption{Get Position object}
\label{suppalg:fpos}
\Function{$\fpos$}{$\mnode$}
    \State \Return $\mpos$ where $\mnode \in \mpos\mdot\mnodes$
\EndFunction
\end{algorithmic}
\end{algorithm}

\subsubsection{\textproc{$\fx$}: \textit{Gets Coordinates of a Node}}
\textproc{$\fx$} (Alg. \ref{suppalg:fx}) returns the coordinates of $\mnode$.
\begin{algorithm}[!ht]
\begin{algorithmic}[1]
\caption{Get coordinates of a node}
\label{suppalg:fx}
\Function{$\fx$}{$\mnode$}
    \State \Return \Call{$\fpos$}{$\mnode$}$\mdot\mx$
\EndFunction
\end{algorithmic}
\end{algorithm}

\subsubsection{\textproc{GetNode}: \textit{Retrieves or Constructs a Node}}
The function \textproc{GetNode} (Alg. \ref{suppalg:getnode}) gets a matching node with type $\mntype$, side $\mside_\mnode$ and tree $\mtdir_\mnode$ from a corner at $\mx$ if it exists. 
Otherwise, a new node is created.
\begin{algorithm}[!ht]
\begin{algorithmic}[1]
\caption{Retrieve existing node or create it if it does not exist}
\label{suppalg:getnode}
\Function{GetNode}{$\mx$, $\mntype$, $\mside_\mnode$, $\mtdir_\mnode$}
    \State $\mpos \gets $ \Call{GetPos}{$\mx$}
    \State Try finding $\mnode = (\mntype, \mside_\mnode, \mtdir_\mnode, \cdots)$ from $\mpos \mdot \mnodes$.
    \If{ $\mnode$ does not exist}
        \State Create $\mnode = (\mntype, \mside_\mnode, \mtdir_\mnode, \{\})$
    \EndIf
    \State \Return $\mnode$
\EndFunction
\end{algorithmic}
\end{algorithm}

\subsection{Link}
A link $\mlink$ is described by
\begin{equation}
\mlink = ( \mlinks_S, \mlinks_T, \mcost, \mray_L, \mray_R ),
\end{equation}
where $\mlinks_S$ and $\mlinks_T$ are unordered sets containing links in the source and target-direction of the link respectively. 
$\mcost$ is the cost-to-come if the link's anchored node is a source-tree node, or cost-to-go if the anchored node is a target-tree node. 
$\mray_L$ is the left sector-ray, and $\mray_R$ is the right sector-ray.

\newpage
\subsubsection{\textproc{$\fnode$}: \textit{Gets Anchored Node of a Link}}
\textproc{$\fnode$} (Alg. \ref{suppalg:fnode}) returns the anchored node of a link $\mlink$.
\begin{algorithm}[!ht]
\begin{algorithmic}[1]
\caption{Get the anchored node of a link.}
\label{suppalg:fnode}
\Function{$\fnode$}{$\mlink$}
    \State \Return $\mnode$ where $\mlink \in \mnode\mdot\mlinks_\mnode$
\EndFunction
\end{algorithmic}
\end{algorithm}

\subsubsection{\textproc{$\fray$}: \textit{Gets Sector-ray of a Link}}
\textproc{$\fray$} (Alg. \ref{suppalg:fray}) returns the $\mside$-sided ($\mside\in\{L, R\}$) sector-ray of a link $\mlink$.
\begin{algorithm}[!ht]
\begin{algorithmic}[1]
\caption{Get a sector-ray of a link.}
\label{suppalg:fray}
\Function{$\fray$}{$\mside, \mlink$}
    \State \Return $\mlink\mdot\mray_L$ \textbf{if} $\mside = L$ \textbf{else} $\mlink\mdot\mray_R$
\EndFunction
\end{algorithmic}
\end{algorithm}

\subsubsection{\textproc{$\flinks$}: \textit{Gets a Set of Connected Links}}
\textproc{$\flinks$} (Alg. \ref{suppalg:flinks}) returns the set of $\mtdir$-direction links ($\mtdir \in \{S,T\}$) of a link $\mlink$.
\begin{algorithm}[!ht]
\begin{algorithmic}[1]
\caption{Get the set of source or target links of a link.}
\label{suppalg:flinks}
\Function{$\flinks$}{$\mtdir, \mlink$}
    \State \Return $\mlink\mdot\mlinks_L$ \textbf{if} $\mside = L$ \textbf{else} $\mlink\mdot\mlinks_R$
\EndFunction
\end{algorithmic}
\end{algorithm}

\subsubsection{\textproc{$\flink$}: \textit{Gets a Connected Link}}
\textproc{$\flink$} (Alg. \ref{suppalg:flink}) returns \textit{any one} link in the $\mtdir$-direction. \textproc{$\flink$}$(\cdot)$ is commonly used to access the parent node of a link. 
As most links only point to one parent link, the function is also commonly used.
\begin{algorithm}[!ht]
\begin{algorithmic}[1]
\caption{Get any one source or target link of a link.}
\label{suppalg:flink}
\Function{$\flink$}{$\mtdir, \mlink$}
    \State Any one link in \Call{$\flinks$}{$\mtdir, \mlink$}
\EndFunction
\end{algorithmic}
\end{algorithm}

\newpage
\subsubsection{\textproc{$\fquery$}: \textit{Gets an associated Queued Query}}
\textproc{$\fquery$} returns a queued query (see Sec. \ref{suppsec:queuedquery}) that is associated with a link $\mlink$, if any.
\begin{algorithm}[!ht]
\begin{algorithmic}[1]
\caption{Get queued query of a link}
\label{suppalg:fquery}
\Function{$\fquery$}{$\mlink$}
    \State \Return $\mquery$ \textbf{if} there exists a $\mquery$ such that $\mlink = \mquery\mdot\mlink_\mquery$ \textbf{else} $\varnothing$
\EndFunction
\end{algorithmic}
\end{algorithm}

\subsubsection{\textproc{CreateLink}: \textit{Creates a Link}}
The function \textproc{CreateLink} (Alg. \ref{suppalg:createlink}) creates a new link object, anchors it at a node $\mnode$, and returns it.
\begin{algorithm}[!ht]
\begin{algorithmic}[1]
\caption{Create a new link.}
\label{suppalg:createlink}
\Function{CreateLink}{$\mnode$}
    \State Create $\mlink = (\{\}, \{\}, \infty, \varnothing, \varnothing)$
    \State \Call{Anchor}{$\mlink, \mnode$}
    \State \Return $\mlink$
\EndFunction
\end{algorithmic}
\end{algorithm}

\subsubsection{\textproc{Anchor}: \textit{Anchors a link at a Node}}
The \textproc{Anchor} function (Alg. \ref{suppalg:anchor}) re-anchors a link $\mlink$ from its anchor node to a new node $\mnode_\mnew$. 
\vspace{-0.2cm}
\begin{algorithm}[!ht]
\begin{algorithmic}[1]
\caption{Anchor link to a node}
\label{suppalg:anchor}
\Function{Anchor}{$\mlink$, $\mnode_\mnew$} 
    \IfThen{$\fnode(\mlink) \ne \varnothing$} {Remove $\mlink$ from $\fnode(\mlink) \mdot \mlinks_\mnode$}
    \State Add $\mlink$ to $\mnode_\mnew \mdot \mlinks_\mnode$
\EndFunction
\end{algorithmic}
\end{algorithm}

\subsubsection{\textproc{Connect}: \textit{Connects Two Links}}
The \textproc{Connect} function  (Alg. \ref{suppalg:connect}) connects two links by adding pointers to each other.
\vspace{-0.2cm}
\begin{algorithm}[!ht]
\begin{algorithmic}[1]
\caption{Connect two links.}
\label{suppalg:connect}
\Function{Connect}{$\mtdir$, $\mlink$, $\mlink_\mtdir$}
    \State Add $\mlink_\mtdir$ to $\flinks_\mtdir(\mlink)$ 
    \State Add $\mlink$ to $\flinks_{-\mtdir}(\mlink_\mtdir)$ 
\EndFunction
\end{algorithmic}
\end{algorithm}

\newpage
\subsubsection{\textproc{Disconnect}: \textit{Disconnects Two Links}}
The \textproc{Disconnect} function  (Alg. \ref{suppalg:disconnect}) disconnects two links by removing their pointers from each other.
\begin{algorithm}[!ht]
\begin{algorithmic}[1]
\caption{Disconnect two links.}
\label{suppalg:disconnect}
\Function{Disconnect}{$\mtdir$, $\mlink$, $\mlink_\mtdir$}
    \State Remove $\mlink_\mtdir$ from $\flinks_\mtdir(\mlink)$ 
    \State Remove $\mlink$ from $\flinks_{-\mtdir}(\mlink_\mtdir)$ 
\EndFunction
\end{algorithmic}
\end{algorithm}

\subsubsection{\textproc{MergeRay}: \textit{Merges a Sector-Ray into a Link}}
The \textproc{MergeRay} function (Alg. \ref{suppalg:mergeray}) replaces the $\mside$-side ray of a link $\mlink$ with $\mray_\mnew$ if the resulting angular-sector becomes smaller.
\begin{algorithm}[!ht]
\begin{algorithmic}[1]
\caption{Merge Ray if resulting angular-sector becomes smaller}
\label{suppalg:mergeray}
\Function{MergeRay}{$\mside, \mlink, \mray_\mnew$} 
    \State $\mray_\mathrm{old} \gets \fray(\mside, \mlink)$
    \If {$\mray_\mathrm{old} = \varnothing$}
        \State $\fray(\mside, \mlink) \gets \mray_\mnew$ \Comment{Replace the ray}
    \Else
        \State $\mv_\mathrm{rayOld} \gets \mray_\mathrm{old}\mdot\mx_T - \mray_\mathrm{old}\mdot\mx_S$
        \State $\mv_\mathrm{rayNew} \gets \mray_\mnew\mdot\mx_T - \mray_\mnew\mdot\mx_S$
        \If {$\mside(\mv_\mathrm{rayOld}, \mv_\mathrm{rayNew}) \ge 0$}        
            \State $\fray(\mside, \mlink) \gets \mray_\mnew$ \Comment{Replace the ray}
        \EndIf
    \EndIf
\EndFunction
\end{algorithmic}
\end{algorithm}

\subsubsection{\textproc{MinCost}: \textit{Returns the Minimum Cost of Connect Links}}
The function \textproc{MinCost} (Alg. \ref{suppalg:mincost}) finds and returns the minimum cost of a link's ($\mlink$) source or target ($\mtdir \in \{S, T\}$) links. 
The minimum cost does not include the length of the link.
\rtwop{} guarantees that all $\mtdir$ links lie on the same tree when \textproc{MinCost} is used.
\begin{algorithm}[!ht]
\begin{algorithmic}[1]
\caption{Find minimum cost of source or target links.}
\label{suppalg:mincost}
\Function{MinCost}{$\mtdir, \mlink$}
    \State $c_{\min} \gets \infty$
    \For {$\mlink_\mtdir \in \flinks(\mtdir, \mlink)$}
        \If{$c_{\min} > \mlink_\mtdir \mdot c$}
            \State $c_{\min} \gets \mlink_\mtdir \mdot c$
        \EndIf
    \EndFor
    \State \Return $c_{\min}$
\EndFunction
\end{algorithmic}
\end{algorithm}

\subsubsection{\textproc{Cost}: \textit{Returns Cost of a Link}}
The function \textproc{Cost}  (Alg. \ref{suppalg:cost}) finds and reutrns the cost of a link $\mlink$ by adding the minimum cost of the parent links to the length of the link, which is between the link's anchored node and its parent node.
\rtwop{} guarantees that all links lie on the same tree when \textproc{Cost} is used. If $\mlink$ is anchored on an $S$-tree node or $T$-tree node, the cost is cost-to-come or cost-to-go respectively.
\begin{algorithm}[!ht]
\begin{algorithmic}[1]
\caption{Calculate cost of a link.}
\label{suppalg:cost}
\Function{Cost}{$\mlink$}
    \State $\mnode \gets \fnode(\mlink)$
    \State $\mnodepar \gets \fnode(\flink(\mnode\mdot\mtdirnode, \mlink))$
    \State \Return $\lVert \fx(\mnode) - \fx(\mnodepar) \rVert$ + \Call{MinCost}{$\fnode(\mlink), \mlink$} \Comment{$\lVert \cdot \rVert$ denotes the Euclidean L2-norm}
\EndFunction
\end{algorithmic}
\end{algorithm}

\subsubsection{\textproc{CopyLink}: \textit{Duplicates a Link and its Connections}}
The \textproc{CopyLink} function (Alg.\ref{suppalg:copylink}) duplicates a link $\mlink$ and anchors the duplicated link at a new node $\mnode_\mnew$ if $\mnode_\mnew \ne \varnothing$. 
If $\mnode_\mnew = \varnothing$, $\mnode_\mnew$ defaults to $\fnode(\mlink)$.
The duplicated link is connected to the link's source or target links depending on $\mtdirs$. If $\mtdirs = \{S,T\}$ all links are connected. If $\mtdirs=\{S\}$ or $\mtdirs=\{T\}$ only the source or target links are connected respectively.
\begin{algorithm}[!ht]
\begin{algorithmic}[1]
\caption{Copy a link.}
\label{suppalg:copylink}
\Function{CopyLink}{$\mlink, \mnode_\mnew, \mtdirs$}
    \Comment{$\mtdirs \in \{\{S,T\}, \{S\}, \{T\}\}$}
    \If {$\mnode_\mnew = \varnothing$}
        \State $\mnode_\mnew \gets \fnode(\mlink)$
    \EndIf
    \State $\mlink_\mnew \gets $ \Call{CreateLink}{$\mnode_\mnew$}
    \State $\mlink_\mnew\mdot c \gets \mlink\mdot c$
    \State $\mlink_\mnew\mdot\mray_L \gets \mlink\mdot\mray_L$
    \State $\mlink_\mnew\mdot\mray_R \gets \mlink\mdot\mray_R$
    \For {$\mtdir \in \mtdirs$}
        \For {$\mlink_\mtdir \in \flinks(\mtdir, \mlink)$}
            \State \Call{Connect}{$\mtdir, \mlink_\mnew, \mlink_\mtdir$}
        \EndFor
    \EndFor
    \State \Return $\mlink_\mnew$
\EndFunction
\end{algorithmic}
\end{algorithm}

\subsubsection{\textproc{Isolate}: \textit{Isolates a Link Connection}}
The \textproc{Isolate} function  (Alg. \ref{suppalg:isolate}) tries to isolate the connection between a link $\mlink$ and a $\mtdir$-direction ($\mtdir\in\{S,T\}$) link $\mlink_\mtdir$.
If $\mlink$ is connected to only $\mlink_\mtdir$ in the $\mtdir$-direction, $\mlink$ is returned.
If $\mlink$ is connected to multiple links in the $\mtdir$-direction including $\mlink_\mtdir$, a new link is created that connects only to $\mlink_\mtdir$, and the connection between $\mlink$ and $\mlink_\mtdir$ is removed.
The function helps \rtwop{} to avoid data races between different queries.

$\mnode_\mnew$ is the new anchor node after the isolation. If $\mlink$ is not copied, $\mlink$ is re-anchored at $\mnode_\mnew$. If $\mlink$ is copied, the new link is anchored at $\mnode_\mnew$ instead.
If $\mnode_\mnew$ is not set ($\mnode_\mnew = \varnothing$ ), $\mnode_\mnew$ is defaulted to $\fnode(\mlink)$.
\begin{algorithm}[!ht]
\begin{algorithmic}[1]
\caption{Isolate a connection between two links.}
\label{suppalg:isolate}
\Function{Isolate}{$\mtdir$, $\mlink$, $\mlink_\mtdir$, $\mnode_\mnew$} \Comment{Note, if $\mlink_\mtdir \ne \varnothing$ then $\mlink_\mtdir \in \flinks_{\mtdir}(\mlink)$}
    \State $numLinks \gets \lvert \flinks_{\mtdir}(\mlink) \rvert$ \Comment{$numLinks$ is the number of links that are in $\mtdir$ direction of $\mlink$}
    \If {($numLinks = 0$ \An $\mlink_\mtdir = \varnothing$) 
        \Or ($numLinks = 1$ \An $\mlink_\mtdir \ne \varnothing$)} 
        \State $\mlink_\mnew \gets \mlink$ \Comment{Nothing to isolate}
    \Else    \Comment{Copy and isolate $(\mlink$, $\mlink_\mtdir)$ connection}
        
        \State $\mlink_\mnew \gets $ \Call{CreateLink}{$\fnode(\mlink)$}
        \State $\mlink_\mnew\mdot c \gets \mlink\mdot c$
        \State $\mlink_\mnew\mdot\mray_L \gets \mlink\mdot\mray_L$
        \State $\mlink_\mnew\mdot\mray_R \gets \mlink\mdot\mray_R$
        
        \For {$\mlink_{-\mtdir} \in \flinks(-\mtdir, \mlink)$}
            \State \Call{Connect}{$-\mtdir$, $\mlink_\mnew$, $\mlink_{-\mtdir}$}
        \EndFor
        \If {$\mlink \ne \varnothing$}
            \State \Call{Connect}{$\mtdir$, $\mlink_\mnew$, $\mlink_\mtdir$}
            \State \Call{Disconnect}{$\mtdir$, $\mlink$, $\mlink_\mtdir$}
        \EndIf
    \EndIf
    \If {$\mnode_\mnew \ne \varnothing$}
        \State \Call{Anchor}{$\mlink_\mnew, \mnode_\mnew$}
    \EndIf
    \State \Return $\mlink_\mnew$
\EndFunction
\end{algorithmic}
\end{algorithm}

\subsubsection{\textproc{EraseTree}: \textit{Deletes Dangling Links}}
The \textproc{EraseTree} function (Alg. \ref{suppalg:erasetree}) erases a link $\mlink$ is dangling and does not connect to links in the $(-\mtdir)$-direction. 
The removal of $\mlink$ can cause the $\mtdir$-direction links to dangle, which the function subsequently erases.
If any dangling link is pointing to a queued query, the query is removed from the open-list and deleted.
\begin{algorithm}[!ht]
\begin{algorithmic}[1]
\caption{Erase Tree}
\label{suppalg:erasetree}
\Function{EraseTree}{$\mtdir$, $\mlink$} 
    \If{$\flinks(-\mtdir, \mlink) \ne \{\}$}
        \State \Return
    \ElsIf{$\fquery(\mlink) \ne \varnothing$}
        \State Unqueue $\fquery(\mlink)$ from open-list
    \EndIf
    \For {$\mlink_\mtdir \in \flinks(\mtdir, \mlink)$}
        \State \Call{Disconnect}{$\mtdir$, $\mlink$, $\mlink_\mtdir$}
        \Call{EraseTree}{$\mtdir$, $\mlink$}
    \EndFor
\EndFunction
\end{algorithmic}
\end{algorithm}

\newpage
\subsection{Sector-ray}
A sector-ray has the following form
\begin{equation}
    \mray = (\mrtype, \mx_S, \mx_T, \mx_L, \mx_R),
\end{equation}
which records a cast from $\mx_S$ to $\mx_T$ and any collision information $\mx_L$ and $\mx_R$.
The ray type $\mrtype \in \{\mrvy, \mrvu, \mrvn\}$ indicates visibility between $\mx_S$ and $\mx_T$.
If $\mx_S$ and $\mx_T$ are visible to each other, $\mrtype = \mrvy$. If they are not visible to each other, $\mrtype= \mrvn$. If the ray is not yet cast and visibility is unknown, $\mrtype = \mrvu$.

When a ray collides with an obstacle's edge and $\mrtype=\mrvn$, $\mx_L$ and $\mx_R$ record the first corner encountered after tracing left and right from the collision point respectively.  
If the ray is $\mrvy$-typed, it can be \textbf{projected} in the direction $\mora{\mx_S,\mx_T}$ from $\mx_T$. 
The projected ray may collide and $\mx_L$ and $\mx_R$ records the collision of the projected ray. 
Note that, in \rtwop{}, projections always collide and never goes out of the map.

\subsubsection{\textproc{$\fxcol$}: \textit{Returns a Corner Beside the Collision Point}}
\textproc{$\fxcol$} (Alg. \ref{suppalg:fxcol}) returns first corner on the $\mside$-side ($\mside\in\{L,R\}$) of the collision point of a ray $\mray$. 
\begin{algorithm}[!ht]
\begin{algorithmic}[1]
\caption{Get collision point's left or right corner.}
\label{suppalg:fxcol}
\Function{$\fxcol$}{$\mside, \mray$}
    \State \Return $\mray\mdot\mx_L$ \textbf{if} $\mside = L$ \textbf{else} $\mray\mdot\mx_R$
\EndFunction
\end{algorithmic}
\end{algorithm}

\subsubsection{\textproc{GetRay}: \textit{Retrieves or Constructs a Sector-ray}} 
The function \textproc{GetRay} (Alg. \ref{suppalg:getray}) finds a matching ray from $\mx_S$ to $\mx_T$ ifit exists and returns it.
If it does not exist, a new ray is created and returned.
\begin{algorithm}[!ht]
\begin{algorithmic}[1]
\caption{Retrieve existing ray or create it if it does not exist.}
\label{suppalg:getray}
\Function{GetRay}{$\mx_S$, $\mx_T$}
    \State Try finding $\mray = (\cdots, \mx_S, \mx_T, \cdots)$.
    \If{ $\mray$ does not exist}
        \State Create $\mray = (\mrvu, \mx_S, \mx_T, \varnothing, \varnothing)$
    \EndIf
    \State \Return $\mray$
\EndFunction
\end{algorithmic}
\end{algorithm}

\subsection{Queued Query and Open-list} \label{suppsec:queuedquery}
A query, when queued to the open-list, has the form
\begin{equation}
    \mquery = (\mqtype, f, \mlink_\mquery),
\end{equation}
where $\mqtype \in \{ \mqcast, \mqtrace \}$ is used to denote if the query is a casting or tracing query. 
$f$ is the sum of the minimum cost-to-go and cost-to-come of a link $\mlink_\mquery$, which is anchored on a leaf node \textproc{$\fnode$}($\mlink_\mquery$). 



The \textbf{open-list} is a priority queue that sorts queries based on a query's cost $f$,
which serves the same purpose as the A*'s open-list.
The default sorting algorithm of the open-list is insert-sort which is fast if the number of queries are small. The number of queries are small if the path is expected to have few turning points. 
A more efficient sorting algorithm may have higher overheads, but can improve solving times if the path is expected to have many turning points, especially in maps with highly non-convex obstacles or many disjoint obstacles.

\subsubsection{\textproc{Queue}: \textit{Queues a Query into the Open-list}}
The function \textproc{Queue} (Alg. \ref{suppalg:queue}) creates a query $\mquery$, and
sorts it into the open-list based on its cost $\mquery\mdot f$.
\begin{algorithm}[!ht]
\begin{algorithmic}[1]
\caption{Queue to Open-list}
\label{suppalg:queue}
\Function{Queue}{$\mqtype, f, \mlink_\mquery$}
    \State $\mquery \gets (\mqtype, f, \mlink_\mquery)$
    \State Insert $\mquery$ into open-list and sort $\mquery$ based on $\mquery\mdot f$
\EndFunction
\end{algorithmic}
\end{algorithm}

\subsubsection{\textproc{Unqueue}: \textit{Removes a Queued Query from the Open-list}}
The function \textproc{Unqueue} (Alg. \ref{suppalg:unqueue}) removes a query $\mquery$ from the open-list.
\begin{algorithm}[!ht]
\begin{algorithmic}[1]
\caption{Unqueue from Open-list}
\label{suppalg:unqueue}
\Function{Unqueue}{$\mquery$}
    \State Remove $\mquery$ from the open-list
\EndFunction
\end{algorithmic}
\end{algorithm}

\subsubsection{\textproc{Poll}: \textit{Removes Cheapest Query from the Open-list}}
The function \textproc{Poll} (Alg. \ref{suppalg:poll}) finds the cheapest queued query and returns it after removing it from the open-list.
\begin{algorithm}[!ht]
\begin{algorithmic}[1]
\caption{Poll Cheapest Queued Query from Open-list}
\label{suppalg:poll}
\Function{Poll}{\null}
    \State $\mquery \gets $ query that has smallest cost $\mquery\mdot f$ in open-list
    \State \Call{Unqueue}{$\mquery$}
    \State \Return $\mquery$
\EndFunction
\end{algorithmic}
\end{algorithm}

\subsection{Trace Object}
During a trace, any information or states related to the trace are captured in the Trace object
\begin{equation}
    \mtrace = ( \mxtrace, \msidetrace, \mtnode_S, \mtnode_T, \mnlets_S, \mnlets_T, \mnumcrns, \moverlap).
\end{equation}
$\mxtrace$ is the pair of coordinates for the corner being examined by the trace. $\msidetrace$ is the side of the trace. 
$\mtnode_S$ and $\mtnode_T$ are trace-nodes described in Sec. \ref{suppsec:tnodesandtlinks}.
$\mnlets_S$ and $\mnlets_T$ are ordered sets containing nodelets, described in Sec. \ref{suppsec:nodelets}.
$\mnumcrns$ is a counter for the number of corners traced, and $\moverlap$ is a Boolean flag indicating if the trace has found nodes that overlap with other queries.

\newpage
\subsubsection{\textproc{$\ftnode$}: \textit{Gets a Trace-node from a Trace Object}}
\textproc{$\ftnode$} (Alg. \ref{suppalg:ftnode}) returns the $\mtdir$-direction ($\mtdir\in\{S,T\}$) trace-node $\mtnode_S$ or $\mtnode_T$ of a Trace object $\mtrace$.
\begin{algorithm}[!ht]
\begin{algorithmic}[1]
\caption{Get the source or target trace-node in Trace object}
\label{suppalg:ftnode}
\Function{$\ftnode$}{$\mtdir, \mtrace$}
    \State \Return $\mtrace\mdot\mtnode_S$ \textbf{if} $\mtdir = S$ \textbf{else} $\mtrace\mdot\mtnode_T$
\EndFunction
\end{algorithmic}
\end{algorithm}

\subsubsection{\textproc{$\fnlets$}: \textit{Gets a Set of Nodelets from a Trace Object}}
\textproc{$\fnlets$} (Alg. \ref{suppalg:fnlets}) returns the $\mtdir$-direction ($\mtdir\in\{S,T\}$) set of nodelets $\mnlets_S$ or $\mnlets_T$ of a Trace object $\mtrace$.
\begin{algorithm}[!ht]
\begin{algorithmic}[1]
\caption{Gets the source or target nodelet set in Trace object}
\label{suppalg:fnlets}
\Function{$\fnlets$}{$\mtdir, \mtrace$}
    \State \Return $\mtrace\mdot\mnlets_S$ \textbf{if} $\mtdir = S$ \textbf{else} $\mtrace\mdot\mnlets_T$
\EndFunction
\end{algorithmic}
\end{algorithm}

\subsubsection{\textproc{$\fnlet$}: \textit{Gets a Nodelet from a Trace Object}}
\textproc{$\fnlet$} (Alg. \ref{suppalg:fnlet}) returns the first nodelet from the $\mtdir$-direction ($\mtdir\in\{S,T\}$) set of nodelets $\mnlets_S$ or $\mnlets_T$. 
The nodelets belong to a Trace object $\mtrace$.
The function is useful since traces contain only one source nodelet at all times, and there may occasionally be only one target nodelet.
\begin{algorithm}[!ht]
\begin{algorithmic}[1]
\caption{Gets the first source or target nodelet}
\label{suppalg:fnlet}
\Function{$\fnlet$}{$\mtdir, \mtrace$}
    \State \Return $\mtrace\mdot\mnlets_S$[1] \textbf{if} $\mtdir = S$ \textbf{else} $\mtrace\mdot\mnlets_T$[1]
\EndFunction
\end{algorithmic}
\end{algorithm}

\subsubsection{\textproc{CreateTrace}: \textit{Creates a Trace Object}}
The function \textproc{CreateTrace} (Alg. \ref{suppalg:createtrace}) creates a new trace object and returns it.
\begin{algorithm}[!ht]
\begin{algorithmic}[1]
\caption{Create a Trace object.}
\label{suppalg:createtrace}
\Function{CreateTrace}{$\mxtrace, \msidetrace$}
    \State $\mtrace \gets (\mxtrace, \msidetrace, \varnothing, \varnothing, \{\}, \{\}, 0, \mfalse)$
    \State $\mtrace \mdot \mnode_S \gets (\mntm, \msidetrace, S, \{\})$
    \State $\mtrace \mdot \mnode_T \gets (\mntm, \msidetrace, T, \{\})$
    \State \Return $\mtrace$
\EndFunction
\end{algorithmic}
\end{algorithm}

\subsection{Trace-nodes and Trace-links} \label{suppsec:tnodesandtlinks}
Trace objects contain temporary nodes (\textbf{trace-nodes}) $\mtnode_S$ and $\mtnode_T$, which follow the trace and are always located at $\mxtrace$, such that $\fx(\mtnode_S) = \fx(\mtnode_T) = \mxtrace$.
The trace-nodes are not part of the source-tree or target-tree. The trace-nodes anchor temporary links called \textbf{trace-links}. 
The parent node of a trace-link is located on the source-tree or target-tree if the trace-link is anchored at $\mtnode_S$ or $\mtnode_T$ respectively.
At the start of the trace, trace-links may have been re-anchored to the trace-nodes from nodes in the trees, or may have been duplicated from existing links.

Trace-links do not have child links, and are intentionally left \textbf{dangling} (not connected to any source link and/or target link) to facilitate pruning and placement.
When a trace stops, trace-links are re-anchored to the source-tree or target-tree nodes, and re-connected to other links if they are not discarded.
If the trace-links are discarded, they would be1 left dangling.

\subsection{Nodelets} \label{suppsec:nodelets}
Trace objects contain \textbf{nodelets} in ordered sets $\mnlets_S$ and $\mnlets_T$.
A nodelet describes a trace-link and its associated progression ray and winding counter:
\begin{equation}
\mnlet = ( \mtlink, \mvprog, \mcprog).
\end{equation}
where $\mtlink$ is a trace-link anchored on a trace-node. 
$\mvprog$ is the progression ray with respect to the parent node. $\mcprog \ge 0$ is the winding counter for the progression ray.

During a trace, the nodelets are examined in sequence, and may be removed or added.
At any one time during the trace, only \textit{one} source nodelet is examined, while at least one target nodelet is examined. 
The trace ends when there are no source or target nodelets left.

\subsubsection{\textproc{CreateNodelet}: \textit{Creates a Nodelet}}
The function \textproc{CreateNodelet} (Alg. \ref{suppalg:createnodelet}) creates a new nodelet object, pushes it to the front or back ($pos = \{\mfront, \mback\}$) of $\mnlets$ and returns the nodelet object.
\begin{algorithm}[!ht]
\begin{algorithmic}[1]
\caption{Create a new nodelet.}
\label{suppalg:createnodelet}
\Function{CreateTraceNode}{$\mtlink, \mvprog, pos, \mnlets$}
    \State $\mnlet \gets (\mtlink, \mvprog, 0)$
    \State Push $\mnlet$ to $pos$ of $\mnlets$
        \Comment{$pos \in \{\mfront, \mback\}$}
    \State \Return $\mnlet$
\EndFunction
\end{algorithmic}
\end{algorithm}

\subsection{Overlap-buffer}
The \textbf{overlap-buffer} is an unordered set containing Position objects. Each position object contains source-tree $\mnvu$ and/or $\mneu$ nodes. The nodes anchor links which are part of queries that overlap. 

During an iteration between two polls from the open-list, a few traces may occur.
The overlap-buffer is filled during the traces, if overlaps are identified.
The overlap-buffer is subsequently emptied and the links processed by the overlap rule after all traces in the iteration have stopped. 

\subsubsection{\textproc{PushOverlap}: \textit{Push a Position Object into Overlap-buffer}}
The function \textproc{PushOverlap} (Alg. \ref{suppalg:pushoverlap}) pushes a Position object $\mpos$ into the overlap-buffer. The overlap-buffer is a set of $\mpos$, and is filled at the end of a trace when overlapping paths are identified during the trace.
The reader may choose to avoid adding duplicate $\mpos$ into the overlap-buffer.

\begin{algorithm}[!ht]
\begin{algorithmic}[1]
\caption{Push to Overlap-buffer}
\label{suppalg:pushoverlap}
\Function{PushOverlap}{$\mpos$}
    \State Insert $\mpos$ to overlap-buffer (an unordered set of $\mpos$).
\EndFunction
\end{algorithmic}
\end{algorithm}

\subsection{Occupancy Grid}
The current implementation of \rtwop{} operates on a binary occupancy grid.
Each cell is either occupied or free, and \rtwop{} finds an any-angle path on vertices, which are the corners of the cells.
The occupancy grid is implemented as a Boolean array. 
Hash tables or equivalent containers consume less memory but are slower to access for two-dimensional occupancy grids.

\subsubsection{\textproc{Bisect}: \textit{Gets Bisecting Directional Vector of a Corner}} 
The \textproc{Bisect} function (Alg. \ref{suppalg:bisect}) returns the directional vector that bisects the occupied-sector of a corner at $\mx$. 
For an occupancy grid, the bisecting vector points in the ordinal (northeast, northwest, etc.) directions.

\begin{algorithm}[!ht]
\begin{algorithmic}[1]
\caption{Get Bisecting Directional Vector of a Corner's Occupied-sector}
\label{suppalg:bisect}
\Function{Bisect}{$\mx$} 
    \State \Return vector parallel to a directional vector that points into and bisects the occupied-sector at the corner at $\mx$.
\EndFunction
\end{algorithmic}
\end{algorithm}

\subsubsection{\textproc{GetEdge}: \textit{Gets Directional Vector of an Obstacle's Edge}} 
The \textproc{GetEdge} function (Alg. \ref{suppalg:getedge}) returns the directional vector of an edge on the $\mside$-side of a corner at $\mx$. 
The directional vector should be \textit{parallel} to $\mx_\mside$ - $\mx$, where $\mx_\mside$ is the corner at the other end of the $\mside$-side edge.

\begin{algorithm}[!ht]
\begin{algorithmic}[1]
\caption{Get Edge Vector}
\label{suppalg:getedge}
\Function{GetEdge}{$\mx$, $\mside$} 
    \State \Return vector parallel to $\mside$-side edge adjacent to corner at $\mx$. \Comment{Vector points from $\mx$ to corner on the $\mside$-side.}
\EndFunction
\end{algorithmic}
\end{algorithm}

\subsubsection{\textproc{Trace}: \textit{Traces an Obstacle's Edge}} 
The function \textproc{Trace} (Alg. \ref{suppalg:trace} traces to any corner from a position $\mx$ along the $\mside$-side edge of $\mx$. 
$\mx$ must be located on an obstacle's edge.
Corners encountered by the algorithm should be cached as a graph of connected corners, to allow repeatedly traced corners to be identified in constant time by the function.

The intersection of an obstacle edge with the map boundary is considered a corner.
A trace that continues from the boundary corner will cause the function to return $\varnothing$ as the trace has gone out of map.
\begin{algorithm}[!ht]
\begin{algorithmic}[1]
\caption{Trace To Corner}
\label{suppalg:trace}
\Function{Trace}{$\mx$, $\mside_d$}
    \State $\mx_\mnext \gets$ which is position of the first corner at the $\mside$-side of $\mx$
    \State \Return $\mx_\mnext$ if in map, or $\varnothing$ if out of map.
\EndFunction
\end{algorithmic}
\end{algorithm}

\subsubsection{\textproc{LOS}: \textit{Collision Line Algorithm}}
The function \textproc{LOS} (Alg. \ref{suppalg:los}) casts or projects a ray using a line algorithm that can detect collisions.
If the occupancy grid is used, the Bresenham line algorithm can be implemented but should be modified to allow all cells intersected by the ray to be identified.
If a collision occurs, the function finds the first left or right corner from the collision point.

In \rtwop{}, a projected ray can never go out of a rectangular map, or a map with a convex shape. 
For completeness, the reader may choose to implement an out-of-map check when a projection occurs.

\rtwop{} depends on $\mx_L$ and $\mx_R$ to check if a trace has crossed a collision point of a sector-ray. Careful positioning of $\mx_L$ and $\mx_R$ is required to break ties in discrete, special cases. 
The cases can include a trace being parallel to a sector ray, or the start node lying on a corner or an edge, etc.
\begin{algorithm}[!ht]
\begin{algorithmic}[1]
\caption{Line-of-sight and Collision Finder}
\label{suppalg:los}
\Function{LOS}{$cast, \mray = (\mrtype, \mx_S, \mx_T, \mx_L, \mx_R)$}
    \State $\mv_\mathrm{ray} \gets \mx_T - \mx_S$
    \If {$cast = \mtrue$}
        \State Do line algorithm in direction $\mv_\mathrm{ray}$ from $\mx_S$ until collision or $\mx_T$ is reached.
    \Else
    \Comment{Project}
        \State Do line algorithm in direction $\mv_\mathrm{ray}$ from $\mx_T$ until collision. 
    \EndIf
    \If {a collision occurs at some $\mx_\mathrm{col}$}
        \State $\mray\mdot\mrtype \gets \mrvn$
        \If {$\mx_\mathrm{col}$ is at a corner}
            \State $\mray\mdot\mx_L \gets \mx_\mcol$
            \State $\mray\mdot\mx_R \gets \mx_\mcol$
            \State $\mv_\mathrm{crn} \gets $ \Call{Bisect}{$\mx_\mcol$}
            \State $u = \mv_\mathrm{crn} \times \mv_\mathrm{ray}$
            \If {$u < 0$}
            \Comment{Ray points to right of $\mv_\mathrm{crn}$}
                \State $\mray\mdot\mx_R \gets$ \Call{Trace}{$\mx_\mathrm{col}$, $R$}
            \ElsIf {$u > 0$}
                \Comment{Ray points to left of $\mv_\mathrm{crn}$}
                \State $\mray\mdot\mx_L \gets$ \Call{Trace}{$\mx_\mathrm{col}$, $L$}
            \EndIf
        \Else 
        \Comment{$\mx_\mathrm{col}$ is on an edge}
            \State $\mray\mdot\mx_L \gets $ \Call{Trace}{$\mx_\mathrm{col}, L$}
            \State $\mray\mdot\mx_R \gets $ \Call{Trace}{$\mx_\mathrm{col}, R$}
        \EndIf
    \Else \Comment{No collision, occurs only when $cast = \mtrue$}
        \State $\mray\mdot\mrtype \gets \mrvy$ 
    \EndIf
\EndFunction
\end{algorithmic}
\end{algorithm}

\subsubsection{\textproc{Cast}: \textit{Casts a Ray}}
The function \textproc{Cast} (Alg. \ref{suppalg:cast}) wraps the function \textproc{LOS}, and returns immediately if the ray has been cast. 
\vspace{-0.3cm}
\begin{algorithm}[!ht]
\begin{algorithmic}[1]
\caption{Cast}
\label{suppalg:cast}
\Function{Cast}{$\mray = (\mrtype, \mx_S, \mx_T, \mx_L, \mx_R)$}
    \IfThen {$\mrtype = \mrvu$} { \Call{LOS}{$\mtrue, \mray$} }
\EndFunction
\end{algorithmic}
\end{algorithm}

\subsubsection{\textproc{Project}: \textit{Projects a Ray}}
The function \textproc{Project} (Alg. \ref{suppalg:project}) wraps the function \textproc{LOS}, and returns immediately if the ray has been cast.
\vspace{-0.3cm}
\begin{algorithm}[!ht]
\begin{algorithmic}[1]
\caption{Project}
\label{suppalg:project}
\Function{Project}{$\mray = (\mrtype, \mx_S, \mx_T, \mx_L, \mx_R)$}
    \IfThen {$\mx_L = \varnothing$}{ \Call{LOS}{$\mfalse, \mray$}}
\EndFunction
\end{algorithmic}
\end{algorithm}

\clearpage
\section{The \rtwop{} Algorithm}
This section details the \rtwop{} algorithm, and is constructed from the objects and methods listed in the prior sections.

\renewcommand{\thealgorithm}{\thesubsection} 
\subsection{\textproc{Run}: Main Algorithm}
\begin{algorithm}[!ht]
\begin{algorithmic}[1]
\caption{Main \rtwop{} algorithm}
\label{suppalg:run}
\Function{Run}{$\mxstart, \mxgoal$}
    \State $path \gets \{\}$
    \Comment{open-list, overlap-buffer, $path$, $\mxstart$ and $\mxgoal$ are accessible to all functions in R2*.}
    \If {\Call{InitialCaster}{\null} $ = \mfalse$} 
        \Comment{No direct path found.}
        \While{open-list is not empty}
            \State $\mquery \gets $ \Call{Poll}{\null}
            \If {$\mquery\mdot\mqtype = \mqtrace$}
                \State \Call{TracerFromLink}{$\mquery\mdot\mlink$}
            \ElsIf {\Call{Caster}{$\mquery\mdot\mlink$}}
                \State \Break \Comment{Path found.}
            \EndIf
            \If {overlap-buffer is not empty}
                \State \Call{ShrinkSourceTree}{\null} \Comment{Overlap rule for traces that overlap in this iteration.}
            \EndIf
        \EndWhile
    \EndIf

    \State \Return $path$

\EndFunction
\end{algorithmic}
\end{algorithm}
\renewcommand{\thealgorithm}{\thesubsubsection} 

\subsection{Initial Casting and Tracing}
This section details initial functions used to initialize \rtwop{} and bring the algorithm into the main iteration between open-list polls.

\subsubsection{\textproc{InitialCaster}: \textit{First Cast}} \label{suppsec:initialcaster}
The first caster function \textproc{InitialCaster} (Alg. \ref{suppalg:initialcaster}) is a special casting-query function between the start and goal points. The function is used only once.

If there is line-of-sight between the start and goal points, \rtwop{} returns immediately with the path.
If the ray from the start to goal collides, \rtwop{} begins initialization, and two traces and a reversed ray are generated.

The reversed ray begins at the goal point and ends at the start point, which is opposite to the direction of the cast.
The reversed ray ensures that calculations with a start node's angular-sector are correct, primarily by dividing the start node into two nodes, each with a $180^\circ$ angular-sector.
The $180^\circ$ sectors are bounded by the forward ray (start to goal) and the reversed ray (goal to start), and ensures that all calculations with the cross-product are correct.

The goal node is initialized before being passed to the tracing queries.
\begin{algorithm}[!ht]
\begin{algorithmic}[1]
\caption{InitialCaster}
\label{suppalg:initialcaster}
\Function{InitialCaster}{\null}
    \State $\mray \gets $ \Call{GetRay}{$\mxstart, \mxgoal$}
    \State \Call{Cast}{$\mray$}
    \If {$\mray\mdot\mrtype = \mrvy$}
        \Comment{Start and goal points have line-of-sight}
        \State $path \gets \{ \mxgoal, \mxstart \}$
        \State \Return $\mtrue$
    \EndIf
    
    \Comment{Cast collided, create nodes and begin tracing queries}
    \State $\mray_\mathrm{rev} \gets $ \Call{GetRay}{$\mxgoal, \mxstart$}
    \Comment{A special, reversed ray to ensure that the ang-sec. for start nodes are convex.}
    \State $\mnode_{T} \gets $ \Call{GetNode}{$\mxgoal, \mnvy, L, T$} 
    \Comment{Goal node can have any side.}

    \State \Call{InitialTrace}{$L, \mray, \mray_\mathrm{rev}, \mnode_T$}
    \State \Call{InitialTrace}{$R, \mray, \mray_\mathrm{rev}, \mnode_T$}

    \State \Return $\mfalse$
\EndFunction
\end{algorithmic}
\end{algorithm}

\subsubsection{\textproc{InitialTrace}: \textit{First Trace}}
When a cast from the start point to the goal point collides, the \textproc{InitialTrace} function (Alg. \ref{suppalg:initialcaster}) initializes links and two start nodes.
After initialization, a trace begins from one-side of the collision point. 
The function is used only twice, each for one side of the collision.

Two start nodes $\mnode_S$ and $\mnode_{SS}$ with sides $\msidetrace$ are created by the function, each with a $180^\circ$ angular-sector. 
For brevity, a \textit{start node} in the text refers to any one of the two, unless a distinction needs to be made.
Zero-length links $\mlink_S$ and $\mlink_{SS}$ are anchored at $\mnode_S$ and $\mnode_{SS}$ respectively. 
$\mlink_{SS}$ is the source link of $\mlink_S$, and $\mlink_S$ is the source link of the trace-link $\mtlink_S$ which is anchored on the source trace-node.
$\mtlink_S$ and $\mlink_S$ stores the $180^\circ$ angular-sectors of $\mnode_S$ and $\mnode_{SS}$ respectively.
When viewed from the start point, the angular-sector of $\mnode_S$ is the \textit{first} $180^\circ$ angular sector, which rotates in the $\mside$-direction from the forward ray to the reverse ray.
The angular-sector of $\mnode_{SS}$ is the \textit{second} $180^\circ$ angular sector, which rotates from the reverse ray to the forward ray.


The goal link is initialized in the tracing query instead of in \textproc{InitialCaster} like the goal node.
If the goal link is initialized in \textproc{InitialCaster}, it may be deleted by the first call to \textproc{InitialTracer} and before the second call is made, as the first trace can go out-of-map.

Note that as more collisions occur, more links will be connected to the goal link, and the set of source link pointers $\mlinks_S$ in the goal link can become anomalously large, particularly in maps with highly non-convex obstacles or many disjoint obstacles.
For example, while the majority of links may have connections numbering less than ten, the goal link may have over a few thousand connections.
The reader may choose to use a container with constant time insertion or deletion (note that $\mlinks_S$ is unordered), but as the number of links are usually very small, a contiguous data structure with linear time insertion or deletion may be more efficient.
If \rtwop{} is expected to perform on maps with highly non-convex obstacles or with many disjoint obstacles, a container with constant time insertion and deletion is recommended. 
Otherwise, a simple, contiguous data structure like an array will suffice, which is the default implementation of \rtwop{}.

\begin{algorithm}[!ht]
\begin{algorithmic}[1]
\caption{InitialTrace}
\label{suppalg:initialtrace}
\Function{InitialTrace}{$\msidetrace, \mray, \mray_\mathrm{rev}, \mnode_T$}
    \State $\mtrace \gets $ \Call{CreateTrace}{$\fxcol(\msidetrace, \mray), \msidetrace$}

    \Comment{Create source trace-link}
    \State $\mtlink_S \gets $ \Call{CreateLink}{$\mtrace\mdot\mtnode_S$}
    \State $\fray(\msidetrace, \mtlink_S) \gets \mray_\mathrm{rev}$
    \State $\fray(-\msidetrace, \mtlink_S) \gets \mray$
    \State $\mvray \gets \mray\mdot\mx_T - \mray\mdot\mx_S$

    \Comment{Create $\msidetrace$-side start node and link for first $180^\circ$ ang-sec.}
    \State $\mnode_S \gets $ \Call{GetNode}{$\mxstart, \mnvy, \msidetrace, S$}
    \State $\mlink_S \gets $ \Call{CreateLink}{$\mnode_S$}
    \State $\fray(\msidetrace, \mlink_S) \gets \mray$
    \State $\fray(-\msidetrace, \mlink_S) \gets \mray_\mathrm{rev}$
    \State $\mlink_S\mdot\mcost \gets 0$
    \State \Call{Connect}{$T, \mlink_S, \mtlink_S$}

    \Comment{Create link for second $180^\circ$ ang-sec of start node.}
    \State $\mlink_{SS} \gets $ \Call{CreateLink}{$\mnode_S$}
    \State $\mlink_{SS}\mdot\mcost \gets 0$
    \State \Call{Connect}{$T, \mlink_{SS}, \mlink_S$}

    \Comment{Create a link anchored at goal node and a target trace-link}
    \State $\mtlink_T \gets $ \Call{CreateLink}{$\mtrace\mdot\mtnode_T$}
    \State $\mlink_T \gets $ \Call{CreateLink}{$\mtnode_T$}
    \State $\mlink_T\mdot\mcost \gets 0$
    \State \Call{Connect}{$T, \mtlink_T, \mlink_T$}

    \State \Call{CreateNodelet}{$\mtlink_S, \mvray, \mback, \mtrace\mdot\mnlets_S$}
    \State \Call{CreateNodelet}{$\mtlink_T, -\mvray, \mback, \mtrace\mdot\mnlets_T$}
    \State \Call{Tracer}{$\mtrace$}

    \EndFunction
\end{algorithmic}
\end{algorithm}

\clearpage
\subsection{\textproc{Caster}: \textit{Main Casting Query}}
The \textproc{Caster} function (Alg. \ref{suppalg:caster}) implements a casting query for a link $\mlink$.
A ray is cast from the source node of the link to the target node of the link.
The link can be anchored on either node, and is connected to one source link and at least one target link.
\renewcommand{\thealgorithm}{\thesubsection} 
\begin{algorithm}[!ht]
\begin{algorithmic}[1]
\caption{Caster}
\label{suppalg:caster}
\Function{Caster}{$\mlink$}
    \State $\mnode_S \gets \fnode(\flink(S, \mlink))$
    \State $\mnode_T \gets \fnode(\flink(T, \mlink))$

    \State $\mray \gets $ \Call{GetRay}{$\fx(\mnode_S), \fx(\mnode_T)$}
    \State \Call{Cast}{$\mray$}

    \If {$\mray\mdot\mrtype = \mrvy$}
        \State \Return \Call{CastReached}{$\mray, \mlink$}
    \Else
        \State \Call{CastCollided}{$\mray, \mlink$}
        \State \Return $\mfalse$
    \EndIf
\EndFunction
\end{algorithmic}
\end{algorithm}
\renewcommand{\thealgorithm}{\thesubsubsection} 

\subsubsection{\textproc{CastReached}: \textit{Line-of-sight between Nodes of Cast}} 
The function \textproc{CastReached} (Alg. \ref{suppalg:castreached}) handles the case where a cast ray has line-of-sight.

In general, five cases occur when a cast reaches the target node from the source node.
\begin{enumerate}
    \item The shortest path is found when both nodes have cumulative visibility. This case is handled by the function \textproc{PathFound} (Alg. \ref{suppalg:pathfound}).
    \item The target node is an unreachable, $\mnun$-node. The caster query is discarded.
    \item The target node is a $\mntm$-node that is generated by an interrupted trace. The node may or may not be a valid turning point, and this case is handled by the function \textproc{ReachedTm} (Alg. \ref{suppalg:reachedtm}).
    \item Both nodes have unknown cumulative visibility. A new casting query for each target link of the link $\mlink$ is queued unless there are overlapping queries. This case is handled by the function \textproc{NoCumulativeVisibility} (Alg. \ref{suppalg:nocumulativevisibility}).
    \item Only the source node or target node has cumulative visibility. 
    If the source or target node has cumulative visibility, \rtwop{} proceeds to queue casting queries for the target links or source link respectively. This case is handled by the function \textproc{SingleCumulativeVisibility} (Alg. \ref{suppalg:singlecumulativevisibility}).
\end{enumerate}

\begin{algorithm}[!ht]
\begin{algorithmic}[1]
\caption{Cast reached and ray has line-of-sight}
\label{suppalg:castreached}
\Function{CastReached}{$\mray, \mlink$}
    \State $\mnode_S \gets \fnode(\flink(S, \mlink))$
    \State $\mnode_T \gets \fnode(\flink(T, \mlink))$
    
    \If{$\mnode_S\mdot\mntype = \mnvy $ \Or $\mnode_T\mdot\mntype = \mnvy$} 
        \Comment{Shortest path found.}
        \State \Call{PathFound}{$\mlink$}
        \State \Return $\mtrue$
    \ElsIf {$\mnode_T\mdot\mntype = \mnun$}
        \Comment{Discard if target node has type $\mnun$}
        \State \Call{DiscardReachedCast}{$\mlink$}   
    \ElsIf{$\mnode_T\mdot\mntype = \mntm$ }
        \Comment{Reached an interrupted trace.}
        \State \Call{ReachedTm}{$\mray, \mlink$}
    \ElsIf{$\mnode_S\mdot\mntype \notin \{\mnvy, \mney\}$ \An $\mnode_T\mdot\mntype \notin \{ \mnvy, \mney \}$}
        \Comment{$\mtrue$ if $\mnvu$ source node reached $\mnvu$ or $\mnoc$ target-tree node.}
        \State $(\mtdir_\mnext, \sim) \gets$ \Call{NoCumulativeVisibility}{$\mlink$}
        \State \Call{QueueReachedCast}{$\mtdir_\mnext, \mlink$}
    \Else
        \Comment{Either source or target node has cumulative visibility.}
        \State $(\mtdir_\mnext, \mnode_\mnext) \gets$ \Call{SingleCumulativeVisbility}{$\mray, \mlink$}
        \If {$\mnode_\mnext \ne \varnothing$}
            \State \Call{QueueReachedCast}{$\mtdir_\mnext, \mlink$}
        \EndIf
    \EndIf
    \State \Return $\mfalse$
\EndFunction
\end{algorithmic}
\end{algorithm}

\newpage
\paragraph{\textproc{ReachedTm}: \textit{Cast Reached for Target $\mntm$-node}}
The helper function \textproc{ReachedTm} handles cases when a cast reached a $\mntm$-node.
A $\mntm$-node is generated by an interrupted trace, which occurs when a number of corners is traced (Alg. \ref{suppalg:interrupt}), a recursive occupied-sector trace is called from the source node (Alg. \ref{suppalg:ocsecrule}), or a recursive angular-sector trace is called (Alg. \ref{suppalg:recurangsectrace}).
The $\mntm$-node has the same side as the interrupted trace that generated it.

As the $\mntm$-node can lie on any corner, \rtwop{} first checks if a turning point can be placed at the corner where the $\mntm$-node lies.
If a turning point cannot be placed at the corner, a trace resumes from the corner using the function \textproc{TracerFromLink} (Alg. \ref{suppalg:tracerfromlink}). 

If a turning point can be placed, the query proceeds by treating the target $\mntm$-node as a turning point.
If the source node has no cumulative visibility, 
the function \textproc{NoCumulativeVisibility} (Alg. \ref{suppalg:nocumulativevisibility}) is called.
If the source node has cumulative visibility, 
the function \textproc{SingleCumulativeVisibility} (Alg. \ref{suppalg:singlecumulativevisibility}) is called.

Even if a turning point can be placed, the target links of $\mlink$ may point into the occupied-sector of the target $\mntm$-node and are not castable.
A trace is generated for all non-castable links in \textproc{TraceFromTm} (Alg. \ref{suppalg:tracefromtm}), while a casting query is queued for each castable link in \textproc{CastFromTm} (Alg. \ref{suppalg:castfromtm}).


\renewcommand{\thealgorithm}{\theparagraph} 
\begin{algorithm}[!ht]
\begin{algorithmic}[1]
\caption{Cast reached an interrupted trace.}
\label{suppalg:reachedtm}
\Function{ReachedTm}{$\mray, \mlink$}
    \State $\mlink_S \gets \flink(S, \mlink)$
    \State $\mnode_S \gets \fnode(\flink(S, \mlink))$
    \State $\mnode_T \gets \fnode(\flink(T, \mlink))$
    \State $\mvray \gets \mray\mdot\mv_T - \mray\mdot\mv_S$
    \State $\mxtrace \gets \fx(\mnode_T)$
    \State $\msidetrace \gets \mnode_T\mdot\msidenode$
    \State $\mvnext \gets $ \Call{GetEdge}{$\msidetrace, \fx(\mnode_T)$}

    \Comment{Trace immediately if no turning point can be placed at $\mntm$ node.}
    \If {corner at $\mxtrace$ is non-convex \Or \Call{IsRev}{$\msidetrace, \mvray, \mvnext$} = $\mfalse$}
        \State \Call{TracerFromLink}{$\mlink$}
        \State \Return
    \EndIf

    \Comment{Try placing a turning point at $\mntm$ node.}
    \State $\mnode_\mnext \gets \varnothing$
    \If {$\mnode_S \notin \{\mnvy, \mney\}$}
        \State $(\sim, \mnode_\mnext) \gets $ \Call{NoCumulativeVisibility}{$\mlink$}
    \Else
        \State $(\sim, \mnode_\mnext) \gets $ \Call{SingleCumulativeVisibility}{$\mlink$}
        \If {$\mnode_\mnext = \varnothing$}  
            \State \Return
        \EndIf
    \EndIf

    \Comment{Queue castable links from new turning point.}
    \State $\hat{\mlinks}_\mathrm{newT} \gets $ \Call{CastFromTm}{$\mlink,  \mxtrace, \msidetrace, \mvnext$}
    
    \Comment{Continue tracing for non-castable links at new turning point.}
    \If {$\hat{\mlinks}_\mathrm{newT} \ne \{\}$}
        \State \Call{TraceFromTm}{$\mlink,  \mxtrace, \msidetrace, \mvnext, \hat{\mlinks}_\mathrm{newT}$}
    \EndIf
\EndFunction
\end{algorithmic}
\end{algorithm}

\newpage
\paragraph{\textproc{CastFromTm}: \textit{Try Casting From a Reached $\mntm$-node}}
The helper function \textproc{CastFromTm} (Alg. \ref{suppalg:castfromtm}) identifies castable target links of $\mlink$ and queues a casting query for each castable link.
Links that are not castable are pushed into an unordered set of links $\hat{\mlinks}_\mathrm{newT}$ and returned.
\begin{algorithm}[!ht]
\begin{algorithmic}[1]
\caption{Try casting from a new turning point at reached $\mntm$ node.}
\label{suppalg:castfromtm}
\Function{CastFromTm}{$\mlink, \mxtrace, \msidetrace, \mvnext$}
    \State $\hat{\mlinks}_\mathrm{newT} \gets \{\}$
    \For {$\mlink_T \in \mlink\mdot\mlinks_T$}
        \State $\mnodepar \gets \fnode(\flink(T, \mlink_T))$
        \State $\mvpar \gets \mxtrace - \fx(\mnodepar)$
        \If {\Call{IsVis}{$\msidetrace, \mvpar, \mvnext$}}
            \Comment{Next node is castable and not phantom point.}
            \State $\mnode_\mnew \gets $ \Call{GetNode}{$\mxtrace, \mnvu, \msidetrace, T$}
            \State \Call{Anchor}{$\mlink_T, \mnode_\mnew$}
            \State \Call{Queue}{$\mqcast, \mlink\mdot\mcost + \mlink_T\mdot\mcost, \mlink_T$}
        \EndIf
    \EndFor
    \State \Return $\hat{\mlinks}_\mathrm{newT}$
\EndFunction
\end{algorithmic}
\end{algorithm}

\newpage
\paragraph{\textproc{TraceFromTm}: \textit{Try Tracing From a Reached $\mntm$-node}}
The helper function \textproc{TraceFromTm} (Alg. \ref{suppalg:tracefromtm}) generates a trace for all non-castable target links anchored on the target $\mntm$-node of a reached cast.
The links are contained in an unordered set $\hat{\mlinks}_\mathrm{newT}$, and are found by the function \textproc{CastFromTm} (Alg. \ref{suppalg:castfromtm}).
\begin{algorithm}[!ht]
\begin{algorithmic}[1]
\caption{Try tracing from a new turning point at reached $\mntm$ node.}
\label{suppalg:tracefromtm}
\Function{TraceFromTm}{$\mlink, \mxtrace, \msidetrace, \mvnext, \hat{\mlinks}_\mathrm{newT}$}
    \State $\mtrace \gets $ \Call{CreateTrace}{$\mxtrace, \msidetrace$}
    \State $\mtlink_\mathrm{newS} \gets $ \Call{CreateLink}{$\mtrace\mdot\mnode_S$}
    \State \Call{Connect}{$T, \mlink, \mtlink_\mathrm{newS}$}
    \State \Call{CreateNodelet}{$\mtlink_\mathrm{newS}, \mvnext, \mback, \mtrace\mdot\mnlets_S$}

    \For {$\mtlink_\mathrm{newT} \in \hat{\mlinks}_\mathrm{newT}$}
        \State $\mnode_\mathrm{newT} \gets \fnode(\flink(T, \mtlink_\mathrm{newT}))$
        \State \Call{Anchor}{$\mtlink_\mathrm{newT}, \mtrace\mdot\mnlets_T$}
        \State \Call{Disconnect}{$T, \mlink, \mtlink_\mathrm{newT}$}
        \State \Call{CreateNodelet}{$\mtlink_\mathrm{newT}, \mxtrace - \fx(\mnode_\mathrm{newT}), \mback, \mtrace\mdot\mnlets_T$}
    \EndFor

    \State $\mtrace\mdot\mxtrace \gets $ \Call{Trace}{$\mxtrace, \msidetrace$}
    \State \Call{Tracer}{$\mtrace$}
\EndFunction
\end{algorithmic}
\end{algorithm}

\paragraph{\textproc{PathFound}: \textit{Cumulative Visibility for Both Nodes of Reached Cast }}
The helper function \textproc{PathFound} (Alg. \ref{suppalg:pathfound}) returns the shortest path when the source and target nodes of a cast have cumulative visibility.

If the source node and target node have cumulative visibility, the nodes have to be $\mnvy$-type, as it is impossible for one node to be an $\mney$-node. 
An expensive query will have at least one source-tree or target-tree $\mney$-node along its examined path.
Subsequent casts by expensive queries will only occur if the source or target node of a cast is $\mney$-type.
By ensuring that either the source node or target node is $\mney$-type, the cost-to-go and cost-to-come of  nodes in subsequent casting queries can be verified, respectively.
As such, it is impossible for an expensive path to be unobstructed. 
An expensive path that is unobstructed implies that no cheaper, unobstructed path exists in the open-list, which is impossible since \rtwop{} is complete \cite{bib:r2}.
\begin{algorithm}[!ht]
\begin{algorithmic}[1]
\caption{Cast reached between nodes with cumulative visibilities: found an optimal path}
\label{suppalg:pathfound}
\Function{PathFound}{$\mlink$}
    \Comment{Found shortest path because link connects a $\mnvy$ source-tree node and a $\mnvy$ target-tree node.}
    \Comment{$path$ is accessible to all functions.}
    \State $\mlink_i \gets \flink(T, \mlink)$ 
    \State $path \gets \{\fx(\fnode(\mlink_i))\}$
    \Comment{There is only one target link.}
    \While {front of $path \ne \mxgoal$}
        \State $\mlink_i \gets \flink(T, \mlink_i)$
        \State Push $\fx(\fnode(\mlink_i))$ to front of $path$
    \EndWhile
    \State $\mlink_i \gets \flink(S, \mlink)$
    \State $path \gets $ Push $\fx(fnode(\mlink_i))$ to back of $path$
    \While {back of $path \ne \mxstart$}
        \State $\mlink_i \gets \flink(S, \mlink_i)$
        \State Push $\fx(\fnode(\mlink_i))$ to back of $path$
    \EndWhile
    \State \Return $\mtrue$
\EndFunction
\end{algorithmic}
\end{algorithm}

\paragraph{\textproc{NoCumulativeVisibility}: \textit{Unknown Cumulative Visibility for Both Nodes of Reached Cast}}
The helper function \textproc{NoCumulativeVisibility} (Alg. \ref{suppalg:nocumulativevisibility}) handles the case when the source and target node of a reached cast do not have verified cumulative visibility to the start or goal node respectively.
Such a case occurs if the source node is $\mnvu$-type, and if the target node is $\mnvu$, $\mnoc$ or $\mntm$-type.
\rtwop{} proceeds by queuing a casting query for each target link of the expanded link $\mlink$.
\begin{algorithm}[!ht]
\begin{algorithmic}[1]
\caption{Cast succeeds between nodes with no cumulative visibility to start and goal nodes}
\label{suppalg:nocumulativevisibility}
\Function{NoCumulativeVisibility}{$\mlink$}
    \State $\mnode_S \gets \fnode(\flink(S, \mlink))$
    \State $\mnode_T \gets \fnode(\flink(T, \mlink))$
    \State $\mnode_\mnext \gets $ \Call{GetNode}{$\fx(\mnode_T), \mnvu, \mnode_T\mdot\mside, S)$}
    \State \Call{FinishReachedCast}{$\mtdir_\mnext, \mnode_\mnext, \mlink, \mray$}
    \State \Return ($T, \mnode_\mnext$)
\EndFunction
\end{algorithmic}
\end{algorithm}

\paragraph{\textproc{SingleCumulativeVisibility}: \textit{Cumulative Visibility for Either Node of Reached Cast}} \label{suppsec:singlecumulativevisibility}
The helper function \textproc{SingleCumulativeVisibility} (Alg. \ref{suppalg:singlecumulativevisibility}) 
when either the source or target node of a trace has cumulative visibility to the start or goal node respectively.

The algorithm progresses by checking nodes with no cumulative visibility, and determining the \textbf{next} direction is important.
If the source node has cumulative visibility, the next node is the target node, and the next direction is in the target-direction.
If the target node has cumulative visibility, the next node is the source node, and the next direction is in the source-direction.
The \textbf{previous} node is the source or target node which is not the next node.

The function updates the minimum cost at the next node's corner if the current query is the cheapest when reaching the next node. If the next node is the source node, the cost-to-go at the source node's corner is checked; if the next node is the target node, the cost-to-come at the target node's corner is checked.
The overlap rule subsequently discards or re-anchors expensive links to expensive nodes in \textproc{ConvToExBranch} (Alg. \ref{suppalg:convtoexbranch}). 

In \rtwo{}, more expensive links are re-anchored at $\mney$-nodes, and an expensive query does not generate additional queries \cite{bib:r2}.
\rtwop{} introduces a new rule to reduce the number of expensive queries further, by discarding the current query if it is expensive and is guaranteed to cross a cheaper path (see Sec. \ref{suppsec:overlap}).
The guarantee and the cheapest cost can be obtained by examining the Best object $\mbest_\mnext$ at the corner.
If the current query is the cheaper or equal to the minimum cost, the query updates $\mbest_\mnext$ instead.

\begin{algorithm}[!ht]
\begin{algorithmic}[1]
\caption{Cast between a node with cumulative visibility and a node with no cumulative visibility}
\label{suppalg:singlecumulativevisibility}
\Function{SingleCumulativeVisibility}{$\mray, \mlink$}
        \Comment{Either $\mnode_S$ or $\mnode_T$ has a type that is $\mney$ or $\mnvy$.}
    \State $\mnode_S \gets \fnode(\flink(S, \mlink))$
    \State $\mnode_T \gets \fnode(\flink(T, \mlink))$
    \State $\mtdir_\mnext \gets T$ if $\mnode_S\mdot\mntype \notin \{ \mnvy, \mney \}$ else $S$ 
        \Comment{$\mnode_S\mdot\mntype \notin \{\mnvy, \mney\} \implies \mnode_S\mdot\mntype = \mnvu$} 
    \State $\mnode_\mnext \gets \mnode_S$ if $\mtdir_\mnext = S$ else $\mnode_T$
    \State $\mnode_\mprev \gets \mnode_T$ if $\mtdir_\mnext = S$ else $\mnode_S$

    \If{$\mnode_\mprev\mdot\mntype = \mney$ \An $\mnode_\mprev\mdot\msidenode \ne \mnode_\mnext\mdot\msidenode$}
        \Comment{Discard if a node is expensive and both have different sides.}
        \State \Call{DiscardReachedCast}{$\mlink$}
        \State \Return ($\mtdir_\mnext, \varnothing$)
    \EndIf
    
    \State $\mcost_\mnext \gets $ \Call{MinCost}{$-\mtdir_\mnext, \mlink$} + $\lVert \mray\mdot\mx_T - \mray\mdot\mx_S \rVert$ 
    \State $\mpos_\mnext \gets \fpos(\mnode_\mnext)$
    \State $\mbest_\mnext \gets \fbest(-\mtdir_\mnext, \mpos_\mnext)$
    \State $\mside_\mbest \gets \mbest_\mnext\mdot\mnodebest\mdot\msidenode$
    \State $\mv_\mbest \gets \fx(\mnode_\mnext) - \mbest_\mnext\mdot\mxbest$
    \State $\mv_\mathrm{test} \gets \fx(\mnode_\mnext) - \fx(\mnode_\mprev))$
    \If {$\mbest_\mnext\mdot\mcost < c_\mnext$}
        \Comment{Current path to $\mnode_\mnext$ has larger ($-\mtdir_\mnext$)-cost than minimum at $\fpos(\mnode_\mnext)$.}
       
        \If {$\mnode_\mnext \mdot \msidenode = \mside_\mbest$ \An $\mtdir_\mnext \mside_\mbest(\mv_\mbest \times \mv_\mathrm{test}) > 0$}
            \State \Call{DiscardReachedCast}{$\mlink$}
            \State \Return ($\mtdir_\mnext, \varnothing$)
                \Comment{Future queries is always costlier as their paths cross the cheapest path to $\mx$.}
        \EndIf
        \State $\mnode_\mnext \gets $ \Call{GetNode}{$\fx(\mnode_\mnext), \mney, -\mtdir_\mnext$}
    \ElsIf{$\mbest_\mnext\mdot\mcost > c_\mnext$}
        \Comment{Current path to $\mnode_\mnext$ has smaller ($-\mtdir_\mnext$)-cost than minimum at $\fpos(\mnode_\mnext)$.}
        \State $\mnode_\mnext \gets $ \Call{GetNode}{$\fx(\mnode_\mnext), \mnvy, -\mtdir_\mnext$}
        \State $\mbest_\mnext\mdot\mcost \gets \mcost_\mnext$
        \State $\mbest_\mnext\mdot\mnodebest \gets \mnode_\mnew$ 
        \State $\mbest_\mnext\mdot\mxbest \gets \fx(\mnode_\mprev)$
        \State \Call{ConvToExBranch}{$-\mtdir_\mnext, \mpos_\mnext$}
    \Else
        \Comment{Current path to $\mnode_\mnext$ has identical ($-\mtdir_\mnext$)-cost than minimum at $\fpos(\mnode_\mnext)$.}
        \If {$\mnode_\mnext\mdot\msidenode = \mside_\mbest$ \An $\mtdir_\mnext \mside_\mbest(\mv_\mbest \times \mv_\mathrm{test})  \le 0$}
            \State $\mbest_\mnext\mdot\mxbest \gets \fx(\mnode_\mprev)$
            \Comment{Does not matter if previous node is $\mney$.}
        \EndIf
        \State $\mnode_\mnext \gets $ \Call{GetNode}{$\fx(\mnode_\mnext), \mnode_\mprev\mdot\mntype, -\mtdir_\mnext$}
    \EndIf
    \State \Call{FinishReachedCast}{$\mtdir_\mnext, \mnode_\mnext, \mlink, \mray$}
    \State \Return ($\mtdir_\mnext, \mnode_\mnext$)
\EndFunction
\end{algorithmic}
\end{algorithm}

\paragraph{\textproc{DiscardReachedCast}: \textit{Cumulative Visibility for Either Node of Reached Cast}}
The helper function \textproc{DiscardReachedCast} (Alg. \ref{suppalg:discardreachedcast}) discards a reached casting query and removes branches of links that only pass through the expanded link $\mlink$.
\begin{algorithm}[!ht]
\begin{algorithmic}[1]
\caption{Discard the reached casting query}
\label{suppalg:discardreachedcast}
\Function{DiscardReachedCast}{$\mlink$}
    \State $\mlink_S \gets \flink(S, \mlink)$
    \State \Call{Disconnect}{$T, \mlink_S, \mlink$}
    \State \Call{EraseTree}{$S, \mlink_S$}
    \State \Call{EraseTree}{$T, \mlink$}
\EndFunction
\end{algorithmic}
\end{algorithm}

\paragraph{\textproc{FinishReachedCast}: \textit{Final Steps for a Reached Cast}}
The helper function \textproc{FinishReachedCast} performs the final steps of a reached cast.
The next node of the cast is converted into a new node, and the expanded link is re-anchored on the new node.
The ray representing the cast is merged into the angular sector of the source node (the rays are stored in the expanded link) if the source node has cumulative visibility.
Overlapping queries at the next node are identified and pushed into the overlapping buffer.
\begin{algorithm}[!ht]
\begin{algorithmic}[1]
\caption{Final steps for a reached cast.}
\label{suppalg:finishreachedcast}
\Function{FinishReachedCast}{$\mtdir_\mnext, \mnode_\mnext, \mlink, \mray$}
    \State \Call{Anchor}{$\mlink, \mnode_\mnew$}
    \State $\mlink\mdot\mcost \gets $ \Call{Cost}{$\mlink$}
    \For {$\mlink_\mnext \in \flinks(\mtdir_\mnext, \mlink)$}
        \State \Call{Isolate}{$-\mtdir_\mnext, \mlink_\mnext, \mlink$}
    \EndFor
    \If {$\fnode(\flink(S, \mlink))\mdot\mntype \in \{\mnvy, \mney\}$} 
        \Comment{Merge sector-rays if source node is $\mnvy$ or $\mney$. $\mtdir_\mnext$ is $T$.}
        \State \Call{MergeRay}{$-\mnode_\mnew\mdot\msidenode, \mlink, \mray$}
        \For {$\mlink_\mnext \in \flinks(T, \mlink)$}
            \State \Call{MergeRay}{$\mnode_\mnew\mdot\msidenode, \mlink_\mnext, \mray$}
        \EndFor
    \EndIf
    
    \State $numLinksAll \gets $ sum of $ \lvert \mnode\mdot\mlinks  \rvert $ for all $\mnode \in \fpos(\mnode_\mnext)\mdot\mnodes$ \Comment{Number of links anchored at new node's corner.}
    \State $numLinks \gets 1 + \lvert \flinks(\mtdir_\mnext, \mlink) \rvert$
    \Comment{Number of links anchored at corner by the current casting query.}
    \If {$numLinksAll > numLinks$} 
        \Comment{Trigger overlap rule later if reached a node with other links.}
        \State \Call{PushOverlap}{$\fpos(\mnode_\mnext)$}
    \EndIf
\EndFunction
\end{algorithmic}
\end{algorithm}

\paragraph{\textproc{QueueReachedCast}: \textit{Queue New Queries After a Reached Cast}}
The helper function \textproc{QueueReachedCast} (Alg. \ref{suppalg:queuereachedcast}) queues casting queries for links in the next direction (see Sec. \ref{suppsec:singlecumulativevisibility}).
\begin{algorithm}[!ht]
\begin{algorithmic}[1]
\caption{Queue subsequent castable queries}
\label{suppalg:queuereachedcast}
\Function{QueueReachedCast}{$\mtdir_\mnext, \mlink$}
    \For {$\mlink_\mnext \in \flinks(\mtdir_\mnext, \mlink)$}
        \Comment{Queue next casting queries.}
        \State \Call{Queue}{$\mqcast, \mlink\mdot c + \mlink_\mnext\mdot c, \mlink_\mnext$}
    \EndFor
\EndFunction
\end{algorithmic}
\end{algorithm}
\renewcommand{\thealgorithm}{\thesubsubsection} 

\subsubsection{\textproc{CastCollided}: \textit{No Line-of-sight Between Nodes of Cast}}
The function \textproc{CastCollided} (Alg. \ref{suppalg:castcollided}) generates traces when a cast collides.
The \textbf{minor-trace} is generated from the collision point and has a side that is different from the source node of the cast.
If the target node is the goal node, and the source node is not a start node, the \textbf{third-trace} is generated from the source node.
The third-trace has a side that is the same as the source node.
The \textbf{major-trace} is generated from the collision point and has a side that is the same as the source node of the cast.

\begin{algorithm}[!ht]
\begin{algorithmic}[1]
\caption{Cast collided and ray has no line-of-sight}
\label{suppalg:castcollided}
\Function{CastCollided}{$\mray, \mlink$}
    \State \Call{MinorTrace}{$\mray, \mlink$}
    \State \Call{ThirdTrace}{$\mray, \mlink$}
    \State \Call{MajorTrace}{$\mray, \mlink$}
\EndFunction
\end{algorithmic}
\end{algorithm}

\paragraph{\textproc{MinorTrace}: \textit{Generates a Trace With Different Side From Source Node}}
When a cast collides, the helper function \textproc{MinorTrace} (Alg. \ref{suppalg:minortrace}) generates a trace from the collision point. The trace, called the \textit{minor-trace}, has a side that is different from the source node.
The ray of the collided cast is merged into the source node of the cast, via a trace-link.

The reader may choose to ignore the third-trace (Alg. \ref{suppalg:thirdtrace}) if the minor-trace traces back  and re-encounters the source node of the cast.
The default implementation of \rtwop{} introduces an additional variable to the Trace object $\mtrace$ to monitor a re-encounter. However, if the minor-trace is interrupted, the re-encounter can no longer be monitored.
The monitoring is not shown in the pseudocode, as the additional step is not necessary for \rtwop{} to work.
\renewcommand{\thealgorithm}{\theparagraph} 
\begin{algorithm}[!ht]
\begin{algorithmic}[1]
\caption{Minor-trace for collided cast}
\label{suppalg:minortrace}
\Function{MinorTrace}{$\mray, \mlink$}
    \State $\mnode_S \gets \fnode(\flink(S, \mlink))$
    \State $\mside_\mmnr \gets -\mnode_S\mdot\msidenode$
    \State $\mvray \gets \mray\mdot\mx_T - \mray\mdot\mx_S$
    \If {$\mnode_S\mdot\mntype = \mney$}
        \State \Return \Comment{Minor Trace cannot be generated for $\mney$ source node }
    \EndIf \Comment{The reader may choose to return if $\fxcol(\mside_\mmnr, \mray)$ is at the map boundary.}
    \State $\mtrace_\mmnr \gets $ \Call{CreateTrace}{$\fxcol(\mside_\mmnr, \mray), \mside_\mmnr$}
    \Comment{Create source nodelet and trace-link.}
    \State $\mtlink_\mathrm{newS} \gets $ \Call{CopyLink}{$\mlink, \mtrace_\mmnr\mdot\mtnode_S, \{S\}$}
    \State \Call{MergeRay}{$-\mside_\mmnr, \mtlink_\mathrm{newS}, \mray$}
    \State \Call{CreateNodelet}{$\mtlink_\mathrm{newS}, \mvray, \mback, \mtrace_\mmnr\mdot\mnlets_S$}
    \Comment{Create target nodelet and trace-link.}
    \State $\mtlink_\mathrm{newT} \gets $ \Call{CreateLink}{$\mtrace_\mmnr\mdot\mtnode_T$}
    \For {$\mlink_T \in \mlink\mdot\mlinks_T$}
        \State \Call{Connect}{$T, \mtlink_\mathrm{newT}, \mlink_T$}
    \EndFor
    \State \Call{CreateNodelet}{$\mtlink_\mathrm{newT}, -\mvray, \mback, \mtrace_\mmnr\mdot\mnlets_T$}
    \State \Call{Tracer}{$\mtrace_\mmnr$}
\EndFunction
\end{algorithmic}
\end{algorithm}

\paragraph{\textproc{ThirdTrace}: \textit{Generates a Third Trace}}
When a cast from a non-start point to the goal point collides, the helper function \textproc{ThirdTrace} (Alg. \ref{suppalg:thirdtrace}) generates a trace from the source node.
The trace is called the \textit{third-trace}, which has the same side as the source node.
The ray of the collided cast is merged into the source node of the cast, via a trace-link.
\begin{algorithm}[!ht]
\begin{algorithmic}[1]
\caption{Third-trace for collided cast}
\label{suppalg:thirdtrace}
\Function{ThirdTrace}{$\mray, \mlink$}
    \If {$\mray\mdot\mx_T \ne \mxgoal$ \Or $\mnode_S\mdot\mntype = \mney$}
        \State \Return 
        \Comment{Can ignore third-trace if minor-trace refinds source node...}
    \EndIf  
    \Comment{... or if $\mnode_S$ lies on an edge touching the map boundary.}
    \State $\mnode_S \gets \fnode(\flink(S, \mlink))$
    \State $\mside_\mthird \gets \mnode_S\mdot\msidenode$
    \State $\mvray \gets \mray\mdot\mx_T - \mray\mdot\mx_S$
    \State $\mtrace_\mthird \gets $ \Call{CreateTrace}{$\fx(\mnode_S), \mside_\mthird)$}
    \State $\mtrace_\mthird\mdot\mxtrace \gets $ \Call{Trace}{$\mtrace_\mthird\mdot\mxtrace,  \mtrace_\mthird\mdot\msidetrace$}
    \Comment{Third-trace begins at corner after source node.}
    
    \State $\mtlink_\mathrm{newS} \gets $ \Call{CopyLink}{$\mlink, \mtrace_\mthird\mdot\mnode_S, \{S\}$} \Comment{Create source nodelet and trace-link.}
    \State \Call{MergeRay}{$\mside_\mthird, \mtlink_\mathrm{newS}, \mray$}
    \State \Call{CreateNodelet}{$\mtlink_\mathrm{newS}, \mtrace_\mthird\mdot\mxtrace - \fx(\mnode_S), \mback, \mtrace_\mthird\mdot\mnlets_S$}

    \State $\mnode_\mathrm{newUn} \gets $ \Call{GetNode}{$\fx(\mnode_S), \mnun, \mside_\mthird, T$} 
    \Comment{Create a target-tree $\mnun$ node and link at casting point.}
    \State $\mlink_\mathrm{newUn} \gets $ \Call{CreateLink}{$\mnode_\mathrm{newUn}$}
    \For {$\mlink_T \in \mlink\mdot\mlinks_T$}
        \State \Call{Connect}{$T, \mlink_\mathrm{newUn}, \mlink_T$}
    \EndFor
    \State $\mlink_\mathrm{newUn}\mdot c \gets $ \Call{Cost}{$\mlink_\mathrm{newOc}$}

    \State $\mx_\mathrm{newOc} \gets $ \Call{Trace}{$\fx(\mnode_S), -\mside_\mthird$} 
    \Comment{Create a target-tree $\mnoc$ node and link at the corner before source node.}
    \State $\mnode_\mathrm{newOc} \gets $ \Call{GetNode}{$\mx_\mathrm{newOc}, \mnoc, \mside_\mthird, T$}
    \State $\mlink_\mathrm{newOc} \gets $ \Call{CreateLink}{$\mnode_\mathrm{newOc}$}
    \State \Call{Connect}{$T, \mlink_\mathrm{newOc}, \mlink_\mathrm{newUn}$}
    \State $\mlink_\mathrm{newOc}\mdot c \gets $ \Call{Cost}{$\mlink_\mathrm{newOc}$}

    \State $\mtlink_\mathrm{newT} \gets $ \Call{CreateLink}{$\mtrace_\mthird\mdot\mtnode_T$} 
    \Comment{Create target nodelet and trace-link.}
    \State \Call{Connect}{$T, \mtlink_\mathrm{newT}, \mlink_\mathrm{newOc}$}
    \State \Call{CreateNodelet}{$\mtlink_\mathrm{newT}, \mtrace_\mthird\mdot\mxtrace - \mx_\mathrm{newOc}, \mback, \mtrace_\mthird\mdot\mnlets_T$}

    \State \Call{Tracer}{$\mtrace_\mthird$}
\EndFunction
\end{algorithmic}
\end{algorithm}

\paragraph{\textproc{MajorTrace}: \textit{Generates a Trace with Same Side as Source Node}}
When a cast collides, the helper function \textproc{MajorTrace} generates a trace from the collision.
The trace, called the \textit{major-trace}, has the same side as the source node.
The ray of the collided cast is merged into the source node of the cast, via a trace-link.
\begin{algorithm}[!ht]
\begin{algorithmic}[1]
\caption{Major-trace for collided cast}
\label{suppalg:majortrace}
\Function{MajorTrace}{$\mray, \mlink$}
    \State $\mnode_S \gets \fnode(\flink(S, \mlink))$
    \State $\mside_\mmjr \gets \mnode_S\mdot\msidenode$
    \State $\mvray \gets \mray\mdot\mx_T - \mray\mdot\mx_S$
    \State $\mtrace_\mmjr \gets $ \Call{CreateTrace}{$\fxcol(\mside_\mmjr, \mray)$}
    
    \State \Call{Anchor}{$\mlink, \mtrace_\mmjr\mdot\mtnode_S$}
    \Comment{$\mlink$ becomes source trace-link; Create source nodelet.}
    \State \Call{MergeRay}{$-\mside_\mmjr, \mlink, \mray$}
    \State \Call{CreateNodelet}{$\mlink, \mvray, \mback, \mtrace_\mmjr\mdot\mnlets_S$}

    \State $\mtlink_\mathrm{newT} \gets $ \Call{CreateLink}{$\mtrace_\mmjr\mdot\mnode_T$}
    \Comment{Create target nodelet and trace-links.}
    \For{$\mlink_T \in \mlink\mdot\mlinks_T$}
        \State \Call{Disconnect}{$T, \mlink, \mlink_T$} 
        \Comment{Connections in $\mlink$ prevents its removal if the mnr. or thd. traces fail.}
        \State \Call{Connect}{$T, \mtlink_\mathrm{newT}, \mlink_T$}
    \EndFor
    \State \Call{CreateNodelet}{$\mtlink_\mathrm{newT}, -\mvray, \mback, \mtrace_\mmjr\mdot\mnlets_T$}

    \State \Call{Tracer}{$\mtrace_\mmjr$}
\EndFunction
\end{algorithmic}
\end{algorithm}
\renewcommand{\thealgorithm}{\thesubsubsection} 
\newpage
\subsection{\textproc{Tracer}: \textit{Tracing Query Functions}} 
The main tracing function \textproc{Tracer} (Alg. \ref{suppalg:tracer}) implements the tracing query.
While 
There is only \textit{one} source nodelet during a trace, at at least one target nodelet.
The tracing query makes use of ordered set of nodelets $\mnlets_S$ and $\mnlets_T$ to prune, place and examine nodes.
Each nodelet contains a trace-link which connects a trace-node to a parent node, allowing the algorithm to examine a parent node with respect to the trace.
The source nodelet examines a source node, while a target nodelet examines a target node.
The source node lies on the source-tree, while the target node lies on the target-tree.


Each iteration in \textproc{Tracer} evaluates a corner at $\mxtrace$.
If the trace traces to the source node (\textproc{RefoundSrc}, Alg. \ref{suppalg:refoundsrc}), the query is discarded.
If the trace does not progress with respect to the source node, \textproc{SrcProgCast} (Alg. \ref{suppalg:srcprogcast}) is called and a casting query is generated from the source node if the angular progression decreased by more than $180^\circ$ with respect to the source node. 
Otherwise, if the progression decreases by less than $180^\circ$, the trace moves to the next corner without evaluating the nodes.

If the trace has progressed with respect to the source node, the source and target nodes are examined further.
\textproc{TracerProc} (Alg. \ref{suppalg:tracerproc}) evaluates the target nodes with the progression, occupied-sector, and pruning rules.
\textproc{TracerProc} subsequently evaluates the source node with the angular-sector, occupied-sector, and pruning rules, which generates a recursive trace and discards the current trace if necessary.
\textproc{TracerProc} for the target nodes is called before the source node to ensure that the path is taut when a recursive trace is generated.

The interrupt rule interrupts the trace in \textproc{InterruptRule} (Alg. \ref{suppalg:interrupt}) if several corners are traced.
The placement rule in \textproc{PlaceRule} (Alg. \ref{suppalg:place}) tries to place a turning point or phantom point at the current position, and casts to any potentially visible target node from a placed turning point.
The placement rule engages the overlap rule instead if the query crossed paths with other queries. 




\renewcommand{\thealgorithm}{\thesubsection} 
\begin{algorithm}[!ht]
\begin{algorithmic}[1]
\caption{Tracer}
\label{suppalg:tracer}
\Function{Tracer}{$\mtrace$}
    \DoWhile
    \Comment{There is only one source link, and $\mtrace\mdot\mnlets_S = \{\mnlet_S\}$}
        \State $\mtrace\mdot\mnumcrns \gets \mtrace\mdot\mnumcrns + 1$
        \If {\Call{RefoundSrc}{$\mtrace$}}
            \Comment{Refound source node}
            \State \Break
        \ElsIf {\Call{ProgRule}{$\mtrace, \fnlet(S, \mtrace)$}}
            \Comment{Not prog. w.r.t. source node.}
            \If {\Call{SrcProgCast}{$\mtrace$}}
                \Comment{Prog. reversed by $>180^\circ$ w.r.t. source node.}
                \State \Break
            \EndIf
        \Else
            \Comment{Prog. w.r.t. source node.}
            \If {\Call{TracerProc}{$T, \mtrace$}} \Comment{Order of execution is important.}
                \State \Break
            \ElsIf {\Call{TracerProc}{$S, \mtrace$}}
                \State \Break
            \ElsIf {\Call{InterruptRule}{$\mtrace$}}
                \State \Break
            \ElsIf {\Call{PlaceRule}{$\mtrace$}}
                \State \Break
            \EndIf
        \EndIf
        \State $\mtrace\mdot\mxtrace \gets $ \Call{Trace}{$\mtrace\mdot\mxtrace, \mtrace\mdot\msidetrace$}
    \EndDoWhile{$\mtrace\mdot\mxtrace \ne \varnothing$}
    \Comment{While trace is in map.}

    \For {$\mtdirpar \in \{S, T\}$} \Comment{Delete all dangling links.}
        \For {$\mnlet \in \fnlets(\mtdirpar, \mtrace)$}
            \State \Call{EraseTree}{$\mtdirpar, \mnlet\mdot\mtlink$}
        \EndFor
    \EndFor

\EndFunction
\end{algorithmic}
\end{algorithm}
\renewcommand{\thealgorithm}{\thesubsubsection} 

\subsubsection{\textproc{TracerFromLink}: \textit{Wrapper for} \textproc{Tracer}} \label{suppsec:tracerfromlink}
The tracing function \textproc{TracerFromLink} (Alg. \ref{suppalg:tracerfromlink}) wraps \textproc{Tracer} (Alg. \ref{suppalg:tracer}). 
\textproc{TracerFromLink} prepares a tracing query from a link $\mlink$ and calls \textproc{Tracer}.
The function is called when a tracing query is polled from the open-list (Alg. \ref{suppalg:run}), 
or when a $\mntm$-node is reached by a cast (Alg. \ref{suppalg:reachedtm}). 
\begin{algorithm}[!ht]
\begin{algorithmic}[1]
\caption{Prepare a tracing query from a link.}
\label{suppalg:tracerfromlink}
\Function{TracerFromLink}{$\mlink$}
    \State $\mnode \gets \fnode(\mlink)$
    \State $\mnode_S \gets \fnode(\flink(S, \mlink))$
    \State $\mtrace \gets $ \Call{Tracer}{$\fx(\mnode), \mnode\mdot\msidenode$}
    \State \Call{Anchor}{$\mlink, \mtrace\mdot\mnode_S$}
    \State \Call{CreateNodelet}{$\mlink, \fx(\mnode) - \fx(\mnode_S), \mback, \mtrace\mdot\mnlets_S$}

    \For {$\mlink_T \in \mlink\mdot\mlinks_T$}
        \State $\mnode_T \gets \fnode(\flink(T, \mlink))$
        \State \Call{Disconnect}{$T, \mlink, \mlink_T$}
        \State \Call{Anchor}{$\mlink_T, \mtrace\mdot\mnode_T$}
        \State \Call{CreateNodelet}{$\mlink_T, \fx(\mnode) - \fx(\mnode_T), \mback, \mtrace\mdot\mnlets_T$}
    \EndFor

    \State \Call{Tracer}{$\mtrace$}
\EndFunction
\end{algorithmic}
\end{algorithm}

\subsubsection{\textproc{TracerProc}: \textit{Process Source or Target-tree Nodes in Trace}}
Function \textproc{TracerProc} (\ref{suppalg:tracerproc}) examines the source node or target nodes. 
The function relies on the ordered set of nodelets $\mnlets_S$ and $\mnlets_T$ stored in the Trace object $\mtrace$ to examine the nodes.
If the trace is discarded or interrupted, either set becomes empty and the function returns.
While there can be multiple target nodelets at all times during a trace, there is always only \textit{one} source nodelet.

When \textproc{TracerProc} is called for the source node ($\mtdirpar=S$), the source node is first examined by the angular-sector rule. 
If the trace is no longer within the angular-sector of the node and the trace is unable to continue, the trace is interrupted or discarded.
Otherwise, the pruning rule or occupied-sector rule is checked depending on the side of the source node. If the trace has the same side as the source node, the pruning rule examines the source node. Otherwise, the occupied-sector rule examines the source node. 
The occupied-sector rule may interrupt the current trace with a recursive occupied-sector trace.

When \textproc{TracerProc} is called for the target nodes ($\mtdirpar=T$), the function checks the angular progression with respect to each target node before proceeding to the occupied-sector rule or pruning rule.
As the recursive occupied-sector trace for a target node ends quickly, the current trace will not be interrupted when the target nodes are examined.

\begin{algorithm}[!ht]
\begin{algorithmic}[1]
\caption{Process Source or Target-tree Nodes}
\label{suppalg:tracerproc}
\Function{TracerProc}{$\mtdirpar, \mtrace=(\mxtrace, \msidetrace, \cdots)$}
    \For {$\mnlet \in \fnlets(\mtdirpar, \mtrace)$}
        \State $\mnodepar \gets \fnode(\mnlet\mdot\mtlink)$
        \State $\msidepar \gets \mnodepar\mdot\msidenode$
        \If {$\mtdirpar = T$ \An \Call{ProgRule}{$\mtrace, \mnlet$}}
            \State \Continue
        \ElsIf {$\mtdirpar = S$ \An \Call{AngSecRule}{$\mtrace, \mnlet$}        }
            \State \Continue 
        \ElsIf {$\fx(\mnodepar) \in \{\mx_\mstart, \mx_\mgoal \}$} 
            \Comment{Parent node is start or goal node.}
            \State \Continue
        \ElsIf {$\msidepar = \msidetrace$}
            \State \Call{PruneRule}{$\mtrace, \mnlet$}
        \Else 
            \State \Call{OcSecRule}{$\mtrace, \mnlet$}
        \EndIf
    \EndFor
    \State \Return $\fnlets(\mtdirpar, \mtrace) = \{\}$
    \Comment{Returns true if the $\mtdirpar$ set of nodelets is empty.}
\EndFunction
\end{algorithmic}
\end{algorithm}

\subsubsection{\textproc{SrcProgCast}: \textit{Cast From Highly Winded Source Node}}
The \textproc{SrcProgCast} function (Alg. \ref{suppalg:srcprogcast}) interrupts the trace when the winding counter increases to two for the source node. When the counter increases to two, the progression has decreased by more than $180^\circ$ from the maximum angular progression.
The maximum angular progression points to a phantom point, which is replaced with an unreachable $\mnun$-node.
A casting query is subsequently queued from the source node to the $\mnun$-node.

The phantom point can be found directly from the only target trace-node when the trace is not progressed with respect to the source node.
The point can be found as no target nodes are placed when the angular progression is reversing for the source node, and the trace is in the convex hull of a non-convex obstacle.
No target nodes need to be placed in a convex hull as they would be pruned once the trace exits the convex hull and resumes angular progression with respect to the source node.
\begin{algorithm}[!ht]
\begin{algorithmic}[1]
\caption{Cast when progression w.r.t. source node is reversed more than $180^\circ$}
\label{suppalg:srcprogcast}
\Function{SrcProgCast}{$\mtrace = (\mxtrace, \cdots)$}
    \State $\mnlet_S \gets \fnlet(S, \mtrace)$
    \If {$\mnlet_S\mdot\mcprog > 1$}
    
        \Comment{(A) Create new target-tree $\mnun$-node and anchor target link there.}
        \State $\mnlet_T \gets \fnlet(T, \mtrace)$ 
        \Comment{The only target nodelet.}
        \State $\mlink_T \gets \flink(T, \mnlet_T\mdot\mtlink)$
        \State $\mnode_T \gets \fnode(\mlink_T)$ 
        \Comment{Target node is a phantom point.}
        \State $\mnode_\mathrm{newUn} \gets $ \Call{GetNode}{$\fx(\mnode_T), \mnun, \mnode_T\mdot\msidenode, T$}
        \State \Call{Anchor}{$\mlink_T, \mnode_\mathrm{newUn}$}

        \Comment{(B) Create new target-tree $\mnvu$-node and re-anchor source trace-link there.}
        \State $\mlink_\mathrm{newVu} \gets \mnlet_S\mdot\mtlink$
        \State $\mlink_S \gets \flink(S, \mlink_\mathrm{newVu})$
        \State $\mnode_S \gets \fnode(\mlink_S)$
        \State $\mnode_\mathrm{newVu} \gets $ \Call{GetNode}{$\fx(\mnode_S), \mnvu, \mnode_S\mdot\msidenode, T$}
        \State \Call{Anchor}{$\mlink_\mathrm{newVu}, \mnode_\mathrm{newVu}$}
        \State \Call{Connect}{$T, \mlink_\mathrm{newVu}, \mlink_T$}
        \State $\mlink_\mathrm{newVu}\mdot\mcost \gets $ \Call{Cost}{$\mlink_\mathrm{newVu}$}

        \Comment{(C) Queue casting query for source trace-link.}
        \State \Call{Queue}{$\mqcast, \mlink_S\mdot\mcost + \mlink_\mathrm{newVu}\mdot\mcost, \mlink_\mathrm{newVu}$}
        \State \Return $\mtrue$
    \EndIf
    \State \Return $\mfalse$
\EndFunction
\end{algorithmic}
\end{algorithm}

\subsubsection{\textproc{RefoundSrc}: \textit{Checks if Trace Traces to Source Node}}
Function \textproc{RefoundSrc} (Alg. \ref{suppalg:refoundsrc}) returns $\mtrue$ if a trace traces to its examined source node.
A trace can never trace to a target node in \rtwop.
\begin{algorithm}[!ht]
\begin{algorithmic}[1]
\caption{Refound Source Node}
\label{suppalg:refoundsrc}

\Function{RefoundSrc}{$\mtrace = (\mxtrace, \cdots)$}
    \State $\mnlet_S \gets \fnlet(S, \mtrace)$
    \State $\mnode_S \gets \fnode(\mnlet_S\mdot\mtlink)$
    \State \Return $\mxtrace = \fx(\mnode_S)$
\EndFunction
\end{algorithmic}
\end{algorithm}

\clearpage
\subsubsection{\textproc{ProgRule}: \textit{Checks Angular Progression w.r.t. a Node}}
Function \textproc{ProgRule}  (Alg. \ref{suppalg:progrule}) returns $\mfalse$ if the trace at corner $\mxtrace$ has progressed with respect to a parent node or $\mtrue$ if the trace has not progressed. 

If a trace crosses the progression ray, it will intersect a line (unbounded on both sides) coincident to the ray (bounded on one side). 
The progression ray is drawn from a parent node in the direction $\mvprog$ and stretches to infinity. If a trace crossed the front of the ray, the trace intersected the ray.
Let $\mvprog'$ be a ray that is drawn from a parent node in the direction $-\mvprog$. If a trace crossed behind the ray, then the trace crossed $\mvprog'$.

As only the direction of the intersection with respect to the parent node needs to be known, the intersection need not be calculated explicitly. 
Let $itx =  \fsgn(\mvprev \times \mv_\mpar) \fsgn(\mvprev \times \mvprog ) \in {-1, 0, 1}$. 
$\mvprev$ is the previous trace direction that reached $\mx$. $\mv_\mpar$ is the vector pointing from the parent node to $\mx$. $\mvprog$ is the progression ray.
$-1$ and $1$ indicate that the trace has crossed behind and in front of the ray respectively. $0$ indicates that $\mvprev$ is parallel to $\mv_\mpar$ and/or $\mvprev$ is parallel to $\mvprog$. 
For both parallel cases, the reader may verify that the trace is always progressed with respect to the parent node.
Note that $\mvprog \ne 0$, $\mvprev \ne 0$ and $\mvprog \ne 0$ when $itx$ is evaluated.
\begin{algorithm}[!ht]
\begin{algorithmic}[1]
\caption{Progression Rule}
\label{suppalg:progrule}
\Function{ProgRule}{$\mtrace = (\mxtrace, \msidetrace, \cdots)$, $\mnlet = (\mtlink, \mvprog, \mcprog) $}
    \State $\mnodepar \gets \fnode(\mtlink)$
    \State $\mtdirpar \gets \mnodepar\mdot\mtdirnode$ 
    \State $\mvpar \gets \mxtrace - \fx(\mnodepar)$
    \State $\mvprev \gets -$\Call{GetEdge}{$\mxtrace, -\msidetrace$} 
\If{ $\mvpar = 0$} 
    \State $\mvpar \gets $\Call{Bisect}{$\mxtrace$} \Comment{Occurs for start nodes or checkerboard corners}
\EndIf
\State $u \gets \mtdirpar \msidetrace (\mvpar \times \mvprog)$
\State $isProg \gets u < 0 $ \Or ($u = 0$ \An $\mvpar \sdot \mvprog > 0$) \Comment{$\mtrue$ if $\mvpar$ lies to $(\mtdirpar \msidetrace)$-side of $\mvprog$, or if both are pointing in same direction }
\State $wasProg \gets \mcprog = 0$

\State $itx \gets \fsgn(\mvprev \times \mvpar) \fsgn(\mvprev \times \mvprog )$ \Comment{$itx = 1$:  crossed in front of ray; $itx =-1$: crossed behind ray.}
\If {$isProg = \mtrue$ \An $wasProg = \mtrue$}
    \State $\mnlet\mdot\mvprog \gets \mvpar$ \Comment{Update ray}
\ElsIf {$isProg = \mtrue$ \An $wasProg = \mfalse$}
    \If {$itx \ge 0$}
        \State $\mnlet\mdot\mcprog \gets \mcprog - 1$ \Comment{Unwind}
        \If {$\mcprog > 0$}  \Comment{Not completely unwound, flip ray}
            \State $\mnlet\mdot\mvprog \gets -\mvprog$ 
            \State $isProg \gets \mfalse$
        \Else \Comment{Unwound completely, update ray}
            \State $\mnlet\mdot\mvprog \gets \mvpar$ 
        \EndIf
    \Else  \Comment{Wind and flip ray}
        \State $\mnlet\mdot\mcprog \gets \mcprog + 1$
        \State $\mnlet\mdot\mvprog \gets -\mvprog$ 
        \State $isProg \gets \mfalse$
    \EndIf
\ElsIf{$isProg = \mfalse$ \An $wasProg = \mtrue$} 
    \If {$itx \le 0$} \Comment{No winding: case occurs if start node lies on an obstacle's edge}
        \State $\mnlet\mdot\mvprog \gets -\mvprog$
        \State $isProg \gets \mtrue$
    \Else  \Comment {Wind}
        \State $\mnlet\mdot\mcprog \gets \mcprog + 1$ 
    \EndIf
\EndIf
\State \Return \Not $isProg$
\EndFunction
\end{algorithmic}
\end{algorithm}

\clearpage
\subsubsection{\textproc{PruneRule}: \textit{Checks if a Node can be Pruned}}
Function \textproc{PruneRule} (Alg. \ref{suppalg:prune}) prunes a node if the resulting path at the examined corner at $\mxtrace$ is not taut. A source-tree node that is $\mnvy$ cannot be pruned by the rule, as it is handled by the angular-sector rule.

Pruning of source-tree $\mnvy$-nodes are handled by the angular-sector. 
As the angular-sector of a source-tree $\mnvy$-node can only be fully evaluated at the next corner, any pruning at the current corner can cause \rtwop{} to be incomplete.

\begin{algorithm}[!ht]
\begin{algorithmic}[1]
\caption{Pruning Rule}
\label{suppalg:prune}
\Function{PruneRule}{$\mtrace = (\mxtrace, \msidetrace, \cdots)$, $\mnlet = (\mtlink, \cdots) $}
    \State $\mnodepar \gets \fnode(\mtlink)$
    \State $\mtdirpar \gets \mnodepar\mdot\mtdirnode$ 
    \State $\msidepar \gets \mnodepar\mdot\msidenode$
    \State $\mvpar \gets \mxtrace - \fx(\mnodepar)$

    \Comment{\textbf{(A) $\mnvy$ source-tree nodes are handled by the angular sector rule.}}
\If{$\mtdirpar = S$ \An $\mnodepar \mdot \mntype = \mnvy$}
    \State \Return 
\EndIf 

    \Comment{(B) Try pruning for each parent node and link.}
\For{$\mlinkpar \in \flinks(\mtdirpar, \mtlink)$}

    \Comment{(B.1) Taut: go to next parent node and link.}
    \State $\mnodegpar \gets \fnode(\flink(\mtdirpar, \mlinkpar))$
    \State $\mvgpar \gets \fx(\mnodepar) - \fx(\mnodegpar)$  
    \If {\Call{IsTaut}{$\mtdirpar$, $\msidepar$, $\mvpar$, $\mvgpar$}}
        \State \Continue
        \Comment{$(\mxtrace, \mnodepar), \mnodegpar)$ is taut and nothing is pruned.}
    \EndIf

    \Comment{(B.2) Not taut: prune parent node.}
    \State \Call{Disconnect}{$-\mtdirpar$, $\mlinkpar$, $\mtlink$} 
    \State $\mlink_\mnew \gets $ \Call{Isolate}{$-\mtdirpar$, $\mlinkpar$, $\varnothing$, $\ftnode(\mtdirpar, \mtrace)$}
    \If{$\mtdirpar = T$} 
        \Comment{Remove rays if prune in target direction.}
        \State $\mlink_\mnew \mdot \mray_L \gets \varnothing$
        \State $\mlink_\mnew \mdot \mray_R \gets \varnothing$
    \EndIf


    \Comment{(B.4) Create new nodelet for new parent node.}
    \State \Call{CreateNodelet}{$\mlink_\mnew, \mx_\mtrace - \fx(\mnode_\mgpar), \mback, \fnlets(\mtdirpar, \mtrace)$} 
    \Comment{Push to back to try pruning later.}
        
\EndFor

        \Comment{(C) Remove current nodelet and trace-link if trace-link has no more parent links.}
\If {$\flinks(\mtdirpar, \mtlink)  = \{\}$ } 
    \State Remove $\mtlink$ from $\ftnode(\mtdirpar, \mtrace)\mdot\mlinks_\mnode$ 
    \State Remove $\mnlet$ from $\fnlets(\mtdirpar, \mtrace)$
\EndIf
\EndFunction
\end{algorithmic}
\end{algorithm}

\clearpage
\paragraph{\textproc{IsTaut}: \textit{Checks if a Node is Taut}}
The helper function \textproc{IsTaut} (Alg. \ref{suppalg:istaut}) evaluates the path segments around a node and returns $\mtrue$ if the node can be pruned.
\renewcommand{\thealgorithm}{\theparagraph} 
\begin{algorithm}[!ht]
\begin{algorithmic}[1]
\caption{Is Taut}
\label{suppalg:istaut}
\Function{IsTaut}{$\mtdirpar$, $\msidepar$, $\mvpar$, $\mvgpar$}
    \State $u \gets \mtdirpar \msidepar (\mvpar \times \mvgpar)$ 
    \If {$u = 0$} \Comment{Return $\mtrue$ if angle between $\mvpar$ and $\mvgpar$ is $180^\circ$, $\mfalse$ if $0^\circ$}
        \State \Return $\mvpar \sdot \mvgpar \ge 0$ 
    \Else \Comment{Returns $\mtrue$ if $\mvgpar$ lies to the $\mtdirpar \msidepar$-side of $\mvpar$ }
        \State \Return $u > 0$
    \EndIf
\EndFunction
\end{algorithmic}
\end{algorithm}
\renewcommand{\thealgorithm}{\thesubsubsection} 

\clearpage
\subsubsection{\textproc{OcSecRule}: \textit{Checks if a Node can be Pruned}}
The function \textproc{OcSecRule} (Alg. \ref{suppalg:ocsecrule}) implements the occupied-sector rule.
The function returns immediately if the trace is outside the occupied-sector of parent node.
If the trace enters the occupied-sector for the source node, a recursive occupied-sector trace is generated, and the current trace is interrupted.

If the trace enters the occupied-sector of a target node that is $\mnoc$ type, the trace is discarded because the path has looped.
Otherwise, if the target is a non-$\mnoc$ node, a $\mnoc$ target-tree node is placed in the source-direction of the target node.
The placement can be considered part of a short recursive occupied-sector trace.

\begin{algorithm}[!ht]
\begin{algorithmic}[1]
\caption{Occupied-Sector Rule}
\label{suppalg:ocsecrule}
\Function{OcSecRule}{$\mtrace = (\mx_\mtrace, \mside_\mtrace, \cdots)$, $\mnlet = (\mtlink, \cdots) $}
    \State $\mnodepar \gets \fnode(\mtlink)$
    \State $\mtdirpar \gets \mnodepar\mdot\mtdirnode$ 
    \State $\msidepar \gets \mnodepar\mdot\msidenode$
    \State $\mvpar \gets \mxtrace - \fx(\mnodepar)$
    \State $\mvocpar \gets $ \Call{GetEdge}{$\fx(\mnodepar), -\mtdirpar\msidepar$}

    \Comment{(A) Parent is target $\mnoc$-node. Discard trace if it enters oc.sec., return otherwise.}
    \If {$\mnodepar \mdot \mntype = \mnoc$} 
        \Comment{Previously entered occupied-sector of a target-tree node (gpar. node)}
        \State $\mnodegpar \gets \fnode(\flink(T, \flink(T, \mtlink)))$ 
        \State $\mvgpar \gets \fx(\mnodepar) - \fx(\mnodegpar)$  
        \If {$\mside_\mpar (\mv_\mpar \times \mv_\mgpar) \le 0$}
            \Comment{Discard as path has looped around gpar. node and par. $\mnoc$ node} 
            \State \Call{EraseTree}{$T, \mtlink$} 
            \State Remove $\mnlet$ from  $\mtrace\mdot\mnlets_T$
        \EndIf
        \State \Return 
    
    \Comment{(B) Return if not in occupied-sector.}
    \ElsIf{$\mtdirpar \mside_\mpar (\mv_\mpar \times \mvocpar) \le 0$} 
        \State \Return 
        \Comment{Not in occupied-sector of $\mnode_\mpar$}
    \EndIf

    \Comment{(C) Create recursive oc-sec. trace if entered oc-sec. of source node.} 
    \If{$\mtdirpar = S$}
        \State $\mtrace_\mnew \gets $ \Call{CreateTrace}{$\fx(\mnodepar), \msidepar$}
        \State \Call{Anchor}{$\mtlink$, $\mtrace_\mnew\mdot\mnode_S$} 
        \State Move $\mnlet$ to $\mtrace_\mnew\mdot\mnlets_S$ \Comment{s.t. $\mtrace_\mnew\mdot\mnlets_S = \{\mnlet\}$ and $\mtrace\mdot\mnlets_S = \{\}$}
        \State $\mnlet\mdot\mvprog \gets \mvocpar$

        \State $\mnode_\mathrm{newTm} \gets$ \Call{GetNode}{$\mxtrace, \mntm, \msidetrace, T$}
        \Comment{Create a $\mntm$ target-tree node and re-anchor target trace-links there.}
        \State $\mtlink_\mathrm{newT} \gets $ \Call{CreateLink}{$\mtrace_\mnew\mdot\mnode_T$}
        \State \Call{CreateNodelet}{$\mtlink_\mathrm{newT}, -\mvpar, \mback, \mtrace_\mnew \mdot \mnlets_T$}

        \For{$\mnlet_T \in \mtrace\mdot\mnlets_T$} 
            \State $\mlink_\mathrm{newT} \gets \mnlet_T\mdot\mtlink$
            \State \Call{Anchor}{$\mlink_\mathrm{newT}$, $\mnode_\mathrm{newTm}$}
            \State $\mlink_\mathrm{newT}\mdot\mcost \gets$ \Call{Cost}{$\mlink_T$}
            \State \Call{Connect}{$T$, $\mtlink_\mathrm{newT}$, $\mlink_\mathrm{newT}$}
        \EndFor

        \State $\mtrace_\mnew\mdot\mxtrace \gets $ \Call{Trace}{$\mtrace_\mnew\mdot\mxtrace$, $\mtrace_\mnew\mdot\msidetrace$}
        \Comment{Start recursive trace from the $\msidepar$-side corner of the source node}
        \State \Call{Tracer}{$\mtrace_\mnew$} 

    \Comment{(D) Place target-tree $\mnoc$-node if entered oc-sec. of target node.} 
    \Else 
        \State $\mx_\mathrm{newOc} \gets $ \Call{Trace}{$\fx(\mnodepar)$, $-\msidepar$} 
        \State $\mnode_\mathrm{newOc} \gets$ \Call{GetNode}{$\mx_\mathrm{newOc}, \mnoc, \msidepar, T$} 
        \State \Call{Anchor}{$\mtlink$, $\mnode_\mathrm{newOc}$} 
        \State $\mtlink\mdot\mcost \gets $ \Call{Cost}{$\mtlink$}
        \State $\mtlink_\mathrm{newT} \gets $ \Call{CreateLink}{$\mtrace\mdot\mtnode_T$}
        \State \Call{Connect}{$T, \mtlink_\mathrm{newT}, \mtlink$}
        \State $\mnlet\mdot\mtlink \gets \mtlink_\mathrm{newT}$
        \State $\mnlet\mdot\mvprog \gets \mvpar$
    \EndIf
\EndFunction
\end{algorithmic}
\end{algorithm}

\subsubsection{\textproc{AngSecRule}: \textit{Implements the Angular-Sector Rule}}
The function \textproc{AngSecRule} (Alg. \ref{suppalg:angsecrule}) implements the angular-sector rule for the source node.

Four helper functions breaks the implementation down into smaller parts. They are \textproc{RayNotCrossed} (Alg. \ref{suppalg:raynotcrossed}), \textproc{AngSecPrune} (Alg. \ref{suppalg:angsecprune}), \textproc{Project} (Alg. \ref{suppalg:project}), and \textproc{RecurAngSecTrace} (Alg. \ref{suppalg:recurangsectrace}).
\textproc{AngSecRule} examines the $\msidetrace$-sided ray $\mray$ of the source node's angular sector, and determines if the trace is within the angular-sector.
If the trace is within the angular-sector, \textproc{AngSecRule} returns without calling the other helper functions.

The other helper functions are called when the trace exits the source node's angular sector.
\textproc{AngSecPrune} prunes the source node if the resulting path is not taut after exiting the source node's angular sector.
If a prune by the angular-sector occurs, the current trace continues regardless of any subsequent recursive angular-sector trace.
\textproc{Project} (Alg. \ref{suppalg:project}) projects the ray $\mray$ and finds the projected ray's collision point.
\textproc{RecurAngSecTrace} generates a recursive angular-sector trace if the current trace did not pass through the projected ray's collision point.
\begin{algorithm}[!ht]
\begin{algorithmic}[1]
\caption{Angular-Sector Rule}
\label{suppalg:angsecrule}
\Function{AngSecRule}{$\mtrace = (\mxtrace, \msidetrace, \cdots)$, $\mnlet = (\mtlink, \cdots) $}
    \State $\mray \gets \fray(\msidetrace, \mtlink)$

    \If {\Call{RayNotCrossed}{$\mray, \mtrace, \mnlet$}}
        \State \Return $\mfalse$
    \EndIf
    \State \Call{AngSecPrune}{$\mray, \mtrace, \mnlet$}
    \State \Call{Project}{$\mray$}
    \State \Call{RecurAngSecTrace}{$\mray, \mtrace, \mnlet$}
    \State \Return $\mtrue$
    
\EndFunction
\end{algorithmic}
\end{algorithm}

\paragraph{\textproc{RayNotCrossed}: \textit{Checks if Trace is Within Angular-Sector}}
The function \textproc{RayNotCrossed} examines the $\msidetrace$-sided ray $\mray$ of the source node and compares the ray with the current position of the trace.
If the trace crosses the ray, the function returns $\mtrue$.

If the current position lies on the ray, it is not clear if the ray has been crossed. 
The function breaks ties by considering how the line-of-sight function \textproc{LOS} (Alg. \ref{suppalg:los}) returns the first corners after a collision. 
\textproc{LOS} relies on the directional vector bisecting the corner at the current position to place the first corners. 
The consideration is required because the first corners are subsequently checked in \textproc{RecurAngSecTrace} -- the ray can only be considered crossed if the $\msidetrace$-sided first corner lies at the current position.
However, the ray may not have been projected, and the algorithm can only check against the bisecting vector.

To eliminate errors resulting from floating point calculations, the algorithm is designed to handle only discrete, integer calculations. This helps the authors to verify proofs, and provides a foundation from which \rtwop{} can be improved.
The reader may choose to implement a much simpler intersection check that uses floating point calculations instead.

\renewcommand{\thealgorithm}{\theparagraph} 
\begin{algorithm}[!ht]
\begin{algorithmic}[1]
\caption{Angular-Sector Rule: Check if sector-ray is crossed.}
\label{suppalg:raynotcrossed}
\Function{RayNotCrossed}{$\mray, \mtrace = (\mxtrace, \msidetrace, \cdots)$, $\mnlet = (\mtlink, \cdots) $}
    \If {$\mray = \varnothing$} 
        \State \Return $\mtrue$
    \EndIf
    \State $\mvray \gets \mray\mdot\mx_T - \mray\mdot\mx_S$
    \State $\mlink_S \gets \flink(S, \mtlink)$ \Comment{$\mlink$ has only one source link.}
    \State $\mvpar \gets \mxtrace - \fx(\fnode(\mlink_S))$
    
    \State $u_\mathrm{ray,par} \gets \msidetrace (\mvray \times \mvpar)$
    \If{$u_\mathrm{ray,par} > 0$}
        \State \Return $\mtrue$
        \Comment{Return as ray is not crossed.}
    \ElsIf{$u_\mathrm{ray,par} = 0$} 
        \Comment{$\mvray$ and $\mvpar$ are parallel. Use bisecting oc-sec. vector to break ties.}
        \State $\mv_\mathrm{crn} \gets $ \Call{Bisect}{$\mxtrace$} 
        \State $u_\mathrm{ray,crn} \gets \msidetrace(\mvray \times \mv_\mathrm{crn})$ 
        \If {$u_\mathrm{ray,crn} > 0$ \Or $u_\mathrm{ray,crn} = 0$ \An $\mvray \sdot \mv_\mathrm{crn} > 0$} 
            \State \Return $\mtrue$
            \Comment{Ray is not crossed: $\mvray$ lies strictly to the $\msidetrace$-side of $\mv_\mathrm{crn}$, or...}
        \EndIf
        \Comment{...ray points away from corner: $\mvray$ opposite to $\mv_\mathrm{crn}$.}
    \EndIf 
    \State \Return $\mfalse$.
\EndFunction
\end{algorithmic}
\end{algorithm}

\paragraph{\textproc{AngSecPrune}: \textit{Tries to Prune Source Node}}
The function \textproc{AngSecPrune} (Alg. \ref{suppalg:angsecprune}) tries to prune the source node if the resulting path is not taut after exiting the node's angular sector.
If the node cannot be pruned, nothing is done.
When a prune occurs, the current trace continues regardless of a subsequent recursive angular-sector trace.

The pruned source node is guaranteed to be $\mnvy$ type. 
A prune by the angular-sector rule occurs only when the $\mray$ ends at the source node, indicating that the node is a source-tree node with cumulative visibility ($\mnvy$ or $\mney$ type). $\mney$-nodes are pruned by the pruning rule, as no recursive angular-sector trace can be generated for an $\mney$ node.
$\mney$ node. A recursive angular-sector trace cannot occur for an $\mney$ node as the $\mney$ node can no longer be pruned by future queries.
\begin{algorithm}[!ht]
\begin{algorithmic}[1]
\caption{Angular-Sector Rule: Tries to prune a $\mnvy$ node after exiting its angular sector.}
\label{suppalg:angsecprune}
\Function{AngSecPrune}{$\mray, \mtrace = (\mxtrace, \msidetrace, \cdots)$, $\mnlet = (\mtlink, \cdots) $}
    \State $\mlink_S \gets \flink(S, \mtlink)$
    \State $\mnode_S \gets \fnode(\mlink_S)$
    
    \State $prunable \gets \mray\mdot\mx_T = \fx(\mnode_S)$ \Comment{Prunable if sector-ray ends at parent source-tree node, or...}
    \State $prunable \gets prunable $ \Or ($\mray\mdot\mx_T = \mxstart$ \An $\mray\mdot\mx_S = \mxgoal$ \An $\msidetrace = \mnode_S\mdot\msidenode$) \Comment{...prunable if ray is special sector-ray for start node and the side traced is same as start node's side.}
    \If {$prunable$} 
        \State $\mtlink_\mathrm{newS} \gets $ \Call{Isolate}{$T, \mlink_S, \varnothing, \mtrace, \mtrace\mdot\mtnode_S$}
        \State $\mnode_\mathrm{newS} \gets \fnode(\flink(S, \mtlink_\mathrm{newS}))$
        \State \Call{CreateNodelet}{$\mtlink_\mathrm{newS}, \mxtrace - \fx(\mnode_\mathrm{newS}), \mback, \mtrace\mdot\mnlets_S$}
    \EndIf
\EndFunction
\end{algorithmic}
\end{algorithm}

\paragraph{\textproc{RecurAngSecTrace}: \textit{Tries to Generate a Recursive Angular-sector Trace}}
The function \textproc{RecurAngSecTrace} generates a recursive angular-sector trace if the trace crosses the collision point of the projected ray $\mray$.
The trace crosses the collision point if the current position $\mxtrace$ lies on the first $\msidetrace$-sided  corner from the collision point.

A $(-\msidetrace)$-sided $\mnun$ node is placed at the first $\msidetrace$-side corner from the collision point. The node ensures that the $(-\msidetrace)$-sided recursive trace progresses with respect to a target node at the initial corner traced.
The $\mnun$ node will be pruned at the initial corner if the resulting path is not taut.
A $\mntm$ node is placed at the current trace location, and is the target node of the $\mnun$ node.
Additionally the $\mnun$ node is unreachable because if a query from the recursive trace reaches the node, a cheaper path has to exist that can reach the $\mntm$ node.

\begin{algorithm}[!ht]
\begin{algorithmic}[1]
\caption{Angular-Sector Rule: Check if a recursive trace is required}
\label{suppalg:recurangsectrace}
\Function{RecurAngSecTrace}{$\mray, \mtrace = (\mxtrace, \msidetrace, \cdots)$, $\mnlet = (\mtlink, \cdots) $}
    \State $\mnode_S \gets \fnode(\flink(S, \mtlink))$
    \If {$\fxcol(\msidetrace, \mray) = \mxtrace$ \Or $\mnodepar\mdot\mntype = \mney$}
    \Comment{Discard if trace crossed the collision point of the sector-ray, or...}
        \State \Call{EraseTree}{$S, \mtlink$}
        \Comment{...parent source-tree node is $\mney$ (recur. trace has different side from $\mney$ node).}
        \State Remove $\mnlet$ from $\mtrace\mdot\mnlets_S$
    \Else   
        \Comment{Trace did not cross sector-ray's collision point. Recursive ang-sec. trace.}
        \State $\mtrace_\mnew \gets $ \Call{CreateTrace}{$\fxcol(-\msidetrace, \mray), -\msidetrace$}
        \State $\mnlet\mdot\mvprog \gets \mtrace_\mnew\mdot\mxtrace - \fx(\mnode_S)$
        
        \State $\mx_\mathrm{newUn} \gets \fxcol(\msidetrace, \mray)$ 
        \Comment{Create a $\mnun$ target-tree node at $\msidetrace$ corner of collision.}
        \If {$\mx_\mathrm{newUn} = \fxcol(-\msidetrace, \mray)$} 
            \Comment{Collision at corner instead of edge.}
            \State $\mx_\mathrm{newUn} \gets $ \Call{Trace}{$\mx_\mathrm{newUn}, \msidetrace$}
        \EndIf
        \State $\mnode_\mathrm{newUn} \gets $ \Call{GetNode}{$\mx_\mathrm{newUn}, \mnun, -\msidetrace, T$}
        \State $\mlink_\mathrm{newT} \gets $ \Call{CreateLink}{$\mnode_\mathrm{newUn}$}
        
        \State $\mnode_\mathrm{newTm} \gets$ \Call{GetNode}{$\mxtrace, \mntm, \msidetrace, T$} 
        \Comment{Create a $\mntm$ target-tree node at current corner at $\mxtrace$}
        \For {$\mnlet_T \in \mtrace\mdot\mnlets_T$}
            \State $\mlink_\mathrm{newTT} \gets $ \Call{CopyLink}{$\mnlet_T\mdot\mtlink, \mnode_\mathrm{newTm}, \{T\}$} 
             \Comment{Re-anchor target trace-links from current trace at $\mntm$ node.}
            \State $\mlink_\mathrm{newTT}\mdot\mcost \gets$ \Call{Cost}{$\mlink_\mathrm{newTT}$}
            \State \Call{Connect}{$T$, $\mlink_\mathrm{newT}, \mlink_\mathrm{newTT}$}
        \EndFor
        \State $\mlink_\mathrm{newT}\mdot\mcost \gets$ \Call{Cost}{$\mlink_\mathrm{newT}$}

        \State $\mtlink_\mathrm{newT} \gets $ \Call{CreateLink}{$\mtrace_\mnew\mdot\mtnode_T$} 
        \State \Call{Connect}{$T, \mtlink_\mathrm{newT}, \mlink_\mathrm{newT}$}
        \State \Call{CreateNodelet}{$\mtlink_\mathrm{newT}, \mtrace_\mnew\mdot\mxtrace - \mx_\mathrm{newUn}, \mfront, \mtrace_\mnew\mdot\mnlets_T$}

        \State \Call{Tracer}{$\mtrace_\mnew$}
    \EndIf
\EndFunction
\end{algorithmic}
\end{algorithm}
\renewcommand{\thealgorithm}{\thesubsection} 

\clearpage
\subsubsection{Interrupt Rule} \label{suppsec:interrupt}
Function \textproc{InterruptRule} (Alg. \ref{suppalg:interrupt}) implements the interrupt rule, which interrupts and queues a trace when a number of corners are traced and if the trace is progressed with respect to all source-tree and target-tree parent nodes.

The number of corners to interrupt $numInterrupt$ is arbitrary. 
The larger $numInterrupt$ is, the slower \rtwop{} may get for simple queries with few turning points as the tracing queries spend more time on non-convex contours. 
The smaller $numInterrupt$ is, the slower \rtwop{} may be for more complex queries with many turning points due to frequent open-list queuing. The default value is $numInterrupt = 10$.
\begin{algorithm}[!ht]
\begin{algorithmic}[1]
\caption{Interrupt Rule}
\label{suppalg:interrupt}
\Function{InterruptRule}{$\mtrace = (\mxtrace, \msidetrace, \cdots)$}
    
    \Comment{(A) Return if trace has only checked a few corners, and trace is not prog. w.r.t. all nodes.}
    \State $allProgS \gets $ $\mnlet_S \mdot \mcprog = 0 $ for all $ \mnlet_S \in \mtrace\mdot\mnlets_S$
    \State $allProgT \gets $ $\mnlet_T \mdot \mcprog = 0 $ for all $ \mnlet_T \in \mtrace\mdot\mnlets_T$
    \If{$\mtrace\mdot\mnumcrns < numInterrupt$ \Or $allProgS = \mfalse$ \Or $allProgT = \mfalse$}
        \State \Return $\mfalse$
    \EndIf

    \Comment{(B) Otherwise, interrupt and create nodes and re-anchor trace-links.}
    \State $\mnode_\mathrm{newSVu} \gets$ \Call{GetNode}{$\mxtrace, \mnvu, \msidetrace, S$}
    \State $\mlink_\mathrm{newS} \gets$ $\fnlet(S, \mtrace)\mdot\mtlink$ 
    \State \Call{Anchor}{$\mlink_\mathrm{newS}, \mnode_\mathrm{newSVu}$}
    \State $\mlink_\mathrm{newS}\mdot\mcost \gets$ \Call{Cost}{$\mlink_\mathrm{newS}$}
    
    \State $\mnode_\mathrm{newTTm} \gets$ \Call{GetNode}{$\mxtrace, \mntm, \msidetrace, T$}
    \For{$\mnlet_T \in \mtrace\mdot\mnlets_T$}
        \State $\mlink_\mathrm{newT} \gets \mnlet_T\mdot\mtlink$
        \State \Call{Anchor}{$\mlink_\mathrm{newT}, \mnode_\mathrm{newTTm}$}
        \State $\mlink_\mathrm{newT}\mdot\mcost \gets$ \Call{Cost}{$\mlink_\mathrm{newT}$}
        \State \Call{Connect}{$T, \mlink_\mathrm{newS}, \mlink_\mathrm{newT}$}
    \EndFor

    \Comment{(C) Mark for overlap-rule check, or queue a tracing query}
    \If {$\mtrace\mdot\moverlap = \mtrue$}
        \State \Call{PushOverlap}{\Call{GetPos}($\mxtrace$)}
    \Else
        \State $f \gets \mlink_\mathrm{newS}\mdot\mcost + $ \Call{GetMinCost}{$T, \mlink_\mathrm{newS}$}
        \State \Call{Queue}{$\mqtrace, f, \mlink_\mathrm{newS} $}
    \EndIf
    \State \Return $\mtrue$
\EndFunction
\end{algorithmic}
\end{algorithm}

\clearpage
\subsubsection{\textproc{PlaceRule}: Places Points and Checks Castability}
\textproc{PlaceRule} implements the placement rule and tries to place a turning point or phantom point.
The function exits after a phantom point is placed, or if no point can be placed.

If a turning point is placed, the function examines if the trace has crossed a path from a different query.
If a different path is crossed, the overlap rule will check the placed links after the trace ends.
The trace will end with the placement rule if all target nodes can be cast to.

The function is aided by four helper functions, \textproc{IsRev} (Alg. \ref{suppalg:isrev}), \textproc{IsVis} (Alg. \ref{suppalg:isvis}), \textproc{PlaceNode} (Alg. \ref{suppalg:placenode}), and \textproc{CastFromTrace} (Alg. \ref{suppalg:castfromtrace}).

\begin{algorithm}[!ht]
\begin{algorithmic}[1]
\caption{Placement rule and casting.}
\label{suppalg:place}
\Function{PlaceRule}{$\mtrace = (\mxtrace, \msidetrace, \cdots)$}
    \State $\mtdirpar \gets S$ if corner at $\mxtrace$ is convex, $T$ otherwise

    \State $\mnlet_\mnew \gets $ \Call{PlaceNode}{$\mtdirpar, \mtrace$}

    \If {$\mnlet_\mnew = \varnothing$ \Or $\mtdirpar = T$}
        \State \Return $\mfalse$
        \Comment{Turning point not placed. Nothing else to do.}
    \ElsIf {$\sum \lvert \mnode\mdot\mlinks_\mnode \rvert > 1$ for all $\mnode \in $ \Call{GetPos}{$\mxtrace$}$\mdot\mnodes$ }
        \State $\mtrace\mdot\moverlap \gets \mtrue$
        \Comment{Mark for overlap rule if multiple links are anchored at $\mxtrace$.}
    \EndIf

    \If {$\mnlet_\mnew \ne \varnothing$ \An $\mtdirpar = S$}
        \Comment{Try casting if a turning point is placed.}
        \State \Call{CastFromTrace}{$\mtrace, \mnlet_\mnew$}
    \EndIf

    \State \Return $\mtrace\mdot\mnlets_T = \{\}$ 
    \Comment{Return $\mtrue$ if no more target trace-nodes.}
\EndFunction
\end{algorithmic}
\end{algorithm}

\paragraph{\textproc{IsRev}: \textit{Checks if Angular Progression Will Reverse}}
The helper function \textproc{IsRev} (Alg.\ref{suppalg:isrev}) returns $\mtrue$ if tracing the next edge from the current corner at $\mxtrace$ causes the angular progression to reverse with respect to the parent node.
\begin{algorithm}[!ht]
\begin{algorithmic}[1]
\caption{Checks if angular progression reverses at next edge.}
\label{suppalg:isrev}
\Function{IsRev}{$\mtdirpar, \msidetrace, \mvpar, \mvnext$}
    \State \Return $\mtdirpar \msidetrace (\mvpar \times \mvnext) < 0$
\EndFunction
\end{algorithmic}
\end{algorithm}

\paragraph{\textproc{IsVis}: \textit{Checks if a Target Node is Castable}}
The helper function \textproc{IsVis} (Alg. \ref{suppalg:isvis}) returns $\mtrue$ if a target-tree node is potentially visible from the traced position and can be cast to.
\renewcommand{\thealgorithm}{\theparagraph} 
\begin{algorithm}[!ht]
\begin{algorithmic}[1]
\caption{Checks if a target node is potentially visible and castable.}
\label{suppalg:isvis}
\Function{IsVis}{$\msidetrace, \mvpar, \mvnext$}  
    \Comment{Par. node is target-tree node.}
    \State \Return $\msidetrace (\mvpar \times \mvnext) \le 0$ \Comment{Assumes that trace has progressed w.r.t. target-tree node.}
\EndFunction
\end{algorithmic}
\end{algorithm}

\paragraph{\textproc{PlaceNode}: \textit{Tries to Place a Node}}
The helper function \textproc{PlaceNode} (Alg. \ref{suppalg:placenode}) tries to place a turning point or phantom point.
A turning point ($\mnvu$ or $\mneu$-node) is placed if the current corner is convex, and the angular progression will reverse with respect to the source node.
A phantom point ($\mntm$-node) is placed if the current corner is non-convex, and the angular progression reverses with respect to at least one target node.
\begin{algorithm}[!ht]
\begin{algorithmic}[1]
\caption{Places a $\mntm$, $\mnvu$ or $\mneu$ node if possible.}
\label{suppalg:placenode}
\Function{PlaceNode}{$\mtdirpar, \mtrace = (\mxtrace, \msidetrace, \cdots)$}
    \State $\mvnext \gets $ \Call{GetEdge}{$\mxtrace, \msidetrace$}
    \State $\mnlet_\mnew = \varnothing$
    \For{$\mnlet \in \fnlets(\mtdirpar, \mtrace)$}
        \State $\mnodepar \gets \flink(\mnlet\mdot\mtlink)$
        \State $\mvpar \gets \mxtrace - \fx(\mnodepar)$
        
        \Comment{(A) Do not place if not prog. w.r.t. par. node, or ang. prog. increases at next edge.}
        \If {$\mcprog > 0$ \Or \Call{IsRev}{$\mtdirpar, \msidetrace, \mvpar, \mvnext$} = $\mfalse$}
            \State \Continue
        \EndIf

        \Comment{(B) Otherwise, place a new phantom ($\mntm$) or turning point ($\mneu$ or $\mnvu$).}
        \State $\mlink_\mathrm{newPar} \gets \mnlet\mdot\mtlink$
        \If {$\mnlet_\mnew = \varnothing$}
            \Comment{Create the new node and reuse current nodelet if new node not yet placed.}
            \State $\mnlet_\mnew \gets \mnlet$
            \State $\mntype_\mathrm{newPar} \gets \mnvu$
            \Comment{Determine the new node type.}
            \If {$\mtdirpar = T$}
                \State $\mntype_\mathrm{newPar} \gets \mntm$
            \ElsIf{$\mnodepar\mdot\mntype \in \{\mneu, \mney\}$}
                \State $\mntype_\mathrm{newPar} \gets \mneu$
            \EndIf
            \Comment{Create the new node and reuse current nodelet.}
            \State $\mnode_\mathrm{newPar} \gets $ \Call{GetNode}{$\mxtrace, \mntype_\mathrm{newPar}, \msidetrace, \mtdirpar$}
            \State $\mnlet_\mnew\mdot\mtlink \gets$ \Call{CreateLink}{$\ftnode(\mtdirpar, \mtrace)$}
            \State $\mnlet_\mnew\mdot\mvprog \gets \mvnext$
        \Else
            \Comment{Delete the nodelet if new node has been placed.}
            \State Remove $\mnlet$ from $\fnlets(\mtdirpar, \mtrace)$
        \EndIf

        \State \Call{Anchor}{$\mlink_\mathrm{newPar}, \mnode_\mathrm{newPar}$}
        \State $\mlink_\mathrm{newPar}\mdot c \gets$ \Call{Cost}{$\mlink_\mathrm{newPar}$}
        \State \Call{Connect}{$\mtdirpar, \mnlet_\mnew\mdot\mtlink, \mlink_\mathrm{newPar}$}
    \EndFor
    \State \Return $\mnlet_\mnew$
\EndFunction
\end{algorithmic}
\end{algorithm}

\paragraph{\textproc{CastFromTrace}: \textit{Tries to Cast From a Placed Turning Point.}}
The helper function \textproc{CastFromTrace} tries to cast to each target node after a turning point is placed at the current corner.
If castable, a casting query is queued for each target trace-link. The target trace-link is re-anchored on a target-tree $\mnvu$ node.
However, if the trace had crossed a path from a different query, it would have been marked for an overlap-rule check in \textproc{PlaceRule} (Alg. \ref{suppalg:place}). 
No new casting queries are queued by the trace, as the overlap-rule will queue a new casting query in the most recent source-tree node with cumulative visibility.
\begin{algorithm}[!ht]
\begin{algorithmic}[1]
\caption{Try casting to all target links after placing a turning point.}
\label{suppalg:castfromtrace}
\Function{CastFromTrace}{$\mtrace=(\mxtrace, \msidetrace, \cdots), \mnlet_\mnew$}
    \State $\mvnext \gets $ \Call{GetEdge}{$\mxtrace, \msidetrace$}
    \State $\mlink_S \gets \flink(S, \mnlet_\mnew\mdot\mtlink)$
    \State $\mnode_S \gets \mnode(\mlink_S)$  
    \State $\mnode_\mathrm{newVu} \gets \varnothing$
    \For {$\mnlet_T \in \mtrace\mdot\mnlets_T$}

        \Comment{(A) For each target node, check if it is castable from new turning point at $\mx$.}
        \State $\mlink_\mathrm{newT} \gets \mnlet_T\mdot\mtlink$
        \State $\mnode_T \gets \fnode(\flink(T, \mlink_\mathrm{newT}))$
        \State $\mvpar \gets \mxtrace - \fx(\mnode_T)$
        \If {\Call{IsVis}{$\msidetrace, \mvpar, \mvnext$} $= \mfalse$}
            \State \Continue 
            \Comment{Skip non-castable tgt node.}
        \EndIf
        
        \Comment{(B) Target node is castable, re-anchor target trace-link to target-tree $\mnvu$ node.}
        \If {$\mnode_\mathrm{newVu} = \varnothing$}
            \State $\mnode_\mathrm{newVu} \gets $ \Call{GetNode}{$\mxtrace, \mnvu, \msidetrace, T$}
        \EndIf
        \State \Call{Anchor}{$\mlink_\mathrm{newT}, \mnode_\mathrm{newVu}$}
        \State $\mlink_\mathrm{newT}\mdot\mcost \gets $ \Call{Cost}{$\mlink_\mathrm{newT}$}
        \State \Call{Connect}{$T, \mlink_S, \mlink_\mathrm{newT}$}

        \Comment{(C.1) Mark for overlap-rule if source node is expensive or trace crossed paths with other queries.}
        \If {$\mtrace\mdot\moverlap = \mtrue$ \Or $\mnode_S\mdot\mntype = \mneu$}
            \State $\mtrace\mdot\moverlap \gets \mfalse$
            \State \Call{PushOverlap}{\textproc{GetPos}($\mxtrace$)}
        
        \Comment{(C.2) Otherwise, queue a casting query.}
        \Else
            \State $f \gets \mlink_S\mdot\mcost + \mlink_\mathrm{newT}\mdot\mcost$ 
            \State \Call{Queue}{$\mqcast, f, \mlink_T'$}
        \EndIf
        \State Remove $\mnlet_T$ from $\mtrace\mdot\mnlets_T$
    \EndFor
\EndFunction

\end{algorithmic}
\end{algorithm}
\renewcommand{\thealgorithm}{\thesubsubsection} 

\subsection{Overlap Rule} \label{suppsec:overlap}
The overlap rule reduces the number of queries by verifying line-of-sight and cost-to-come for nodes in overlapping paths.

The overlap rule consists of two major components implemented by the functions \textproc{ShrinkSourceTree} (Alg. \ref{suppalg:shrinksourcetree}) and \textproc{ConvToExBranch} (Alg. \ref{suppalg:convtoexbranch}). 
Both components that allow \rtwop{} to identify and discard some expensive queries. 
The \textproc{ShrinkSourceTree} function shifts overlapping queries from the leaf nodes to the first source-tree node with cumulative visibility ($\mney$ or $\mnvy$ node), shrinking the source-tree and expanding the target-tree in the process. 
By shifting the queries, \textproc{ShrinkSourceTree} allow \rtwop{} to verify the cost-to-come of overlapping nodes.
The \textproc{ConvToExBranch} (Alg. \ref{suppalg:convtoexbranch}) marks existing nodes at a location $\mx$ as expensive when a cast reaches $\mx$ with a shorter path than the existing paths.



\subsubsection{\textproc{ShrinkSourceTree}: \textit{Shifts Queries to Verify Cost-to-come}} 
The \textproc{ShrinkSourceTree} function (Alg. \ref{suppalg:shrinksourcetree}) shifts overlapping queries to a source-tree node along their path, simultaneously shrinking the source-tree and expanding the target-tree. 
The source-tree node is a $\mnvy$ or $\mney$ node that is furthest along the examined path from the start node.
Let $\mnode_\mathrm{SY}$ represent the source-tree $\mnvy$ or $\mney$ node.

Overlaps are identified in a trace or reached cast. 
In a trace, overlaps are identified when a turning point ($\mneu$ or $\mnvu$ node) is placed. 
If the corner at the turning point contain nodes that anchor other links, an overlap is identified (see \textproc{PlaceNode}, Alg. \ref{suppalg:placenode}). 
Likewise, in a reached cast, an overlap is identified at the source or target node if there are other links at the node's corner (see \textproc{FinishReachedCast}, Alg. \ref{suppalg:finishreachedcast}).
Once overlaps are identified, the corners where the overlaps occur are pushed in to the overlap-buffer.
They corners are subsequently processed by \textproc{ShrinkSourceTree} at the end of an iteration in \textproc{Run} (Alg. \ref{suppalg:run}).

The reader may notice that the corner pushed into the overlap-buffer for the tracing query may not be the point where overlap occurs (see \textproc{CastFromTrace}, Alg. \ref{suppalg:castfromtrace}). 
While this may cause some overlapping queries to be missed, this reduces the size of the overlap buffer, and subsequent casts will still identify the missed queries.
For simplicity, the reader may choose to push the corner directly into the overlap-buffer in \textproc{PlaceNode} instead.

For each corner in the overlap-buffer, \textproc{ShrinkSourceTree} identifies source-tree $\mneu$ and $\mnvu$ nodes.
For each link anchored at the nodes, the path in the source direction is searched, and $\mnode_\mathrm{SY}$ is identified.
The function \textproc{ConvToTgtBranch} is called from the $\mnode_\mathrm{SY}$, converting all source-tree nodes in the target direction of $\mnode_\mathrm{SY}$ to target-tree $\mnvu$ nodes.
Any query in the target links are removed.
A casting query is subsequently queued from the $\mnode_\mathrm{SY}$ node.
\begin{algorithm}[!ht]
\begin{algorithmic}[1]
\caption{Process overlapping queries by shrinking the source-tree and requeue at $\mnvy$ source-tree node.}
\label{suppalg:shrinksourcetree}
\Function{ShrinkSourceTree}{\null}
    \For {$\mpos \in $ overlap-buffer}
        \For {$\mnode \in \mpos\mdot\mnodes$}
            \If {$\mnode\mdot\mtdirnode = T$ \Or $\mnode\mdot\mntype \notin \{\mneu, \mnvu\}$}
                \State \Continue
            \EndIf
            \For {$\mlink \in \mnode\mdot\mlinks_\mnode$}
                \State $\mlink_i \gets \mlink$
                \State $\mlink_{SY} \gets \flink(\mlink_i)$
                \While {$\fnode(\mlink_S)\mdot\mntype \notin \{\mnvy, \mney\}$}
                    \State $\mlink_i \gets \mlink_S$
                    \State $\mlink_{SY} \gets \flink(S, \mlink_{SY})$
                \EndWhile
                \State \Call{ConvToTgtBranch}{$\mlink_i$}
                \State \Call{Queue}{$\mqcast, \mlink_{SY}\mdot\mcost + \mlink_i\mdot\mcost, \mlink_i$}
            \EndFor
        \EndFor
    \EndFor
\EndFunction
\end{algorithmic}
\end{algorithm}

\paragraph{\textproc{ConvToTgtBranch}: \textit{Converts Source-tree Nodes to Target-tree Nodes}} 
The function \textproc{ConvToTgtBranch} (Alg. \ref{suppalg:convtotgtbranch}) converts source-tree nodes to target-tree $\mnvu$ nodes, re-anchoring links in the process.
It is an auxiliary recursive function of \textproc{ShrinkSourceTree} (Alg. \ref{suppalg:shrinksourcetree}).
If a query is found at a leaf node, the query is removed to avoid data races with the new casting query in \textproc{ShrinkSourceTree}.
\renewcommand{\thealgorithm}{\theparagraph} 
\begin{algorithm}[!ht]
\begin{algorithmic}[1]
\caption{Converts source-tree branches to target-tree branches.}
\label{suppalg:convtotgtbranch}
\Function{ConvToTgtBranch}{$\mlink$}
    \If {$\fquery(\mlink) \ne \varnothing$}
        \State \Call{Unqueue}{$\fquery(\mlink)$}
        \Comment{Unqueue and remove any query on leaf links to avoid data races.}
    \EndIf
    \If {$\fnode(\mlink)\mdot\mtdirnode = T$}
        \State \Return
        \Comment{Reached leaf node of target-tree.}
    \EndIf
    \For{$\mlink_T \in \mlink\mdot\mlinks_T$}
        \State \Call{ConvToTgtBranch}{$\mlink_T, \fnode(\mlink)$}
    \EndFor

    \State $\mnode_S \gets \fnode(\flink(S, \mlink))$
    \State $\mnode_\mnew \gets $ \Call{GetNode}{$\fx(\mnode_S), \mnvu, \mnode_S\mdot\msidenode, T$}
    \State \Call{Anchor}{$\mlink, \mnode_\mnew$}
    \State $\mlink\mdot\mcost \gets$ \Call{Cost}{$\mlink$}
\EndFunction
\end{algorithmic}
\end{algorithm}
\renewcommand{\thealgorithm}{\thesubsubsection} 

\subsubsection{\textproc{ConvToExBranch}: \textit{Identify Expensive Nodes After a Reached Cast.}}
The function \textproc{ConvToExBranch} converts existing nodes to expensive nodes when a reached cast finds a shorter path than other paths at a corner. 
The corner is represented by the Position object ($\mpos$).

The function is called by a reached cast when reaching a node results in the shortest path so far (see \textproc{SingleCumulativeVisibility} Alg. \ref{suppalg:singlecumulativevisibility}).
When $\mtdir = S$, $\mpos$ describes the corner at the target node, and cost-to-come is examined at the target node.
When $\mtdir = T$, $\mpos$ describes the corner at the source node, and the examined cost is cost-to-go is examined at the source node.
Links anchored on the existing $\mtdir$-tree $\mnvy$ node are identified, and connected $-\mtdir$ direction links are re-anchored on expensive nodes.
The function relies on the auxiliary recursive function \textproc{ConvToExBranchAux} to perform the conversion.
Expensive paths that are guaranteed to cross the current shortest path will be discarded.

\begin{algorithm}[!ht]
\begin{algorithmic}[1]
\caption{Converts a branch of $\mtdir$-tree $\mnvy$ nodes to $\mney$ nodes.}
\label{suppalg:convtoexbranch}
\Function{ConvToExBranch}{$\mtdir, \mpos$}
    \For {$\mnode \in \mpos\mdot\mnodes$}
        \If {$\mnode\mdot\mntype \ne \mnvy$ \Or $\mnode\mdot\mtdirnode \ne \mtdir$}
            \State \Continue 
            \Comment{ Check only $L$ or $R$-sided $\mtdir$-tree $\mnvy$ node/}
        \EndIf

        \State $\mbest \gets \fbest(\mtdir, \mpos)$
        \State $\mside_\mbest \gets \mbest\mdot\mnode_\mbest\mdot\msidenode$ 
        \State $\mv_\mbest \gets \mpos\mdot\mx - \mbest\mdot\mxbest$
        \For {$\mlink \in \mnode\mdot\mlinks_\mnode$}
            \State $\mlinkpar \gets \flink(\mtdir, \mlink)$
            \State $\mnodepar \gets \fnode(\mlinkpar)$
            \State $\mvpar \gets \mpos\mdot\mx - \fx(\mnodepar)$
            
            \If {$\mnode\mdot\msidenode = \mside_\mbest$ \An $\mtdir \mside_\mbest(\mv_\mbest \times \mvpar) > 0$}
            \Comment{Discard ex. path from $\mlink$ if it will cross the shortest path.}
                \State \Call{Disconnect}{$\mtdir, \mlink, \mlinkpar$}
                \State \Call{EraseTree}{$-\mtdir, \mlink$}
            \Else
                \State \Call{ConvToExBranchAux}{$\mtdir, \mlinkpar, \mlink, \mnode\mdot\msidenode$}
            \EndIf
            \State \Call{EraseTree}{$\mtdir, \mlinkpar$}
            \Comment{Discard $\mlinkpar$ if no more $(-\mtdir)$-links.} 
        \EndFor
    \EndFor
\EndFunction
\end{algorithmic}
\end{algorithm}

\paragraph{\textproc{ConvToExBranchAux}: \textit{Create Expensive Nodes and Discard Expensive Paths}}
The function \textproc{ConvToExBranchAux} (Alg. \ref{suppalg:convtoexbranch}) converts $\mtdir$-tree $\mnvy$ nodes to $\mney$ nodes and reanchors links to the $\mney$ nodes.
It is the auxiliary recursive function for \textproc{ConvToExBranch} (Alg. \ref{suppalg:convtoexbranch}).

\textproc{ConvToExBranchAux} discards expensive paths if they will remain expensive in subsequent queries.
The function stops once a node that is not $\mnvy$ and not in the $\mtdir$-tree is reached.
If the reached node is a source-tree $\mnvu$ node, the function \textproc{ConvToTgtBranch} (Alg. \ref{suppalg:convtotgtbranch}) is used to shift queries to the most recent $\mney$-node to verify cost-to-come.

\renewcommand{\thealgorithm}{\theparagraph} 
\begin{algorithm}[!ht]
\begin{algorithmic}[1]
\caption{Converts $\mnvy$ nodes to $\mney$ nodes and discard paths that are guaranteed to be expensive.}
\label{suppalg:convtoexbranchaux}
\Function{ConvToExBranchAux}{$\mtdir, \mlinkpar, \mlink, \mside$}
    \State $\mnode \gets \fnode(\mlink)$

    \Comment{(A) Discard if par. node and current $\mnvy$ or $\mney$ node have different sides.}
    \If {$\mnode\mdot\msidenode \ne \mside$ \An $\mnode\mdot\mntype \in \{\mnvy, \mney\}$}
        \State \Call{Disconnect}{$\mtdir, \mlink, \mlinkpar$}
        \State \Call{EraseTree}{$-\mtdir, \mlink$}
        \State \Return

    \Comment{(B) Stop recursion once a node that is not $\mtdir$-tree and not $\mnvy$ is reached.}
    \ElsIf {$\mnode\mdot\mntype \ne \mnvy$} 
        \If {$\mtdir = S$ \An $\mnode\mdot\mntype = \mnvu$}
            \Comment{Shift queries down to most recent $\mney$ if a $S$-tree $\mnvu$ node is reached.}
            \State \Call{ConvToTgtBranch}{$\mlink$}
            \State \Call{Queue}{$\mqcast, \mlink\mdot\mcost + \mlinkpar\mdot\mcost, \mlink$}
        \EndIf
        \State \Return
    \EndIf
    \Comment{$\mnode$ is $\mside$-sided, $\mtdir$-tree $\mnvy$ node at this point.}
    
    

    \Comment{(C) Recurse.}
    \For {$\mlink_\mathrm{chd} \in \flinks(-\mtdir, \mlink) $}
        \State \Call{ConvToExBranchAux}{$\mtdir, \mlink, \mlink_\mathrm{chd}, \mside$}
    \EndFor

    \Comment{(D) Delete or re-anchor the current link.}
    \If {$\flinks(-\mtdir, \mlink) = \{\}$} 
    \Comment{Delete $\mlink$ if its branch is discarded}
        \State \Call{Disconnect}{$\mtdir, \mlink, \mlinkpar$}
        \State Remove $\mlink$ from $\fnode(\mlink)\mdot\mlinks_\mnode$
    \Else 
    \Comment{Anchor $\mlink$ to $\mside$-sided, $\mtdir$-tree, $\mney$ node.}
        \State $\mnode_\mnew \gets $ \Call{GetNode}{$\fx(\mnode), \mney, \mside, \mtdir$}
        \State \Call{Anchor}{$\mlink, \mnode_\mnew$}
    \EndIf
\EndFunction
\end{algorithmic}
\end{algorithm}
\renewcommand{\thealgorithm}{\thesubsubsection} 

\end{document}